\definecolor{darkgreen}{rgb}{0,0.6,0}
\newtheorem{theorem}{Theorem}
\newtheorem{definition}{Definition}
\newtheorem{proposition}[theorem]{Proposition}
\newtheorem{exam}{Example}
\newenvironment{example}{\begin{exam} \rm}{\end{exam}}
\newcommand{\ken}{Kendall's-$\tau$\xspace}
\newcommand{\n}{[\![n]\!]}
\newcommand{\SuggestEdit}[3][red]{\textcolor{#1}{\sout{#2}}\textcolor{#1}{#3}}
\let\svthefootnote\thefootnote
\newcommand\colorfootnote[2][black]{\def\thefootnote{\color{#1}\svthefootnote}%
  \footnote{\color{#1}#2}\def\thefootnote{\color{black}\svthefootnote}}
\newcommand\RevComment[3][red]{\protect\colorfootnote[#1]{{\textbf{[#2: #3]}}}}
\newcommand\newrevisor[3]{
  \colorlet{#1}{#3}
  \expandafter\newcommand\csname#1\endcsname[2]{\SuggestEdit[#1]{##1}{##2}}
  \expandafter\newcommand\csname#2\endcsname[1]{\RevComment[#1]{#2}{##1}}
}
\begin{document}

%

%

\twocolumn[

\aistatstitle{Statistical Depth Functions for Ranking Distributions: \\Definitions, Statistical Learning and Applications}

\aistatsauthor{ Morgane Goibert \And Stéphan Clémençon \And  Ekhine Irurozki \And Pavlo Mozharovskyi }

\aistatsaddress{ Télécom Paris \\ Criteo \And  Télécom Paris \And Télécom Paris \And  Télécom Paris } ]

\addtocontents{toc}{\protect\setcounter{tocdepth}{0}}

\begin{abstract}
The concept of \textit{median/consensus} has been widely investigated in order to provide a statistical summary of ranking data, \textit{i.e.} realizations of a random permutation $\Sigma$ of a finite set, $\{1,\; \ldots,\; n\}$ with $n\geq 1$ say. As it sheds light onto only one aspect of $\Sigma$'s distribution $P$, it may neglect other informative features. It is the purpose of this paper to define analogues of quantiles, ranks and statistical procedures based on such quantities for the analysis of ranking data by means of a metric-based notion of \textit{depth function} on the symmetric group. Overcoming the absence of vector space structure on $\mathfrak{S}_n$, the latter defines a center-outward ordering of the permutations in the support of $P$ and extends the classic metric-based formulation of \textit{consensus ranking} (\textit{medians} corresponding then to the \textit{deepest} permutations).  The axiomatic properties that \textit{ranking depths} should ideally possess are listed, while computational and generalization issues are studied at length. Beyond the theoretical analysis carried out, the relevance of the novel concepts and methods introduced for a wide variety of statistical tasks are also supported by numerous numerical experiments.

\end{abstract}
\section{Introduction}

The statistical analysis of ranking data as recently received much attention (\textit{e.g.} \cite{AY14} and references therein), fed by the increasing number of modern applications involving \textit{preferences} data (search engines, recommender systems, etc.).
Such data usually consist of $N\geq 1$ permutations $\sigma_1,\; \ldots,\; \sigma_N$ on an ensemble of $n\geq 1$ items, indexed by $i\in\{1,\; \ldots,\; n\}$. The major scientific challenge arises from \textit{the absence of any vector space structure} on the set of all permutations, the symmetric group $\mathfrak{S}_n$. Given the impossibility of 'averaging' the $\sigma_j$'s in a straightforward manner, the issue of summarizing a ranking dataset by a single permutation, referred to as \textit{Consensus Ranking} or \textit{Ranking Aggregation}, has concentrated much interest (seminal works of \cite{de1781memoire, Condorcet} in social choice theory, \cite{Patel13} in bioinformatics, \cite{DSM16} in meta-search engines, \cite{DL05} in competition ranking, etc.).
Two approaches to Consensus Ranking have been studied. The first one, initiated by Condorcet in the 18th century, is based on probabilistic modelling.
The second one is a metric-based: equipped with a (pseudo-) distance on $\mathfrak{S}_n$, a barycentric permutation, referred to as a \textit{ranking median}, is found.
However, central measures such as medians shed light on only one aspect of a multivariate distribution and ignore other interesting characteristics. Thus, the informative nature of ranking medians about the distribution $P$ of a random permutation $\Sigma$, \textit{i.e.} a r.v. takings its values in $\mathfrak{S}_n$, is limited and must be complemented by additional quantities, providing information analogous to that illuminated by quantiles for a univariate distribution. \\

This article is devoted to defining such quantities for ranking data. We extend the statistical depth concept, originally introduced so as to define quantiles for probability distributions on $\mathbb{R}^d$ with $d\geq 2$ (see \textit{e.g.} \cite{Mosler13}), to ranking distributions. Some basics in statistical depth theory are briefly recalled in section \ref{sec:background}, while section \ref{sec:depth_for_rankings} introduced an extension of the notion of depth function tailored to ranking data. Desirable properties for ranking depths are listed therein, and shown to hold under mild conditions, \textit{e.g.} stochastic transitivity.
Based on a pseudo-metric on $\mathfrak{S}_n$, the depth of a ranking $\sigma$ relative to $P$ measures its expected closeness to the random permutation $\Sigma$. Hence, ranking medians correspond to the deepest rankings. In section 4, statistical guarantees are provided for the ranking depth and its by-products, in the form of non-asymptotic bounds for the deviations between the ranking depth function and its statistical counterpart in particular. A trimming algorithm, based on the ranking depth concept, to recover automatically a stochastically transitive version of the empirical ranking distribution is also proposed therein.
Beyond the theoretical/algorithmic concepts introduced and analyzed here, the relevance of the notion of ranking depth is motivated by a wide variety of statistical applications, illustrated by several numerical experiments in section \ref{sec:exp}.

The main contributions of the paper are summarized below:
\begin{itemize}[noitemsep, itemsep=2pt, topsep=0pt, leftmargin=4mm]
    \item Statistical depth and related axiomatic properties are extended to ranking data, in order to emulate quantiles/ranks for r.v.'s valued in $\mathfrak{S}_n$.
    \item A finite-sample analysis ensures the usability of the notion of ranking depth introduced.
    \item An algorithm of great simplicity that uses ranking depth to build stochastically transitive empirical ranking distributions (based on which, crucial statistical tasks such as consensus ranking are straightforward) is proposed.
    \item The ranking depth and the related quantile regions in $\mathfrak{S}_n$ it defines can be used for the statistical analysis of rankings: 1) fast and robust recovery of medians in consensus ranking, 2) informative graphical representations of ranking data, 3) anomaly/novelty detection, 4) homogeneity testing.
\end{itemize}

\section{Background and Preliminaries}
\label{sec:background}

We start with recalling some basics in statistical depth theory, together with key notions of the statistical analysis of ranking data involved in the subsequent analysis.
Throughout the paper the indicator function of any event $\mathcal{E}$ is denoted by $\mathbb{I}\{ \mathcal{E} \}$, the Dirac mass at any point $a$ by $\delta_a$, the floor function by $u\in \mathbb{R}\mapsto \lfloor u \rfloor$, the convolution product of two real valued functions $f$ and $g$ defined on the real line, when well-defined, by $f*g$, the cardinality of any finite set $E$ by $\# E$ and the set of permutations of $\{1,\; \ldots,\; n  \}$ by $\mathfrak{S}_n$ for $n\geq 1$.

\subsection{Depth Functions for Multivariate Data}
\label{subsec:depth}

In absence of any 'natural order' on $\mathbb{R}^d$ with $d\geq2$, the concept of \textit{statistical depth} permits to define a center-outward ordering of points in the support of a probability distribution $P$ on $\mathbb{R}^d$, so as to extend the notions of order and (signed) rank statistics to multivariate data, see \textit{e.g.} \cite{Mosler13}.
A depth function $D_P:\mathbb{R}^d\rightarrow \mathbb{R}_+$ relative to $P$ should ideally assign the highest values $D_P(x)$  to points $x\in \mathbb{R}^d$ near the "center" of the distribution.
Originally introduced in the seminal contribution \cite{Tukey75}, the \textit{half-space depth} of $x$ in $\mathbb{R}^d$ relative to $P$ is the minimum of the mass $P(H)$ taken over all closed half-spaces $H\subset \mathbb{R}^d$ such that $x\in H$. Many alternatives have been proposed since then, see \textit{e.g.} \cite{Liu, LiuS93, koshevoy1997, Chaudhuri, OJA1983, Vardi1423, chernozhukov2017, ZuoSerfling00}. To compare the merits and drawbacks of different notions of depth function, an axiomatic nomenclature has been introduced in \cite{ZuoSerfling00}, listing four properties that statistical depths should ideally satisfied, see \cite{Dyckerhoff04, Mosler13} for a different formulation of a statistically equivalent set of properties.

\begin{itemize}
\item[$(i)$] {\sc (Affine invariance)} Denoting by $P_X$ the distribution of any r.v. $X$ taking its values in $\mathbb{R}^d$, it holds: $D_{P_{AX+b}}(Ax+b)=D_P(x)$ for all $x\in \mathbb{R}^d$, any r.v. $X$ valued in $\mathbb{R}^d$, any $d\times d$ nonsingular matrix $A$ with real entries and any vector $b$ in $\mathbb{R}^d$.
\item[$(ii)$] {\sc (Maximality at center)} For any probability distribution $P$ on $\mathbb{R}^d$ that possesses a symmetry center $x_P$ (for different notions of center), the depth function $D_P$ takes its maximum value at it, \textit{i.e.} $D_P(x_P)=\sup_{x\in \mathbb{R}^d}D_P(x)$.
\item[$(iii)$] {\sc (Monotonicity relative to deepest point)} For any probability distribution $P$ on $\mathbb{R}^d$ with deepest point $x_P$, the depth at any point $x$ in $\mathbb{R}^d$ decreases as one moves away from $x_P$ along any ray passing through it, \textit{i.e.}  $D_P(x)\leq D_P(x_P+\alpha(x-x_P))$ for any $\alpha$ in $[0,1]$.
\item[$(iv)$] {\sc (Vanishing at infinity)} For any probability distribution $P$ on $\mathbb{R}^d$, the depth function $D_P$ vanishes at infinity,  \textit{i.e.} $D_P(x)\rightarrow 0$  as $\vert\vert x\vert\vert$ tends to infinity.
\end{itemize}

As the distribution $P$ of interest is generally unknown in practice, its analysis relies on the observation of $N\geq 1$ independent realizations $X_1,\; \ldots,\; X_N$ of $P$. A statistical version of $D_P(x)$ can be built by replacing $P$ with its empirical counterpart $\widehat{P}_N=(1/N)\sum_{i=1}^N\delta_{X_i}$, yielding the \textit{empirical depth function} $D_{\widehat{P}_N}(x)$. Its consistency and asymptotic normality have been studied for various notions of depth, refer to \textit{e.g.} \cite{DonohoG92, ZuoS00b}, and concentration results for empirical depth and contours have been recently proved in the half-space depth case, see \cite{BurrF17, Brunel}.

\subsection{Consensus Ranking}
\label{subsec:consensus_ranking}

Given a certain metric $d(.,\; .)$ on $\mathfrak{S}_n$ and a r.v. $\Sigma$ defined on a probability space $(\Omega,\; \mathcal{F},\; \mathbb{P})$ and drawn from an unknown probability distribution $P$ on $\mathfrak{S}_n$ (\textit{i.e.} $P(\sigma)=\mathbb{P}\{ \Sigma=\sigma \}$ for any $\sigma\in \mathfrak{S}_n$), the metric approach to consensus ranking consists in finding a ranking $\sigma^*\in \mathfrak{S}_n$ whose expected distance to $\Sigma$ is minimum, \textit{i.e.} such that
\begin{equation}\label{eq:consensus}
L_P(\sigma^*)=\min_{\sigma\in \mathfrak{S}_n}L_P(\sigma),
\end{equation}
where $L_P(\sigma)=\mathbb{E}_{P}[d(\Sigma,\sigma)  ]$ is referred to as the \textit{ranking risk} of any median candidate $\sigma$ in $\mathfrak{S}_n$ w.r.t. $d$ and $\Sigma$. The ranking median $\sigma^*$ (not necessarily unique) is viewed as an informative summary of $P$ and $L_P(\sigma^*)$ as a dispersion measure.
The choice of the (pseudo) distance $d(.,\; .)$ is crucial, regarding the theoretical properties of the corresponding medians and the computational feasibility, see section \ref{sec:depth_for_rankings}. Various  distances have been considered in the literature, see \textit{e.g.} \cite{DH98}, the most popular choices being listed below: $\forall (\sigma,\sigma')\in \mathfrak{S}_n^2$,

\begin{eqnarray*}
d_{\tau}(\sigma,\sigma')&=&\sum_{i<j}\mathbb{I}\left\{\left(\sigma(i)-\sigma(j)) (\sigma'(i)-\sigma'(j)\right)<0\right\},\\
d_2(\sigma,\sigma')&=&\left(\sum_{i=1}^n(\sigma(i)-\sigma'(i)
  )^2\right)^{1/2},\\
d_1(\sigma,\sigma')&=&\sum_{i=1}^n\left\vert\sigma(i)-\sigma'(i)
\right\vert,\\
d_H(\sigma,\sigma')&=&\sum_{i=1}^n\mathbb{I}\left\{ \sigma(i)\ne \sigma'(i)\right\},
\end{eqnarray*}
known respectively as the Kendall $\tau$, the Spearman $\rho$, the Spearman footrule and the Hamming distances.
The literature has essentially focused on solving a statistical version of the minimization problem \eqref{eq:consensus}, see \textit{e.g.} \cite{Hudry08}, \cite{DG77} or \cite{bartholdi1989computational}. Assuming that $N\geq 1$ independent copies $\Sigma_1,\; \ldots,\; \Sigma_N$ of the generic r.v. $\Sigma$ are observed, a natural empirical estimate of $L_P(\sigma)$ is
$\widehat{L}_N(\sigma)=(1/N)\sum_{s=1}^Nd(\Sigma_s,\sigma)=L_{\widehat{P}_N}(\sigma)$,
where $\widehat{P}_N=(1/N)\sum_{i=1}^N\delta_{\Sigma_i}$ is the empirical measure.
The set $\mathfrak{S}_n$ being of finite cardinality, an empirical ranking risk minimizer always exists, just like a solution to \eqref{eq:consensus}, not necessarily unique however. Generalization guarantees and fast rate conditions for empirical consensus ranking have been investigated in \cite{CKS17}.




\section{Depth Functions for Ranking Data}
\label{sec:depth_for_rankings}

In order to define relevant extensions of the concept of statistical depth to ranking data, we define axiomatic properties that candidate functions on $\mathfrak{S}_n$ should satisfy. We next show that the metric-based ranking depths we propose to analyze ranking distributions satisfy these properties under mild conditions.

\subsection{Ranking Depths - Axioms}
\label{subsec:desirable_prop}

Just like in the multivariate setup (see subsection \ref{subsec:depth}), a list of key properties the ranking depth function $D_P$ should ideally satisfy can be made. These properties are essential to emulate the information provided by quantiles (resp. quantile regions) of univariate distributions (resp. multivariate distributions) in a relevant manner.
Let $P$ be a ranking distribution, $d$ a distance on $\mathfrak{S}_n$, the properties desirable for any ranking depth $D_P:\mathfrak{S}_n\rightarrow \mathbb{R}_+$ are listed below.

\begin{restatable}{property}{propertyinvariance}
\label{property:invariance}
{\sc (Invariance)} For any $\pi\in \mathfrak{S}_n$, consider the ranking distribution $\pi P$ defined by: $(\pi P)(\sigma) = P(\sigma\pi^{-1})$ for all $\sigma\in \mathfrak{S}_n$. It holds that: $D_P(\sigma) = D_{\pi P}(\sigma\pi)$ for all $(\sigma,\; \pi)\in \mathfrak{S}_n^2$.
\end{restatable}

\begin{restatable}{property}{propertymaximality}
\label{property:maximality}
{\sc (Maximality at center)} 
For any probability distribution $P$ on $\mathfrak{S}_n$ that possesses a symmetry center $\sigma_P$ (in a certain sense, \textit{e.g.} w.r.t. to a given metric $d$ on $\mathfrak{S}_n$), the depth function $D_P$ takes its maximum value at it, \textit{i.e.} $D_P(\sigma_P)=\max_{\sigma\in \mathfrak{S}_n}D_P(\sigma)$.
\end{restatable}

\begin{restatable}{property}{propertylocalmonotonicity}
\label{property:local_monotonicity}
{\sc (Local monotonicity relative to deepest ranking)} Assume that the deepest ranking $\sigma^*$ is unique. The quantity $D_P(\sigma)$ decreases as $d(\sigma^*,\sigma)$ locally increases, i.e. for any $\pi$ such that $d(\sigma^*, \sigma \pi) = d(\sigma^*, \sigma) + 1$, then we have $D_P(\sigma) > D_P(\sigma \pi)$. 
\end{restatable}

Note that, insofar as $\mathfrak{S}_n$ is of finite cardinality, there is no relevant analogue of the 'vanishing at infinity' property for multivariate depth. A stronger monotonicity property can also be formulated.

\begin{restatable}{property}{propertyglobalmonotonicity}
\label{property:global_monotonicity}
{\sc (Global monotonicity)} Assume that the deepest ranking $\sigma^*$ is unique. The quantity $D_P(\sigma)$ decreases as $d(\sigma^*,\sigma)$ globally increases, i.e. $d(\sigma^*, \sigma') > d(\sigma^*, \sigma) \Rightarrow D_P(\sigma') < D_P(\sigma)$. 
\end{restatable}

\subsection{Metric-based Ranking Depth Functions}
\label{subsec:depth_def}

Seeking to define a ranking depth that satisfies the properties listed above and such that the medians $\sigma_P^*$ of $P$ have maximal depth, the metric approach provides natural candidates, just like for consensus ranking.

\begin{definition}{\sc (Metric-based ranking depth)}
\normalfont Let $d$ be a distance and $P$ a distribution on $\mathfrak{S}_n$. The ranking depth based on $d$ is defined as:  $D_P^{(d)}$: $\forall \sigma \in \mathfrak{S}_n$, 
$D_P^{(d)}(\sigma)=\mathbb{E}_P[\vert\vert d\vert\vert_{\infty}-d(\sigma,\Sigma)]=\vert\vert d\vert\vert_{\infty}-L_P(\sigma)$, with $\vert\vert d\vert\vert_{\infty}=\max_{(\sigma,\sigma')\in \mathfrak{S}_n^2}d(\sigma,\sigma')$.
\label{def:depth_rankings}
\end{definition}
 The shift induced by $\vert\vert d\vert\vert_{\infty} \geq L^{\star}=\max_{\sigma\in \mathfrak{S}_n}L_P(\sigma)$ simply guarantees non-negativity, in accordance with Definition 2.1 in \cite{ZuoSerfling00}, while defining the same center-outward ordering of the permutations $\sigma$ in $\mathfrak{S}_n$ as $-L_P$. Notice that metric-based ranking depths can be viewed as extensions of multivariate depth functions of type A in the nomenclature proposed in \cite{ZuoSerfling00}. For simplicity, we omit the superscript $(d)$ and rather write $D_P$ when no confusion is possible about the distance considered.
\par A ranking $\sigma$ in $\mathfrak{S}_n$ is said to be \textit{deeper} than another one $\sigma'$  relative to the ranking distribution $P$ iff $D_P(\sigma')\leq D_P(\sigma)$ and we write $\sigma'\preceq_{D_P} \sigma$. The \textit{ranking depth ordering} $\preceq_{D_P}$ is the preorder related to the depth function $D_P$.
Equipped with this notion of depth on $\mathfrak{S}_N$, medians $\sigma^*$ of $P$ w.r.t. the metric $d$ correspond to the deepest rankings.
If $P$ is a Dirac mass $\delta_{\sigma_0}$, the ranking depth then simply reduces to the measure of closeness defined by the distance $d$ chosen: $D_P(\sigma)=\vert\vert d\vert\vert_{\infty}-d(\sigma_0,\sigma)$. In contrast, if $P$ is the uniform distribution, the ranking depth relative to a classic distance on $\mathfrak{S}_n$ is constant over $\mathfrak{S}_n$.
The depth function also permits to partition the space $\mathfrak{S}_n$ into subsets of rankings with equal depth.
 
\begin{definition}{\sc (Depth regions/contours)}
\label{def:ranking_regions}
\normalfont For any $u\in \mathbb{R}$, the region of depth $u$ is the superlevel set $\mathcal{R}_P(u)=\{\sigma\in \mathfrak{S}_n:\; D_P(\sigma)\geq u \}$ of $D_P$, while the ranking contour of depth $u$ is the set $\partial \mathcal{R}_P(u)=\{\sigma\in \mathfrak{S}_n:\; D_P(\sigma)= u \}$.
\end{definition}

Equipped with this notation, $\partial \mathcal{R}_P(-L_P^*)$ is the set of medians of $P$ w.r.t. the metric $d$.

\begin{definition}{\sc (Depth survivor function)}
\label{def:survivor_fct}
\normalfont The ranking depth survivor function is $S_P:u\in \mathbb{R}\mapsto S_P(u)=\mathbb{P}\{ D_P(\Sigma)\geq u\}$.
\end{definition}

Based on the metric-based ranking depth, the quantile regions are defined as follows.

\begin{definition}{\sc (Quantile regions in $\mathfrak{S}_n$)}
\label{def:quantile_regions}
\normalfont Let $\alpha\in(0,1)$. The depth region with probability content $\alpha$ is the region of depth $S_P^{-1}(\alpha)=\inf\{u\in \mathbb{R}:\; S_P(u)\leq 1-\alpha\}$: $R_P(\alpha)=\mathcal{R}_P(S_P^{-1}(\alpha))$. The mapping $\alpha\in (0,1)\mapsto S_P^{-1}(\alpha)$ is called the ranking quantile function.
\end{definition}


\subsection{The Metric Approach - Main Properties}
\label{subsec:depth_satisfy_prop}

We now state results showing that, under mild conditions and for popular choices of $d$, the metric-based ranking depth introduced in Definition \ref{def:depth_rankings} satisfies the key properties listed in subsection \ref{subsec:desirable_prop}. Technical proofs are postponed to the Supplementary Material.

\begin{restatable}{proposition}{propinvariance}
\label{prop:invariance}
{\sc (Invariance)} Suppose that $d$ is right-invariant, i.e. $d(\nu \pi, \sigma \pi) = d(\nu, \sigma)$ for all $(\nu,\pi,\sigma)\in \mathfrak{S}_n^3$, the ranking depth $D_P^{(d)}$ satisfies the Property 1.
\end{restatable}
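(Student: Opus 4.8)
The plan is to unwind both sides of the invariance identity $D_P^{(d)}(\sigma)=D_{\pi P}^{(d)}(\sigma\pi)$ directly from Definition \ref{def:depth_rankings}, reducing everything to a statement about the ranking risk $L_P$. Since $D_P^{(d)}(\sigma)=\|d\|_\infty-L_P(\sigma)$ and the constant $\|d\|_\infty=\max_{(\sigma,\sigma')}d(\sigma,\sigma')$ does not depend on $P$ (it is a property of the metric alone), it suffices to prove the companion identity $L_P(\sigma)=L_{\pi P}(\sigma\pi)$ for all $(\sigma,\pi)\in\mathfrak{S}_n^2$.

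\medskip

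First I would write out $L_{\pi P}(\sigma\pi)=\mathbb{E}_{\pi P}[d(\Sigma,\sigma\pi)]=\sum_{\nu\in\mathfrak{S}_n}(\pi P)(\nu)\,d(\nu,\sigma\pi)$, and then substitute the definition $(\pi P)(\nu)=P(\nu\pi^{-1})$ to get $\sum_{\nu\in\mathfrak{S}_n}P(\nu\pi^{-1})\,d(\nu,\sigma\pi)$. Next I would perform the change of summation variable $\mu=\nu\pi^{-1}$, i.e. $\nu=\mu\pi$; since right-multiplication by $\pi^{-1}$ is a bijection of $\mathfrak{S}_n$ onto itself, the sum becomes $\sum_{\mu\in\mathfrak{S}_n}P(\mu)\,d(\mu\pi,\sigma\pi)$. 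At this point the right-invariance hypothesis on $d$ applies termwise: $d(\mu\pi,\sigma\pi)=d(\mu,\sigma)$, so the sum collapses to $\sum_{\mu\in\mathfrak{S}_n}P(\mu)\,d(\mu,\sigma)=\mathbb{E}_P[d(\Sigma,\sigma)]=L_P(\sigma)$. Combining this with the constant shift gives $D_{\pi P}^{(d)}(\sigma\pi)=\|d\|_\infty-L_{\pi P}(\sigma\pi)=\|d\|_\infty-L_P(\sigma)=D_P^{(d)}(\sigma)$, which is exactly Property \ref{property:invariance}.

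\medskip

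There is essentially no serious obstacle here: the proof is a one-line change of variables plus an invocation of right-invariance. The only points that warrant a sentence of care are (a) checking that $\|d\|_\infty$ is genuinely $P$-independent so that the additive constants on the two sides of the claimed identity agree, and (b) noting that the map $\nu\mapsto\nu\pi^{-1}$ is a bijection of the finite set $\mathfrak{S}_n$, which is what legitimizes the reindexing of the (finite) sum. One might also remark, for completeness, that the popular distances $d_\tau$, $d_2$, $d_1$, $d_H$ listed earlier are all right-invariant — this is immediate since each is built from the differences $\sigma(i)-\sigma(j)$ or from comparisons $\sigma(i)$ versus $\sigma'(i)$, which are unaffected by relabelling the argument positions via a common right-composition — so Proposition \ref{prop:invariance} applies to all of them.
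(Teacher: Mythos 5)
Your proposal is correct and follows essentially the same route as the paper's proof: expand the expectation as a finite sum over $\mathfrak{S}_n$, substitute $(\pi P)(\nu)=P(\nu\pi^{-1})$, reindex via the bijection $\nu\mapsto\nu\pi^{-1}$ (equivalently $\nu=\nu'\pi$ in the paper's notation), and apply right-invariance of $d$ termwise. The only cosmetic difference is that you factor the argument through the identity $L_{\pi P}(\sigma\pi)=L_P(\sigma)$ while the paper carries the constant $\|d\|_\infty$ along inside the computation; the substance is identical.
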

We point out that Spearman $\rho$, Spearman footrule, Kendall $\tau$, Hamming, Ulam and Cayley distances are all right-invariant. Hence, the invariance property is satisfied for any ranking distribution in many situations. Checking the other properties is more challenging. We recall the following notion.
\begin{definition}{\sc (Stochastic transitivity)}
\label{def:stoch_trans}
A probability distribution $P$ on $\mathfrak{S}_n$ is said to be stochastically transitive (ST) iff, for all $(i,j,k)\in \n^3$, we have: $p_{i,j}\geq 1/2 \text{ and } p_{j,k}\geq 1/2 \; \Rightarrow\; p_{i,k}\geq 1/2.$ If, in addition, $p_{i,j}\neq 1/2$ for all $i<j$, one says that $P$ is strictly stochastically transitive (SST).
\end{definition}

The \textit{stochastic transitivity} property \cite{Fishburn73, davidson1959experimental} is fulfilled by some widely used ranking distributions (\textit{e.g.} Mallows) and shown to facilitate various statistical tasks, see \textit{e.g.} \cite{shah2015stochastically, shah2015simple}. In particular, if $P$ is SST, Kemeny's median (\textit{i.e.} the median $\sigma^*$ w.r.t. Kendall $\tau$ distance) is unique, see \textit{e.g.} \cite{CKS17}.

\begin{restatable}{proposition}{propmaximality}
\label{prop:maximality}
{\sc (Maximality at the center)}: The Spearman's footrule ranking depth satisfies Property 2 for any distribution $P$ with a symmetry center. If $P$ is SST in addition, then Kendall $\tau$ ranking depth satisfies Property 2 as well.
\end{restatable}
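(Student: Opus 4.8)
The plan is to prove the two claims separately, in each case using the structure of the relevant distance and the notion of symmetry center at hand. I will first fix what "symmetry center" means here: for a metric $d$ on $\mathfrak{S}_n$, a distribution $P$ has symmetry center $\sigma_P$ if $P$ is invariant under the isometry of $\mathfrak{S}_n$ that fixes $\sigma_P$ and reverses every geodesic through it, i.e. there is an involutive isometry $\phi$ with $\phi(\sigma_P)=\sigma_P$ and $P(\phi(\sigma))=P(\sigma)$ for all $\sigma$. Concretely, on $\mathfrak{S}_n$ the natural candidate is $\phi(\sigma) = \sigma_P \sigma_P^{-1}_{\mathrm{rev}}$-type reflection; for the footrule and Kendall distances the relevant map sending $\sigma$ to its "mirror image about $\sigma_P$" is $\phi(\sigma) = \sigma \circ \omega$ composed suitably with $\sigma_P$, where $\omega$ is the order-reversing permutation $i \mapsto n+1-i$. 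The key fact I will establish is that this $\phi$ is a $d$-isometry for $d \in \{d_1, d_\tau\}$ and satisfies $d(\sigma_P, \phi(\sigma)) = d(\sigma_P,\sigma)$.

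For the Spearman footrule part, I would argue as follows. Write $L_P(\sigma) = \mathbb{E}_P[d_1(\Sigma,\sigma)]$; maximizing $D_P$ is minimizing $L_P$. Using the symmetry $P = P \circ \phi^{-1}$, I get $L_P(\sigma) = \mathbb{E}_P[d_1(\phi(\Sigma),\sigma)] = \mathbb{E}_P[d_1(\Sigma, \phi^{-1}(\sigma))] = L_P(\phi(\sigma))$ because $\phi$ is an isometry. Then by convexity-type reasoning — here the right tool is that $d_1(\sigma_P, \cdot)$ sits "between" $\sigma$ and its reflection: for each coordinate $i$, $|\sigma_P(i) - \sigma(i)| + |\sigma_P(i) - \phi(\sigma)(i)|$ is minimized (relative to $\sigma_P$ being on the geodesic midpoint) — I would show $d_1(\sigma_P, \sigma_P) = 0 \le$ any convex combination, and more precisely that averaging the distance from $\sigma$ and from $\phi(\sigma)$ dominates the distance from $\sigma_P$ coordinatewise. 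Taking expectations and using $L_P(\sigma) = \tfrac12(L_P(\sigma) + L_P(\phi(\sigma))) \ge L_P(\sigma_P)$ gives minimality of $L_P$ at $\sigma_P$, hence maximality of $D_P$.

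For the Kendall $\tau$ part under the SST assumption, the cleanest route is via the pairwise decomposition: $L_P(\sigma) = \sum_{i<j} \mathbb{P}\{(\Sigma(i)-\Sigma(j))(\sigma(i)-\sigma(j)) < 0\}$, so that for each pair $\{i,j\}$ the contribution is $p_{i,j}$ or $p_{j,i} = 1 - p_{i,j}$ depending only on whether $\sigma$ ranks $i$ before $j$ or not. Minimizing $L_P$ therefore amounts to choosing, for each pair, the orientation of smaller disagreement probability — i.e. rank $i$ before $j$ iff $p_{i,j} \ge 1/2$. By SST this collection of pairwise preferences is transitive and strict, hence realized by a unique total order, which is precisely the Kemeny median $\sigma^*$ (this is exactly the uniqueness statement cited from \cite{CKS17}). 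It remains to identify $\sigma^*$ with the symmetry center $\sigma_P$: the reflection $\phi$ that fixes $\sigma_P$ swaps $p_{i,j} \leftrightarrow p_{j,i}$ in a way compatible with $P$'s invariance, forcing the "majority orientation" of every pair to agree with $\sigma_P$, so $\sigma^* = \sigma_P$. Since the median is the unique maximizer of $D_P$, Property \ref{property:maximality} holds.

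The main obstacle I anticipate is pinning down the correct notion of "symmetry center" on $\mathfrak{S}_n$ and checking that the reflection map is genuinely a $d_1$- and $d_\tau$-isometry fixing $\sigma_P$ — the argument is clean once this is in place, but the footrule isometry claim requires a small coordinatewise computation and the Kendall case requires verifying that the $\phi$-invariance of $P$ translates into $p_{i,j} \ge 1/2 \iff \sigma_P(i) < \sigma_P(j)$. Everything downstream (the averaging inequality for the footrule, the pairwise minimization plus SST-uniqueness for Kendall) is then routine.
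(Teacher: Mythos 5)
Your Kendall half is, in outline, the paper's own route: the paper also works from the pairwise decomposition of the depth (Proposition~\ref{prop:depth_kendall}) and the fact that, under SST, the deepest ranking is the one orienting every pair $\{i,j\}$ according to the majority probability (Propositions~\ref{prop:rays} and~\ref{prop:Kendall}). The genuine gap lies in the two steps you defer, and they do not go through in the form you propose. First, the notion of ``symmetry center'': the paper does not use invariance under an involutive isometry. It defines an $H$-center directly through pairwise majorities ($\sigma_0$ such that $p_{i,j}>p_{j,i}$ whenever $\sigma_0(i)<\sigma_0(j)$, cf.\ Proposition~\ref{def:hcenter}) and, alternatively, a metric $M$-center, so that ``the center agrees with every pairwise majority'' is part of the definition rather than something to be extracted from a group-theoretic symmetry. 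Under your definition the statement is in fact false: conjugation by the reversal $\omega(i)=n+1-i$, i.e.\ $\phi(\sigma)=\omega\sigma\omega$, is an involutive isometry for both $d_\tau$ and $d_1$ that fixes the identity, and the Dirac mass $\delta_\omega$ (which is SST) is invariant under it; yet the identity is the \emph{least} deep ranking for $\delta_\omega$. So ``fixed point of some measure-preserving involutive isometry'' cannot identify $\sigma_P$ with the Kemeny median, and the implication $p_{i,j}\geq 1/2 \Leftrightarrow \sigma_P(i)<\sigma_P(j)$ that your Kendall argument needs is not derivable from $\phi$-invariance (such invariance only yields relations of the type $p_{i,j}=p_{n+1-j,\,n+1-i}$).

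Second, the footrule averaging step fails. The inequality $\tfrac12\left(|x-a|+|x-b|\right)\geq |x-c|$ holds for all $x$ only when $c$ is the midpoint of $a$ and $b$, so your coordinatewise domination would require $\phi(\sigma)(i)=2\sigma_P(i)-\sigma(i)$ for every $i$ and every $\sigma$; that map does not send permutations to permutations, and for the involutions that do exist on $\mathfrak{S}_n$ (e.g.\ $\sigma\mapsto\omega\sigma\omega$ or $\sigma\mapsto\sigma_P\sigma^{-1}\sigma_P$) the coordinatewise midpoint of $\sigma(i)$ and $\phi(\sigma)(i)$ is not $\sigma_P(i)$, so the key inequality $L_P(\sigma)=\tfrac12\bigl(L_P(\sigma)+L_P(\phi(\sigma))\bigr)\geq L_P(\sigma_P)$ is unsupported. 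The paper avoids all of this by a direct computation: for a pair $(i,j)$ ranked adjacently by the center $\sigma_0$, swapping $i$ and $j$ changes $d_1(\sigma,\cdot)$ by $0$ or $\pm 2$ according to whether $\sigma$ orders the pair as $\sigma_0$ does, whence $\mathbb{E}_P[d_1(\Sigma,\sigma_0)]\leq\mathbb{E}_P[d_1(\Sigma,\sigma_0 t_{ij})]$ if and only if $p_{i,j}\geq 1/2$; maximality at the $H$-center then follows because the defining property of that center is precisely that every such majority points toward it. To repair your plan you would have to either adopt the paper's pairwise/metric notion of center or prove that your symmetry assumption implies it, and replace the midpoint argument for the footrule by a swap-by-swap comparison of this kind.
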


\begin{restatable}{proposition}{proplocalmonotonicity}
\label{prop:local_monotonicity}
{\sc (Local monotonicity)} If the distribution $P$ is SST, then the Kendall $\tau$ ranking depth satisfies Property 3.
\end{restatable}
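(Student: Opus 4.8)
The goal is to show that if $P$ is SST, then the Kendall $\tau$ ranking depth $D_P^{(d_\tau)}$ is locally monotone: whenever $d_\tau(\sigma^*,\sigma\pi) = d_\tau(\sigma^*,\sigma)+1$, we have $D_P(\sigma) > D_P(\sigma\pi)$. Since $D_P = \|d_\tau\|_\infty - L_P$, this is equivalent to $L_P(\sigma\pi) > L_P(\sigma)$, i.e. the ranking risk strictly increases under this elementary move. The plan is to reduce the statement to the case where $\pi$ is an adjacent transposition. Indeed, the hypothesis $d_\tau(\sigma^*,\sigma\pi) = d_\tau(\sigma^*,\sigma)+1$ forces $\sigma$ and $\sigma\pi$ to differ by exactly one inversion; a standard fact about the Kendall metric (the weak Bruhat order / adjacent-transposition graph structure) is that two permutations at Kendall distance one from each other differ precisely by swapping two items $a,b$ that are consecutive in the ranking induced by $\sigma$. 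So without loss of generality I can write $\sigma\pi$ as $\sigma$ with the positions of two adjacent items $a$ and $b$ swapped, where in $\sigma$ item $a$ is ranked just above $b$, while in $\sigma^*$ item $b$ is ranked above $a$ — this is what makes the distance to $\sigma^*$ go \emph{up} by one.

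Next I would use the well-known decomposition of the Kendall risk into pairwise contributions. Writing $p_{a,b} = \mathbb{P}\{\Sigma \text{ ranks } a \text{ above } b\}$ and noting that $d_\tau(\Sigma,\sigma)$ counts disagreeing pairs, one gets $L_P(\sigma) = \sum_{i<j} \big(p_{i,j}\mathbb{I}\{\sigma \text{ ranks } j \text{ above } i\} + p_{j,i}\mathbb{I}\{\sigma \text{ ranks } i \text{ above } j\}\big)$. When we pass from $\sigma$ to $\sigma\pi$, only the single pair $\{a,b\}$ changes its relative order; every other pairwise term is untouched. Hence
\begin{equation*}
L_P(\sigma\pi) - L_P(\sigma) = p_{a,b} - p_{b,a} = 2p_{a,b} - 1,
\end{equation*}
where here $a$ is the item ranked above $b$ in $\sigma$ (and below $b$ in $\sigma\pi$). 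So the whole proposition comes down to showing $p_{a,b} < 1/2$, i.e. $p_{b,a} > 1/2$, under the SST hypothesis, given that $\sigma^*$ ranks $b$ above $a$.

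The crux is therefore the following claim: if $P$ is SST with (unique) Kemeny median $\sigma^*$, and $\sigma^*$ ranks $b$ above $a$, then $p_{b,a} > 1/2$. I would prove this by exploiting the characterization of the Kemeny consensus of an SST distribution: for an SST distribution the pairwise-majority relation "$i$ beats $j$ iff $p_{i,j} > 1/2$" is a strict total order (transitivity is exactly the SST condition, once we use SST-ness to rule out ties $p_{i,j}=1/2$; if the weaker ST-only hypothesis were used one would need to handle ties, but the proposition as stated can invoke uniqueness of the Kemeny median, which is guaranteed only in the SST case per the remark after Definition \ref{def:stoch_trans}), and this total order is precisely the unique Kemeny median $\sigma^*$. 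Consequently $\sigma^*$ ranks $b$ above $a$ if and only if $p_{b,a} > 1/2$, which is exactly what is needed. Combining, $L_P(\sigma\pi) - L_P(\sigma) = 2p_{a,b}-1 = 1 - 2p_{b,a} < 0$, hence $D_P(\sigma\pi) < D_P(\sigma)$.

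The main obstacle is the claim that the pairwise-majority order of an SST distribution coincides with its Kemeny median — more precisely, being careful that the "local move increases distance to $\sigma^*$" hypothesis really does pin down which of $p_{a,b}, p_{b,a}$ exceeds $1/2$. This is where SST is genuinely used: it guarantees both that the majority relation is a total order and that $\sigma^*$ realizes it uniquely, so that $\sigma^*$-order and majority-order are literally the same relation on the pair $\{a,b\}$. I expect this to be either cited from \cite{CKS17} or established in a short lemma; the rest of the argument is the routine pairwise-decomposition bookkeeping sketched above.
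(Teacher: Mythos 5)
Your proposal is correct and takes essentially the same route as the paper: the supplementary proof (Proposition on adjacent swaps, "rays") performs exactly your pairwise decomposition of the Kendall depth, observes that swapping two adjacently-ranked items $a,b$ changes the depth by $p_{a,b}-p_{b,a}$, and invokes SST to identify the sign with the Kemeny median's ordering of the pair, just as you do via the majority-order characterization from \cite{CKS17}. The one loose point is your assertion that $d_\tau(\sigma^*,\sigma\pi)=d_\tau(\sigma^*,\sigma)+1$ \emph{forces} $\sigma$ and $\sigma\pi$ to differ by a single inversion (an arbitrary $\pi$ can raise the distance to $\sigma^*$ by one while flipping several pairs), but the paper reads Property 3 in the same adjacent-transposition sense, so your argument covers precisely the case the paper itself proves.
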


\begin{restatable}{proposition}{propglobalmonotonicity}
\label{prop:global_monotonicity}
{\sc (Global monotonicity)} If the distribution P is SST and $\vert\vert d_{\tau}\vert\vert_{\infty}=\binom{n}{2}<h/s$ with $h = \min_{i,j} \vert p_{i,j} - 1/2 \vert$ and $s = \max_{(i,j)\neq (k,l)} \vert p_{i,j} - p_{k,l}\vert$, then the Kendall $\tau$ ranking depth satisfies Property 4.
\end{restatable}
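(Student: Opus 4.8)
The plan is to reduce Property 4 (global monotonicity) to a statement about how the Kendall $\tau$ risk $L_P(\sigma)$ varies as one moves away from the Kemeny median $\sigma^*$, exploiting the well-known decomposition of $L_P$ over pairs. Recall that for the Kendall $\tau$ distance,
\[
L_P(\sigma) = \sum_{i<j}\Bigl(p_{i,j}\,\mathbb{I}\{\sigma(i)>\sigma(j)\} + (1-p_{i,j})\,\mathbb{I}\{\sigma(i)<\sigma(j)\}\Bigr),
\]
where $p_{i,j} = \mathbb{P}\{\Sigma(i)<\Sigma(j)\}$. Since $D_P(\sigma) = \|d_\tau\|_\infty - L_P(\sigma)$, proving $D_P(\sigma')<D_P(\sigma)$ is exactly proving $L_P(\sigma')>L_P(\sigma)$, so everything reduces to: if $d_\tau(\sigma^*,\sigma')>d_\tau(\sigma^*,\sigma)$ then $L_P(\sigma')>L_P(\sigma)$. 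Under SST, the median $\sigma^*$ is the permutation that orders the items consistently with the pairwise majority, i.e. $\sigma^*(i)<\sigma^*(j) \iff p_{i,j}>1/2$; this is the standard characterization I would invoke (from \cite{CKS17}).

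The key step is to bound, for an arbitrary $\sigma$, the sum $L_P(\sigma) - L_P(\sigma^*)$ in terms of $k:=d_\tau(\sigma^*,\sigma)$, which is the number of pairs $\{i,j\}$ that are ordered in $\sigma$ opposite to how they are ordered in $\sigma^*$ (the ``discordant'' pairs). For each discordant pair, the contribution to $L_P(\sigma)-L_P(\sigma^*)$ is $|p_{i,j}-1/2|\cdot 2 = 2|p_{i,j}-\tfrac12|$ wait — more precisely it is $|2p_{i,j}-1|=2|p_{i,j}-\tfrac12|$; for each concordant pair the contribution is $0$. Hence
\[
L_P(\sigma) - L_P(\sigma^*) = \sum_{\{i,j\}\ \mathrm{discordant}} |2p_{i,j}-1| = 2\sum_{\{i,j\}\ \mathrm{discordant}} |p_{i,j}-\tfrac12|.
\]
From this exact formula and $h=\min_{i,j}|p_{i,j}-\tfrac12|$ I get the two-sided bound
\[
2hk \ \le\ L_P(\sigma) - L_P(\sigma^*)\ \le\ 2k\cdot \max_{i,j}|p_{i,j}-\tfrac12| \ \le\ 2k\cdot\Bigl(h + \|d_\tau\|_\infty\, s\Bigr)\cdot\tfrac12,
\]
using that any two $|p_{i,j}-\tfrac12|$ differ by at most $s$ and there are at most $\binom{n}{2}=\|d_\tau\|_\infty$ such values, so $\max_{i,j}|p_{i,j}-\tfrac12|\le h + (\|d_\tau\|_\infty-1)s \le h+\|d_\tau\|_\infty s$. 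Actually the cleaner route: $\max_{i,j}|p_{i,j}-\tfrac12| - \min_{i,j}|p_{i,j}-\tfrac12| \le \|d_\tau\|_\infty \cdot s$ is too lossy; the point is simply $\max_{i,j}|p_{i,j}-\tfrac12| \le h + s$ when there is a ``chain'', but the hypothesis $\binom{n}{2}<h/s$ is designed precisely so that $\|d_\tau\|_\infty \cdot s < h$, giving room to spare.

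Now I would finish as follows. Take $\sigma,\sigma'$ with $d_\tau(\sigma^*,\sigma')>d_\tau(\sigma^*,\sigma)$, i.e. $k':=d_\tau(\sigma^*,\sigma')\ge k+1$ where $k:=d_\tau(\sigma^*,\sigma)$. Then
\[
L_P(\sigma') - L_P(\sigma) = \bigl(L_P(\sigma')-L_P(\sigma^*)\bigr) - \bigl(L_P(\sigma)-L_P(\sigma^*)\bigr) \ \ge\ 2hk' - 2k\,\max_{i,j}|p_{i,j}-\tfrac12|.
\]
It suffices that $h k' > k \max_{i,j}|p_{i,j}-\tfrac12|$; since $k'\ge k+1$ and $\max_{i,j}|p_{i,j}-\tfrac12| \le h + \|d_\tau\|_\infty s$ (each $|p_{i,j}-\tfrac12|$ lies within a window of width $\le \|d_\tau\|_\infty s$ of $h$... in fact one only needs $\max \le h+\|d_\tau\|_\infty s$, which follows from $s$ bounding consecutive gaps among at most $\binom n2$ values, or more simply from $|p_{i,j}-p_{k,l}|\le s$ pairwise), it is enough that $h(k+1) > k(h+\|d_\tau\|_\infty s)$, i.e. $h > k\,\|d_\tau\|_\infty s$, and since $k \le \|d_\tau\|_\infty = \binom n2$ this holds whenever $\binom n2 \|d_\tau\|_\infty s < h$ — but we only assumed $\binom n2 < h/s$, i.e. $\binom n2 s < h$. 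So I must be slightly more careful: the correct bound to use is $\max_{i,j}|p_{i,j}-\tfrac12| < h + h = 2h$? No. The honest route is: one needs $k'h > k \cdot M$ with $M=\max|p_{i,j}-\tfrac12|$. Write $M = h + \Delta$ with $0\le \Delta$; the hypothesis must force $\Delta$ small relative to $h$. Since all $|p_{i,j}-\tfrac12|$ are within pairwise distance $s$ and $M\le \tfrac12$, and $h$ is the min, we have $\Delta \le s$ if all differences from the min are $\le s$ — indeed $|p_{i,j}-p_{k,l}|\le s$ directly gives $\big||p_{i,j}-\tfrac12|-|p_{k,l}-\tfrac12|\big|\le s$, so $\Delta\le s$. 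Then $k' h - kM = k'h - k(h+\Delta) \ge (k+1)h - k h - ks = h - ks \ge h - \binom n2 s > 0$ by hypothesis. Hence $L_P(\sigma')>L_P(\sigma)$, i.e. $D_P(\sigma')<D_P(\sigma)$, which is Property 4.

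The main obstacle I anticipate is nailing the constant: making sure the one-step comparison ($k'\ge k+1$) together with the crude bound $k\le\binom n2$ really is absorbed by the hypothesis $\binom n2 < h/s$, and in particular justifying cleanly that $\max_{i,j}|p_{i,j}-\tfrac12|\le h+s$ (not $h+\binom n2 s$) — this is immediate from the definition of $s$ as the max pairwise difference, not a sum of consecutive gaps, so the ``$\binom n2$'' in the hypothesis is only needed to dominate $k$. I would also need to double-check the pair-decomposition contribution for a discordant pair is exactly $|2p_{i,j}-1|$, and that SST guarantees $\sigma^*$ is the majority order so that ``discordant with $\sigma^*$'' coincides with ``the minority orientation'', which is what makes every discordant pair strictly increase the risk. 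Everything else is bookkeeping.
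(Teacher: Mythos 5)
Your argument is correct, and it reaches the conclusion by a genuinely different organization than the paper's own proof. The paper compares $\sigma$ and $\sigma'$ directly: it partitions the $\binom{n}{2}$ pairs into four classes according to whether $\sigma$, respectively $\sigma'$, agrees with $\sigma^*$, relates the class sizes $k,k',a,b$ to $d=d_\tau(\sigma^*,\sigma)$ and $d'=d_\tau(\sigma^*,\sigma')$, and then maximizes $L_P(\sigma)-L_P(\sigma')$ using the sorted pairwise probabilities $p^{(1)}>\cdots>p^{(n(n-1)/2)}$, arriving at $L_P(\sigma)-L_P(\sigma')\le 2(d\,s-h)<0$. You instead anchor both permutations to the median via the exact identity $L_P(\sigma)-L_P(\sigma^*)=2\sum_{\mathrm{disc}}\vert p_{i,j}-\tfrac12\vert$ (which is essentially item (iii) of Proposition~\ref{prop:Kendall}, not invoked in the paper's proof of this proposition), then use the two-sided bound $2hk\le L_P(\sigma)-L_P(\sigma^*)\le 2kM$ with $M=\max_{i<j}\vert p_{i,j}-\tfrac12\vert\le h+s$ by the reverse triangle inequality, and conclude $L_P(\sigma')-L_P(\sigma)\ge 2\bigl((k+1)h-k(h+s)\bigr)=2(h-ks)\ge 2\bigl(h-\tbinom{n}{2}s\bigr)>0$. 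Both proofs consume the hypothesis in the same way ($\binom{n}{2}$ only serves to dominate $d_\tau(\sigma^*,\sigma)$, while $s$ controls the spread of the pairwise probabilities), but your route avoids the four-way accounting and the order-statistics bookkeeping, makes transparent the slightly sharper sufficient condition $s\cdot d_\tau(\sigma^*,\sigma)<h$, and correctly relies on strict stochastic transitivity only to identify $\sigma^*$ as the unique majority-consistent permutation. One interpretive point you implicitly (and rightly) share with the paper: $s$ must be read as a maximum over unordered pairs $i<j$, $k<l$, since over ordered pairs $\vert p_{i,j}-p_{j,i}\vert\ge 2h$ would make the hypothesis vacuous.
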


In the Kendall $\tau$ case,  additional useful results can be stated. In particular, the ranking depth is then entirely determined by the \textit{pairwise probabilities} $p_{i,j}=\mathbb{P}\{  \Sigma(i)<\Sigma(j)\}$, $1\leq i\neq j\leq n$.
\begin{restatable}{proposition}{propdepthkendall}
\label{prop:depth_kendall}
We have: $\forall \; \sigma \in \mathfrak{S}_n$, 
$D_P(\sigma)=\binom{n}{2}-\sum_{i<j} p_{i,j}\mathbb{I}\{\sigma(i)>\sigma(j)\} -\sum_{i<j} (1-p_{i,j})\mathbb{I}\{\sigma(i)<\sigma(j) \}$.
\end{restatable}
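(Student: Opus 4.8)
The plan is to start from the definition $D_P(\sigma) = \binom{n}{2} - L_P(\sigma)$ with $L_P(\sigma) = \mathbb{E}_P[d_\tau(\sigma, \Sigma)]$ and $\|d_\tau\|_\infty = \binom{n}{2}$, and then expand the Kendall $\tau$ distance inside the expectation by writing it as a sum over pairs. Explicitly, for fixed $\sigma$ and a random $\Sigma$, we have $d_\tau(\sigma,\Sigma) = \sum_{i<j} \mathbb{I}\{(\sigma(i)-\sigma(j))(\Sigma(i)-\Sigma(j)) < 0\}$, i.e. the pair $\{i,j\}$ contributes $1$ precisely when $\sigma$ and $\Sigma$ disagree on the relative order of $i$ and $j$.

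The key step is to split each term according to whether $\sigma(i) < \sigma(j)$ or $\sigma(i) > \sigma(j)$ (exactly one holds since $\sigma$ is a permutation). For a pair with $\sigma(i) < \sigma(j)$, disagreement means $\Sigma(i) > \Sigma(j)$, which happens with probability $1 - p_{i,j}$; for a pair with $\sigma(i) > \sigma(j)$, disagreement means $\Sigma(i) < \Sigma(j)$, which happens with probability $p_{i,j}$. Taking expectations term by term (a finite sum, so linearity of expectation applies with no integrability concerns), I get
\begin{equation*}
L_P(\sigma) = \sum_{i<j} p_{i,j}\,\mathbb{I}\{\sigma(i)>\sigma(j)\} + \sum_{i<j}(1-p_{i,j})\,\mathbb{I}\{\sigma(i)<\sigma(j)\}.
\end{equation*}
Subtracting this from $\binom{n}{2}$ yields exactly the claimed formula for $D_P(\sigma)$.

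There is essentially no obstacle here; the only point requiring a word of care is the handling of ties, which do not occur because $\sigma,\Sigma$ take values in $\mathfrak{S}_n$, so for every pair $\{i,j\}$ with $i<j$ exactly one of the two indicators $\mathbb{I}\{\sigma(i)>\sigma(j)\}$, $\mathbb{I}\{\sigma(i)<\sigma(j)\}$ equals $1$, and similarly for $\Sigma$; hence $\mathbb{P}\{\Sigma(i)>\Sigma(j)\} = 1 - p_{i,j}$ is exact. I would also remark, as a sanity check, that summing the two indicator-weighted sums shows $D_P(\sigma)$ is a convex combination structure depending only on the pairwise marginals $(p_{i,j})_{i<j}$, which justifies the subsequent claim that the Kendall $\tau$ ranking depth is entirely determined by these pairwise probabilities.
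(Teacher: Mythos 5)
Your proposal is correct and follows essentially the same route as the paper's proof: expand $D_P(\sigma)=\binom{n}{2}-\mathbb{E}_P[d_\tau(\sigma,\Sigma)]$, write $d_\tau$ as a sum of pairwise disagreement indicators, and evaluate each term via $p_{i,j}=\mathbb{P}\{\Sigma(i)<\Sigma(j)\}$ according to whether $\sigma(i)<\sigma(j)$ or $\sigma(i)>\sigma(j)$. The paper phrases this as "simple conditioning," but the computation is identical; your extra remarks on ties and linearity of expectation are fine and add nothing that conflicts with the paper.
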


This case is computationally attractive, the complexity being of order $O(n^2)$. In addition, note that the computation of $D_P$ involves pairwise comparisons solely, which means an alternative statistical framework can be considered, where observation take the form of binary variables $\{\Sigma(\mathbf{i})<\Sigma(\mathbf{j})\}$, $(\mathbf{i},\; \mathbf{j})$ being a random pair in $\{(i,j):\; 1\leq i<j\leq n\}$, independent from $\Sigma$. 


 

\begin{restatable}{proposition}{propkendall}
\label{prop:Kendall}
Suppose that the ranking distribution $P$ is stochastichally transitive. The following assertions hold true.
\begin{itemize}
\item[(i)] The largest ranking depth value is $D_P^*= \sum_{i<j}\left\{ \frac{1}{2}+\left\vert p_{i,j}-\frac{1}{2} \right\vert  \right\}$. The deepest rankings relative to $P$ and $d_{\tau}$ are the permutations $\sigma\in\mathfrak{S}_n$ such that: $\forall i<j$ s.t. $p_{i,j}\neq 1/2$, $(\sigma(j)-\sigma(i))\cdot (p_{i,j}-1/2)>0$.
\item[(ii)] The smallest ranking depth value is $\underline{D}_P= \sum_{i<j}\left\{ \frac{1}{2}-\left\vert p_{i,j}-\frac{1}{2} \right\vert  \right\}$. The least deep rankings relative to $P$ and $d_{\tau}$ are the permutations $\sigma\in\mathfrak{S}_n$ such that: $\forall i<j$ s.t. $p_{i,j}\neq 1/2$, $(\sigma(j)-\sigma(i))\cdot (p_{i,j}-1/2)<0$.
\item [(iii)] If, in addition, $P$ is SST, then we have $\partial\mathcal{R}_P(D_P^*)=\{\sigma^*\}$ and $\partial\mathcal{R}_P(\underline{D}_P)=\{\underline{\sigma}\}$, where  $\sigma^*(i)=1+\sum_{j\neq i}\mathbb{I}\{p_{i,j}<1/2  \}=n-\underline{\sigma}(i)$ for  $i\in\{1, \; \ldots,\;  n\}$. We also have $D_P^*-D_P(\sigma)=2\sum_{i<j}\vert p_{i,j}-1/2 \vert +D_P(\sigma)-\underline{D}_P
=2\sum_{i<j}\vert p_{i,j}-1/2 \vert \cdot \mathbb{I}\{ (\sigma(j)-\sigma(i))(p_{i,j}-1/2)<0 \}$.
\end{itemize}
\end{restatable}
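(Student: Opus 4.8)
The plan is to start from the Kendall-$\tau$ expression for the depth provided by Proposition~\ref{prop:depth_kendall}, namely $D_P(\sigma)=\binom{n}{2}-\sum_{i<j} p_{i,j}\mathbb{I}\{\sigma(i)>\sigma(j)\}-\sum_{i<j}(1-p_{i,j})\mathbb{I}\{\sigma(i)<\sigma(j)\}$. For each unordered pair $\{i,j\}$ with $i<j$, exactly one of the two indicators is $1$, so the contribution of that pair to the subtracted sum is $p_{i,j}$ if $\sigma(i)>\sigma(j)$ and $1-p_{i,j}$ if $\sigma(i)<\sigma(j)$; in either case this contribution equals $\frac{1}{2}+\bigl(p_{i,j}-\frac12\bigr)\cdot\mathrm{sgn}(\sigma(i)-\sigma(j))$. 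Hence $D_P(\sigma)=\sum_{i<j}\bigl\{\frac12-(p_{i,j}-\tfrac12)\,\mathrm{sgn}(\sigma(i)-\sigma(j))\bigr\}$. From this single identity everything in (i)--(iii) follows by choosing, pairwise, the sign that maximizes or minimizes each term.

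For (i): each summand is at most $\frac12+|p_{i,j}-\frac12|$, attained when $\mathrm{sgn}(\sigma(i)-\sigma(j))=-\mathrm{sgn}(p_{i,j}-\frac12)$, i.e. $(\sigma(j)-\sigma(i))(p_{i,j}-\frac12)>0$; summing the upper bounds gives $D_P^*=\sum_{i<j}\{\frac12+|p_{i,j}-\frac12|\}$. The key point is that this pairwise-optimal sign pattern is simultaneously \emph{realizable} by a genuine permutation: here I would invoke stochastic transitivity of $P$, which makes the relation ``$i$ preferred to $j$ iff $p_{i,j}>\frac12$'' a (strict on the pairs where $p_{i,j}\neq\frac12$) total preorder, hence extendable to a linear order, hence induced by some $\sigma$. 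So the bound is tight and the maximizers are exactly the permutations respecting that preorder, which is the stated characterization. Part (ii) is the mirror image: minimize each term by taking the opposite sign, giving $\underline{D}_P=\sum_{i<j}\{\frac12-|p_{i,j}-\frac12|\}$ and the least-deep permutations as those with $(\sigma(j)-\sigma(i))(p_{i,j}-\frac12)<0$; realizability again follows from stochastic transitivity (reverse the linear order).

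For (iii), under SST there are no ties among the pairwise probabilities' sides, so the total preorder above is a genuine total order and its linear extension is unique; thus there is a unique deepest $\sigma^*$ and a unique least-deep $\underline\sigma$, and I would identify $\sigma^*$ explicitly by noting that $\sigma^*(i)$ must equal one plus the number of items ranked ahead of $i$, i.e. $\sigma^*(i)=1+\sum_{j\neq i}\mathbb{I}\{p_{i,j}<\frac12\}$, with $\underline\sigma(i)=n+1-\sigma^*(i)$ by the reversal. The two displayed identities for $D_P^*-D_P(\sigma)$ come directly from the master formula: subtracting $D_P(\sigma)=\sum_{i<j}\{\frac12-(p_{i,j}-\frac12)\mathrm{sgn}(\sigma(i)-\sigma(j))\}$ from $D_P^*$ gives $\sum_{i<j}|p_{i,j}-\frac12|(1+\mathrm{sgn}(\ldots)\,\mathrm{sgn}(p_{i,j}-\frac12))=2\sum_{i<j}|p_{i,j}-\frac12|\,\mathbb{I}\{(\sigma(j)-\sigma(i))(p_{i,j}-\frac12)<0\}$, and combining with the analogous expansion of $D_P(\sigma)-\underline D_P$ yields the ``middle'' expression $2\sum_{i<j}|p_{i,j}-\frac12|+D_P(\sigma)-\underline D_P$ after using $D_P^*-\underline D_P=2\sum_{i<j}|p_{i,j}-\frac12|$.

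The only genuine obstacle is the \emph{realizability} step — arguing that the pairwise-optimal (or pessimal) sign assignment is induced by an actual permutation — and this is precisely where stochastic transitivity is needed and used; everything else is the elementary per-pair bookkeeping sketched above. I would also take care, in (iii), to phrase the ``linear extension is unique'' argument cleanly under SST (no indifferent pairs), since that is what upgrades ``set of deepest rankings'' to ``$\{\sigma^*\}$''.
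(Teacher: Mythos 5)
Your argument is correct, but it takes a genuinely different route from the paper. The paper's own proof is a two-line reduction: it notes $D_P=\binom{n}{2}-L_P$ together with the reversal identity $L_P(\sigma)+L_P(\bar\sigma)=\binom{n}{2}$ for Kendall $\tau$, and then simply invokes Theorem 5 of \cite{CKS17}, which already contains the optimality and uniqueness statements for Kemeny medians under (S)ST. You instead give a self-contained derivation from Proposition~\ref{prop:depth_kendall}: the per-pair rewriting $D_P(\sigma)=\sum_{i<j}\bigl\{\tfrac12-(p_{i,j}-\tfrac12)\,\mathrm{sgn}(\sigma(i)-\sigma(j))\bigr\}$, pointwise optimization of each summand, and---the one step that truly needs a hypothesis---realizability of the pairwise-optimal sign pattern, which you get by observing that weak stochastic transitivity makes the relation $p_{i,j}\ge 1/2$ a total preorder, so its strict part is acyclic and extends to a linear order induced by some permutation; SST then removes ties, makes the extension unique, and gives $\sigma^*(i)=1+\sum_{j\ne i}\mathbb{I}\{p_{i,j}<1/2\}$ with $\underline{\sigma}$ its reversal. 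Your route buys transparency about exactly where ST versus SST is used and keeps the proof internal to the paper; the paper's route buys brevity by outsourcing precisely that realizability/uniqueness content to the cited theorem.

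Two small points, both traceable to typos in the statement rather than to gaps in your argument. First, the correct relation between the deepest and least deep rankings is $\underline{\sigma}(i)=n+1-\sigma^*(i)$, which is what you write; the printed $\sigma^*(i)=n-\underline{\sigma}(i)$ is off by one. Second, your own bookkeeping yields $D_P^*-D_P(\sigma)=2\sum_{i<j}\vert p_{i,j}-1/2\vert-\bigl(D_P(\sigma)-\underline{D}_P\bigr)$, because $D_P(\sigma)-\underline{D}_P=2\sum_{i<j}\vert p_{i,j}-1/2\vert\,\mathbb{I}\{(\sigma(j)-\sigma(i))(p_{i,j}-1/2)>0\}$; the middle expression with a plus sign, which you reproduce from the statement, cannot hold as an identity (it would force $D_P(\sigma)=\underline{D}_P$ for every $\sigma$). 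So you should state the corrected sign rather than claim your expansion delivers the printed one; with that fixed, the chain of equalities in (iii) is exactly what your master formula gives.
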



\section{Statistical Issues}

 The ranking depth $D_P$ is generally unknown, just like the ranking distribution $P$, and must be replaced by an empirical estimate based on supposedly available ranking data in practice. Here we establish nonasymptotic statistical guarantees for the empirical counterpart of the ranking depth and other related quantities. We also propose an algorithm, based on the ranking depth, that permits to build, from any ranking dataset, an empirical ranking distribution fulfilling the crucial (strict) stochastic transitivity property, see subsection \ref{subsec:depth_satisfy_prop}.

\subsection{Generalization - Learning Rate Bounds}
\label{subsec:stat_learning_depth}

Based on the observation of an i.i.d. sample $\Sigma_1,\; \ldots,\; \Sigma_N$ drawn from $P$ with $N\geq 1$, statistical versions of the quantities introduced in subsection \ref{subsec:depth_def} can be built by replacing $P$ with the empirical distribution $\widehat{P}_N$. The empirical ranking depth is thus given by: $\forall \sigma\in \mathfrak{S}_n$, $\widehat{D}_N(\sigma)=D_{\widehat{P}_N}(\sigma)=\vert\vert d\vert\vert_{\infty} -\widehat{L}_N(\sigma)$. 
Similarly, the empirical ranking depth regions are
$\widehat{\mathcal{R}}_N(u)=\{\sigma\in \mathfrak{S}_n:  \widehat{D}_N(\sigma)\geq u\}$ for $u\geq 0$.
In order to build an estimator of the ranking depth survivor function $S_P(u)$ with a tractable dependence structure, a \textit{2-split} trick can be used, yielding the statistic
\begin{equation*}
\widehat{S}_N(u)=\frac{1}{N-\lfloor N/2\rfloor}\sum_{i=1+\lfloor N/2\rfloor}^N\mathbb{I}\{\widehat{D}_{\lfloor N/2\rfloor}(\Sigma_i)\geq u\}.
\end{equation*}

As the r.v. $D_P(\Sigma)$ is discrete, the use of smoothing/interpolation procedures is required to ensure good statistical properties for the survivor function estimator and for the empirical quantiles it defines, see \cite{Sheather90, Ma11}. For instance, a kernel smoothed version of $S_P$ can be computed by means of a non-negative differentiable Parzen-Rosenblatt kernel $K:\mathbb{R}\to \mathbb{R}_+$ s.t. $\vert\vert K'\vert\vert_{\infty}=\sup_{u\in \mathbb{R}}\vert K'(u)\vert <\infty$ and $\int_{\mathbb{R}}K(u)du=+1$ and a smoothing bandwidth $h>0$, namely: $\widetilde{S}_P(u)=K_h*S_P$, which can be estimated by $\widetilde{S}_N(u)=K_h*\widehat{S}_N$, where $K_h(u)=K(u/h)/h$ for $u\in \mathbb{R}$.
One may then define a smooth estimate of the ranking depth region with probability content $\alpha\in [0,1]$ as well:
$\widehat{R}_N(\alpha)=\widehat{\mathcal{R}}_N(\widetilde{S}_N^{-1}(\alpha))$.
The result below provides bounds of order $O_{\mathbb{P}}(1/\sqrt{N})$ for the maximal deviations between $D_P$ (resp. $\widetilde{S}_P$) and its empirical version.

\begin{restatable}{proposition}{propsample}
\label{prop:sample} The following assertions hold true.
\begin{itemize}[leftmargin=4mm]
\item[(i)] For any $\delta\in (0,1)$, we have with probability at least $1-\delta$: $\forall N\geq 1$,
\begin{equation*}
\sup_{\sigma\in \mathfrak{S}_n}\vert \widehat{D}_N(\sigma)-D_P(\sigma)  \vert \leq \vert\vert d\vert\vert_{\infty}\sqrt{\frac{\log(2\; n!/\delta)}{2N}}.
\end{equation*}
\item[(ii)] For any $\delta\in (0,1)$ and $h>0$, we have with probability at least $1-\delta$: $\forall N\geq 1$,
\begin{equation*}
\hspace*{-0.4cm}\sup_{u\geq 0}\vert \widetilde{S}_N(u)- \widetilde{S}_P(u) \vert  \leq   \sqrt{\frac{\log(4/\delta)}{2N}}+\vert\vert d\vert\vert_{\infty}\sqrt{\frac{\log(4n!/\delta)}{2N}}.
\end{equation*}
\end{itemize}
\end{restatable}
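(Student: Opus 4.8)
Part $(i)$ will follow from Hoeffding's inequality and a union bound over $\mathfrak{S}_n$; part $(ii)$ from a two-term decomposition that exploits the $2$-split. For part $(i)$, note that $\widehat{D}_N(\sigma)-D_P(\sigma)=L_P(\sigma)-\widehat{L}_N(\sigma)$ and that $\widehat{L}_N(\sigma)=N^{-1}\sum_{s=1}^{N}d(\Sigma_s,\sigma)$ is an average of i.i.d. random variables valued in $[0,\|d\|_\infty]$ with mean $L_P(\sigma)$. Hoeffding's inequality gives, for each fixed $\sigma$ and $t>0$, $\mathbb{P}(|\widehat{D}_N(\sigma)-D_P(\sigma)|>t)\le 2\exp(-2Nt^2/\|d\|_\infty^2)$; a union bound over the $n!$ permutations, together with the choice $t=\|d\|_\infty\sqrt{\log(2\,n!/\delta)/(2N)}$, brings the total failure probability down to $\delta$, which is the claim (for each fixed $N$; a genuinely uniform-in-$N$ version would cost an extra $\log N$ factor, obtained by allocating $\delta$ across $N$).

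For part $(ii)$, the $2$-split decouples the estimated depth from the sample on which the survivor function is evaluated. Write $m_1=\lfloor N/2\rfloor$, $m_2=N-m_1$, let $\mathcal{D}_1=(\Sigma_1,\dots,\Sigma_{m_1})$, and set $\mathcal{S}_1(u)=\mathbb{P}(\widehat{D}_{m_1}(\Sigma)\ge u\mid \mathcal{D}_1)$. Conditionally on $\mathcal{D}_1$ the scores $\widehat{D}_{m_1}(\Sigma_i)$, $i>m_1$, are i.i.d. and $\widehat{S}_N$ is exactly their empirical survivor function, so I would write
\[
\widetilde{S}_N-\widetilde{S}_P \;=\; K_h*(\widehat{S}_N-\mathcal{S}_1)\;+\;K_h*(\mathcal{S}_1-S_P)
\]
and treat the two terms separately. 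For the first, the conditional Dvoretzky--Kiefer--Wolfowitz inequality gives $\sup_u|\widehat{S}_N(u)-\mathcal{S}_1(u)|\le\sqrt{\log(4/\delta)/(2m_2)}$ with (conditional, hence unconditional) probability at least $1-\delta/2$; since $K_h\ge 0$ with $\int K_h=1$, convolution by $K_h$ is an $L^\infty$-contraction, so this also bounds $\|K_h*(\widehat{S}_N-\mathcal{S}_1)\|_\infty$, which is of order $\sqrt{\log(4/\delta)/(2N)}$ because $m_2\ge N/2$. For the second term I would invoke part $(i)$ at level $\delta/2$ with sample size $m_1$: on that event $\sup_\sigma|\widehat{D}_{m_1}(\sigma)-D_P(\sigma)|\le\varepsilon:=\|d\|_\infty\sqrt{\log(4\,n!/\delta)/(2m_1)}$, whence $S_P(u+\varepsilon)\le\mathcal{S}_1(u)\le S_P(u-\varepsilon)$ for every $u$; convolving this sandwich with $K_h$ bounds $|K_h*(\mathcal{S}_1-S_P)(u)|$ by $\widetilde{S}_P(u-\varepsilon)-\widetilde{S}_P(u+\varepsilon)$, the modulus of continuity of the \emph{smoothed} survivor at scale $\varepsilon$. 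Combining the two pieces with a union bound over the two events yields a bound of the announced form, of order $O_{\mathbb{P}}(1/\sqrt{N})$.

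The main obstacle is this last, plug-in/bias term. Because $D_P(\Sigma)$ takes at most $n!$ values, $S_P$ is genuinely a step function and $\widehat{S}_N-S_P$ is \emph{not} uniformly $O(1/\sqrt N)$: an arbitrarily small uniform error in $\widehat{D}_{m_1}$ can still carry a whole atom of the depth distribution across a level $u$. The kernel smoothing $\widetilde{S}_P=K_h*S_P$ is exactly what repairs this, and the delicate point is to bound $\widetilde{S}_P(u-\varepsilon)-\widetilde{S}_P(u+\varepsilon)$ through the regularity of $\widetilde{S}_P$ (here $\|K'\|_\infty<\infty$ together with $\int K=1$ forces $K$, and hence $K_h$, to be bounded and Lipschitz) and to track precisely how the bandwidth $h$ and $\|K'\|_\infty$ enter, so as to reconcile it with the clean form of the stated bound. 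The first term, by contrast, is essentially immediate once the $2$-split has decoupled the scoring function from the evaluation sample, since Dvoretzky--Kiefer--Wolfowitz handles the empirical survivor of i.i.d. real observations directly and convolution with a probability density does not inflate the supremum norm.
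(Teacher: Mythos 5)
Your proposal follows essentially the same route as the paper: Hoeffding's inequality plus a union bound over $\mathfrak{S}_n$ for $(i)$, and for $(ii)$ the same two-term split around the conditional survivor $\bar{S}_P(u)=\mathbb{P}_{\Sigma}\{\widehat{D}_{\lfloor N/2\rfloor}(\Sigma)\geq u\}$, with conditional Dvoretzky--Kiefer--Wolfowitz and the $L^\infty$-contraction of convolution by $K_h$ for the first term, and the event of part $(i)$ combined with the smoothness of $K_h*S_P$ for the bias term. The ``delicate point'' you flag is genuine but is equally present in the paper's own proof, which bounds that bias term by $\vert\vert K'\vert\vert_{\infty}\,s/h$ and then silently drops the $\vert\vert K'\vert\vert_{\infty}/h$ factor in the final stated inequality.
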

For the technical proof, refer to the Supplementary Material, where the asymptotic rate for the empirical ranking quantile function is also given.

\subsection{Depth Trimming for Consensus Ranking}
\label{subsec:trimming}

As discussed in subsection \ref{subsec:depth_satisfy_prop}, (strict) stochastic transitivity greatly facilitates the computation of Kemeny medians (see Proposition \ref{prop:Kendall}) as well as the verification of the maximality or monotonicity properties, \textit{cf} Propositions \ref{prop:maximality}, \ref{prop:local_monotonicity} and \ref{prop:global_monotonicity}. However, although this occurs with a controlled probability (see Proposition 14 in \cite{CKS17}), the empirical counterpart $\widehat{P}_N$ of a (strictly) stochastically transitive ranking distribution $P$ can be of course non (S)ST. 
We propose below a trimming strategy based on the empirical ranking depth to recover a close (S)ST empirical ranking distribution and overcome this issue.

\begin{restatable}{algorithm}{algotrimming}
\SetKwInOut{Input}{Input}
\SetKwInOut{Output}{Output}
\Input{Ranking dataset $\mathcal{D}_N=\{\Sigma_1, ... \Sigma_N\}$ and distribution $\widehat{P}_N=(1/N)\sum_{i=1}^N\delta_{\Sigma_i}$.}
\Output{Dataset $\mathcal{D}\subset \mathcal{D}_N$ of size $N_{\mathcal{D}}\leq N$ and (S)ST ranking distribution $\widehat{P}_{\mathcal{D}}=(1/N_{\mathcal{D}})\sum_{\sigma\in \mathcal{D}}\delta_{\sigma}$}

- Initialize: $\mathcal{D} = \mathcal{D}_N$\;
\While{$\hat{P}_{\mathcal{D}}$ is not (S)ST}{
        - Determine the least deep rankings in $\mathcal{D}$: $ \mathcal{O}_{\mathcal{D}}:=\arg \min_{\sigma\in \mathcal{D}} D_{\hat{P}_N}(\sigma)$\;
        - Update the ranking dataset $\mathcal{D}\setminus \mathcal{O}_{\mathcal{D}} \to \mathcal{D}$
}
\caption{Ranking Depth Trimming}
\label{algo:trimming}
\end{restatable}

Based on the ranking dataset $\mathcal{D}$ output by Algorithm \ref{algo:trimming}, a (S)ST empirical distribution $\widehat{P}_{\mathcal{D}}$ can be computed, whose Kemeny medians are obtained in a straightforward manner, \textit{cf} Proposition \ref{prop:Kendall}, avoiding the search of solutions of a NP-hard minimization problem of type \eqref{eq:consensus}, see \cite{Hudry08}. As empirically supported by the experiments displayed in the next section, this procedure allows for a fast, accurate and robust recovery of consensus rankings.
\section{Applications - Experiments}\label{sec:exp}

In order to illustrate the relevance of ranking depth notion, we now show that it can be used to perform a wide variety of tasks in the statistical analysis of ranking data, including those listed below:
\begin{itemize}[leftmargin=4mm, topsep=0mm, noitemsep]
    \item Fast and robust consensus ranking
    \item Ranking data visualization
    \item Detection of outlying rankings
    \item The two-sample (homogeneity) problem in $\mathfrak{S}_n$.
\end{itemize}
Further experimental results on real ranking data are provided and discussed in the Supplementary Material.

\subsection{Fast/Robust Consensus Ranking}

The trimming strategy proposed in sec \ref{subsec:trimming} shows that we can recover smooth SST distributions from any empirical data, and perform ranking aggregation by simply identifying the deepest ranking: this procedure is fast, straightforward, and robust, in the sense that we can recover accurate medians even in contaminated settings. We support this claim by both experiments and a theoretical proposition below.




\begin{figure}[!h]
    \begin{center}
    \begin{tabular}{cc}
        {\scriptsize (a)} & {\scriptsize (b)} \\
        \includegraphics[scale=0.28,clip=true,page=1]{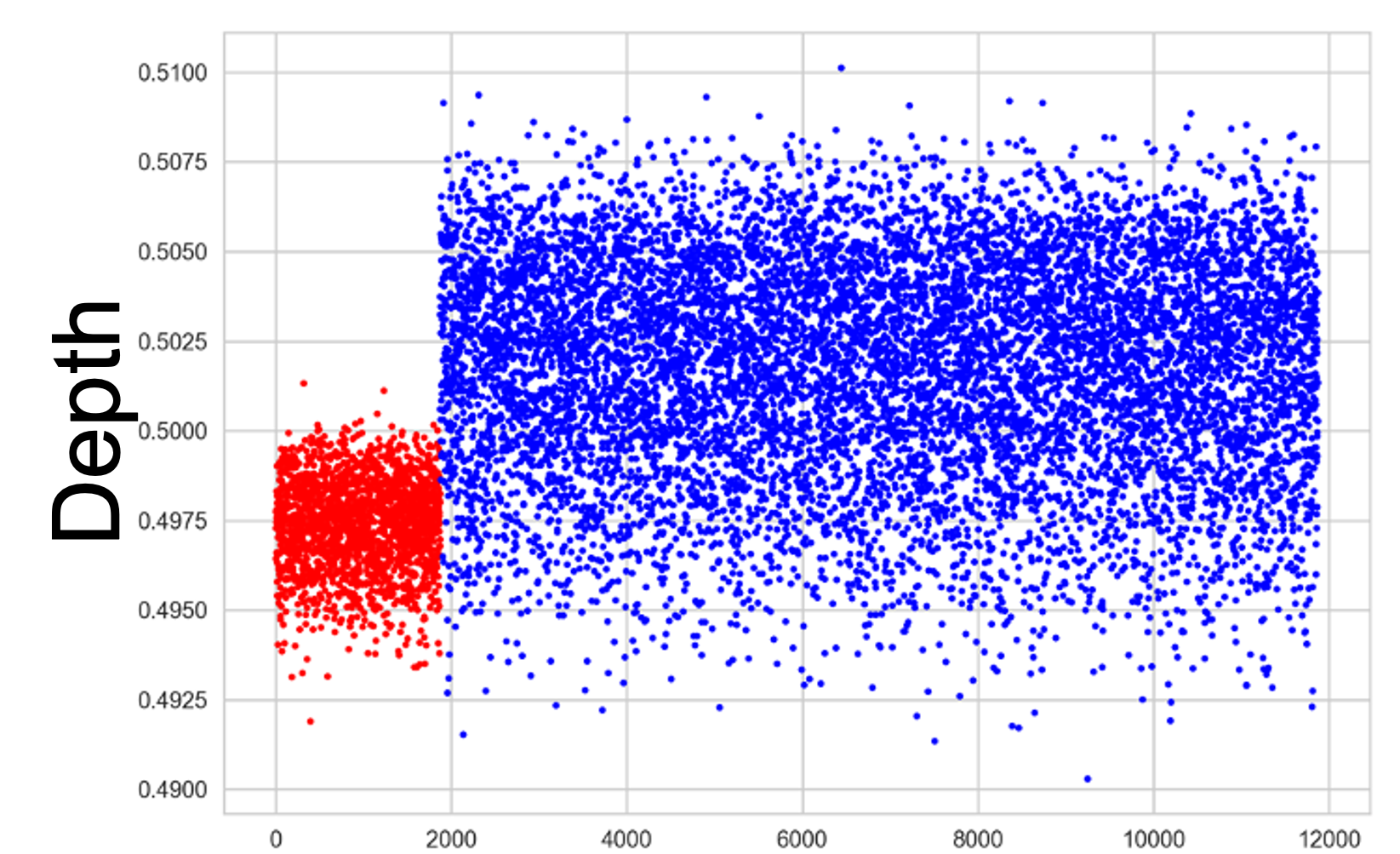} & \includegraphics[scale=0.28,clip=true,page=1]{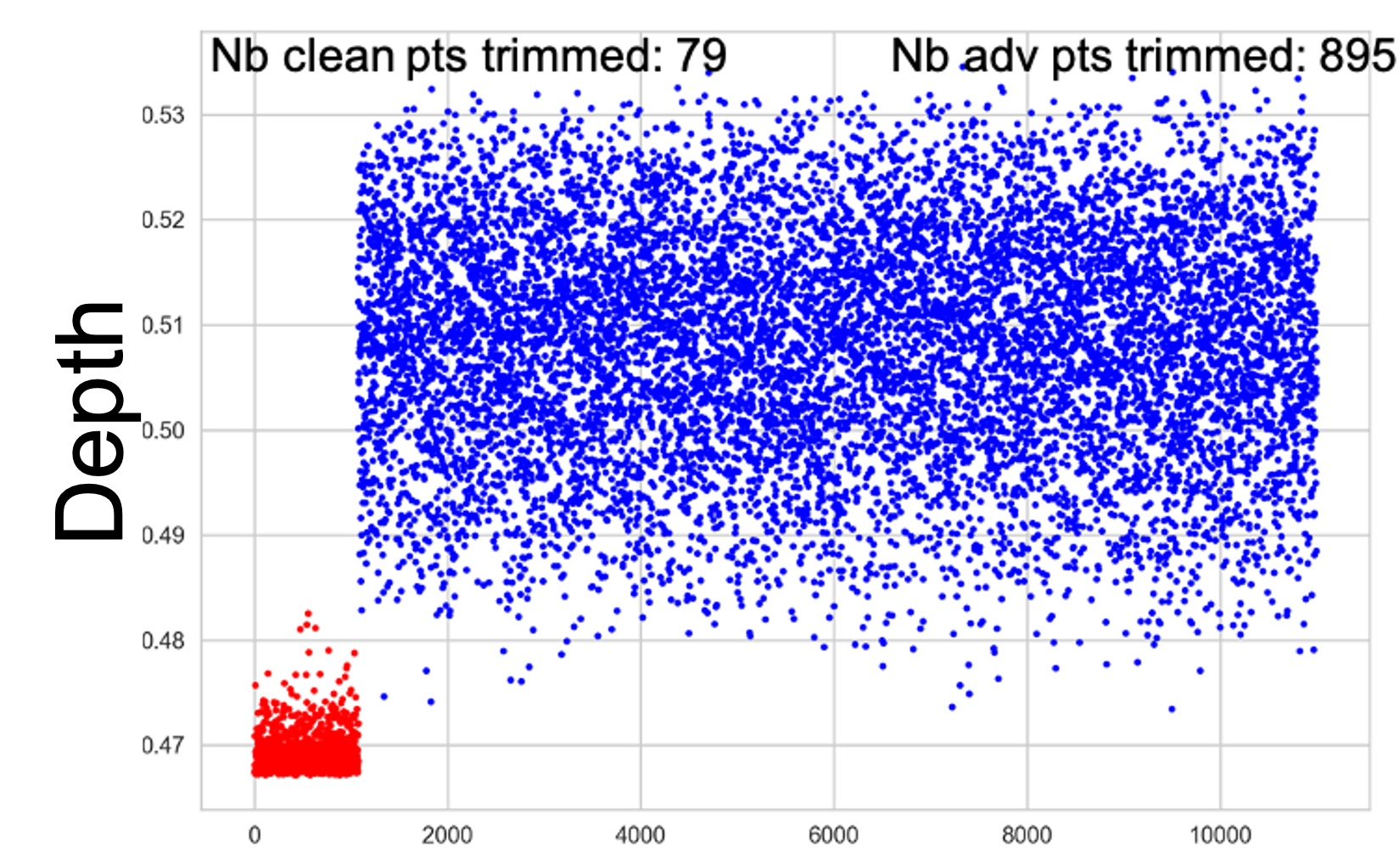} \\
        {\scriptsize (c)} & {\scriptsize (d)} \\
        \includegraphics[scale=0.28,clip=true,page=1]{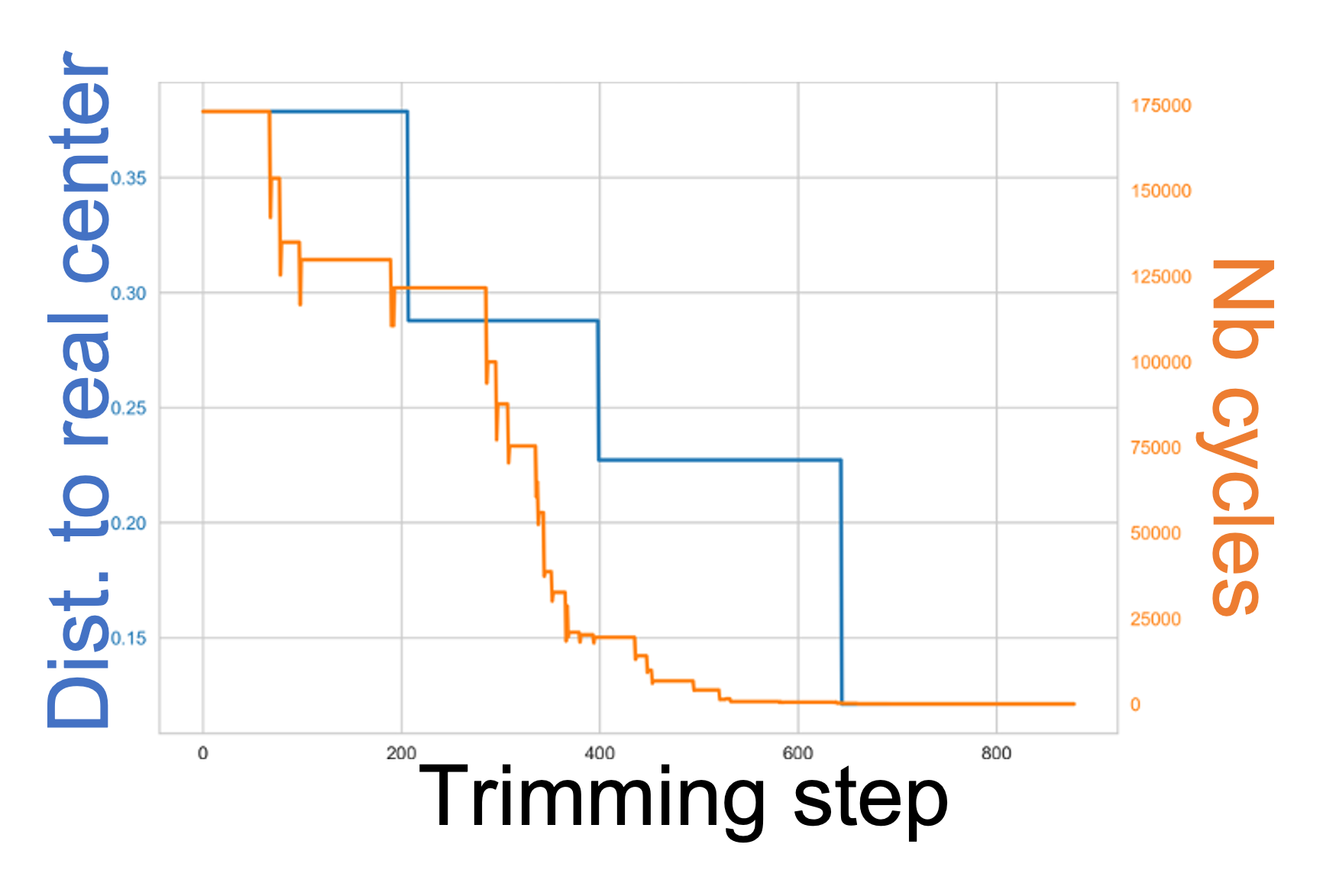} & \includegraphics[scale=0.28,clip=true,page=1]{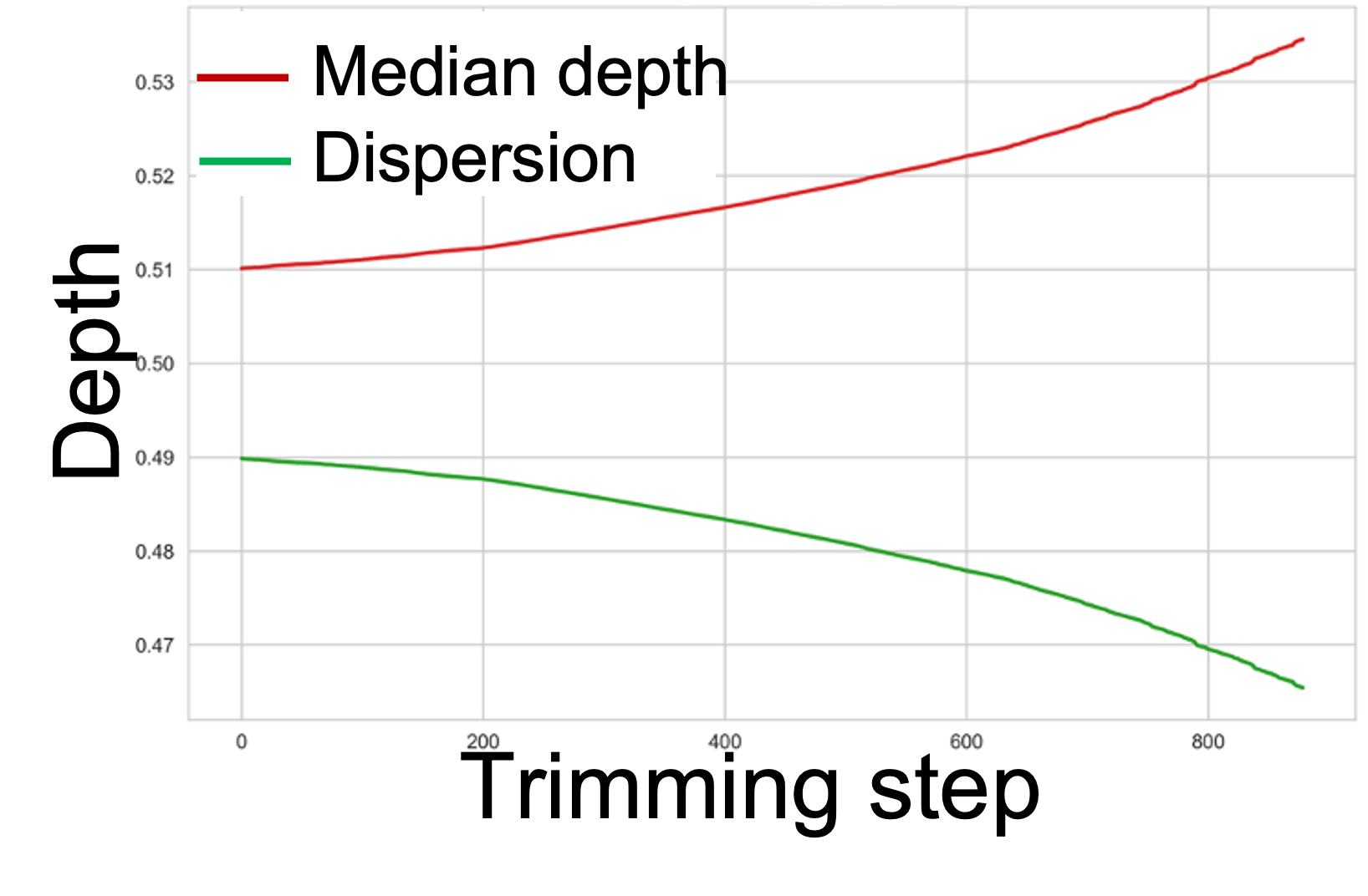} \\
    \end{tabular}
    \caption{Depth plots before (a) and after (b) trimming with adversarial (red) and clean (blue) points; evolution of candidate median (deepest ranking) distance to real median and number of cycles through trimming (c); evolution of median depth and sample dispersion through trimming (d).}
    \label{fig:trimming_plot}
    \end{center}
\end{figure}
We consider a dataset drawn from a "clean" distribution $P$ (10000 points drawn from a Mallows distribution with $n=12$ items, center $\sigma_0$ and $\phi=0.90$) that has been contaminated by rankings from another distribution (2000 points drawn from a Mallows distribution with opposite center and $\phi=0.40$). We use the trimming strategy described in algorithm \ref{algo:trimming} to remove rankings until the empirical distribution becomes SST and thus considered clean once again. We show in Figure \ref{fig:trimming_plot} the depth of clean (blue) and adversarial (red) rankings before trimming (a) and after trimming (b), the performance of the median computed at each step of the trimming procedure evaluated as its Kendall $\tau$ distance to the real center of the clean Mallows distribution (c), and the depth of the median during the trimming procedure (d). The depth function is able to identify mainly adversarial rankings and remove them during the trimming procedure, which conducts to a cleaner dataset after the procedure and a far more accurate median $\sigma^*$.

\paragraph{Mechanical Turk Dots dataset.} We show the robustness of depth-based medians on a real dataset where participants ranked point clouds according to their size \cite{mao2013better}. A ground truth ranking exists, and we contaminated 1/4 of the dataset by swapping random rankings before trimming: figure \ref{fig:mecha_turk_plot} (b) shows that we indeed recovered the ground truth ranking after the trimming strategy even if contaminated rankings were not obviously different from clean one (fig. \ref{fig:mecha_turk_plot} (a)).

\begin{figure}[!h]
    \begin{center}
    \begin{tabular}{cc}
        {\scriptsize (a)} & {\scriptsize (b)} \\
        \includegraphics[scale=0.225,clip=true,page=1]{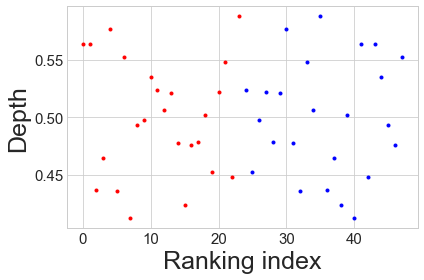} & \includegraphics[scale=0.15,clip=true,page=1]{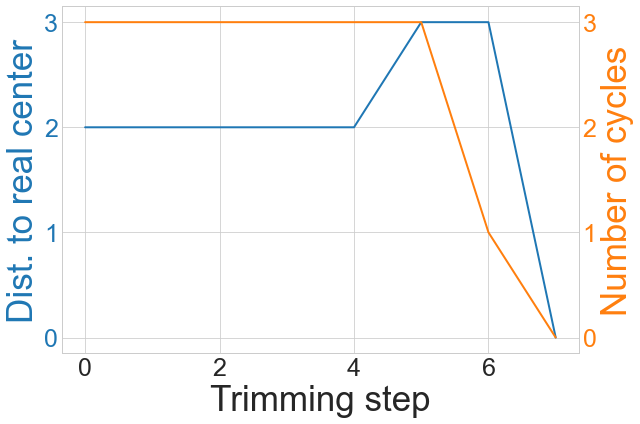}
    \end{tabular}
    \caption{Depth plots before trimming with swapped (red) and clean (blue) points; evolution of candidate median (deepest ranking) distance to real median and number of cycles through trimming (b)}
    \label{fig:mecha_turk_plot}
    \end{center}
\end{figure}

\paragraph{Theoretical robustness result.} We derive specific robustness results when using depth-based trimming by emulating the classical notion of breakdown point (see \cite{DonohoG92}). Let us consider the classical Borda estimator (which orders the items based on the score $B(i) = \sum_{\sigma \in S_N} \sigma(i)$, see \cite{Dwork:2001:RAM:371920.372165, Fligner1988, Caragiannis2013, collas21}) and a \textit{depth-trimmed} Borda estimator based on the scores $B_{\mu}(i) = \sum_{\sigma \in S_N} w(\sigma)\sigma(i)$, where $w(\sigma) = \mathbb{I}(D_N(\sigma) > \mu)$ (only the rankings with depth higher than $\mu$ are kept). Let $\sigma_{S}^{\text{B}}$ (resp. $\sigma_{S}^{\text{DT-B}}$) be the Borda (resp. depth-trimmed Borda) estimator of a sample $S$. The Borda estimator is said to be $\delta$-broken for sample size $N$ and for a distribution $P$ if for any sample $S_N \sim P$ of size $N$, there exists an adversarial sample $A$ such that $d_{\tau}( \sigma_{S_N}^{\text{B}}, \sigma_{S_N \cup A}^{\text{B}} ) \geq \delta$. The smallest cardinality of the adversarial sample $A$ such that the estimator is $\delta$-broken for size $N \to \infty$ is called here the $\delta$-\textit{breakdown points} of the estimator on distribution $P$, and we write $\epsilon^{\text{B}}_{\delta}(P)$ (resp. $\epsilon^{\text{DT-B}}_{\delta}(P)$) such statistic for the Borda (resp. depth-trimmed Borda) estimator. Breakdown points measure the robustness of an estimator on a given distribution: we state that the classical Borda estimator is less robust than the depth-trimmed one on generic distributions. 


\begin{restatable}{proposition}{propborda}
\label{prop:borda_breakdown_ratio}
Let $\mu$ be the trimming threshold and $P$ a distribution such that $\mathbb{E}_{P} [ D_P(\Sigma) ] > \mu$. Let $\sigma^* = \arg \max_{\sigma \in \mathfrak{S}_n} D_P(\sigma)$ be the deepest ranking and $\pi = \arg\max_{\sigma | d_{\tau}(\sigma^*, \sigma) = \delta} D(\sigma)$ the ranking with highest depth among those at distance $\delta$ from the deepest ranking $\sigma^*$. Then, the breakdown points for Borda and depth-trimmed-Borda on $P$ are related as follows,
\begin{equation}
\begin{split}
\frac{\epsilon^{\text{B}}_{\delta}(P)}{\epsilon^{\text{DT-B}}_{\delta}(P)} 
< \frac{ D_{P}( \pi ) }{ \mu } < 1.
\end{split}
\end{equation}
\end{restatable}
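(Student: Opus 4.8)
The plan is to bound each breakdown point separately and then compare. First I would analyze $\epsilon^{\text{B}}_{\delta}(P)$, the breakdown point of the vanilla Borda estimator. The Borda score is linear in the sample: adding an adversarial sample $A$ of size $M$ shifts the aggregate score $B(i) = \sum_{\sigma \in S_N}\sigma(i)$ by $\sum_{\sigma \in A}\sigma(i)$. To move the Borda ranking to some target $\pi$ at distance $\delta$ from $\sigma^*$, the adversary should inject copies of $\pi$ (or rankings pushing towards $\pi$). As $N\to\infty$ the empirical score per item concentrates at $\mathbb{E}_P[\Sigma(i)]$, so the adversary needs roughly $M \propto N$ copies to overturn a gap that itself scales with $N$; the precise constant is governed by how much "mass" of the clean distribution sits near $\sigma^*$ versus near $\pi$, which is exactly what $D_P(\sigma^*)$ and $D_P(\pi)$ encode via $D_P = \binom{n}{2} - L_P$. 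So $\epsilon^{\text{B}}_{\delta}(P)$ will come out proportional to $N$ times a quantity tied to $D_P(\pi)$ (the adversary's "budget efficiency" is limited because the target $\pi$ is itself somewhat deep, i.e. somewhat close to the bulk of $P$).

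Next I would analyze $\epsilon^{\text{DT-B}}_{\delta}(P)$. Here the weight $w(\sigma) = \mathbb{I}(D_N(\sigma) > \mu)$ kills any adversarial ranking whose depth falls at or below $\mu$. Crucially, under the hypothesis $\mathbb{E}_P[D_P(\Sigma)] > \mu$, a positive fraction of the clean sample has depth above $\mu$ and survives, while an adversary wanting to drag the aggregate toward $\pi$ must use rankings that are \emph{kept}, i.e. with depth $> \mu$ with respect to the \emph{contaminated} empirical distribution. Two effects help the depth-trimmed estimator: (a) the adversary's rankings that are far from $\sigma^*$ tend to have low depth under the (still mostly clean) $\widehat{P}_N$ and get discarded, so only a $(D_P(\pi)/\,\cdot\,)$-type fraction of any injected mass is effective; (b) to even change which rankings pass the threshold, the adversary must first inject enough mass to move the depth function itself, which by Proposition~\ref{prop:sample} requires $\Omega(\sqrt{N})$ — but more to the point, moving the \emph{center} of the depth function requires $\Omega(N)$ mass with effective weight. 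Collecting these, $\epsilon^{\text{DT-B}}_{\delta}(P)$ will be proportional to $N$ times $\mu$ (the threshold sets the minimum "effective depth" an adversarial ranking must carry to count). Taking the ratio, the $N$'s cancel and the leftover is $D_P(\pi)/\mu$ for the left inequality; the right inequality $D_P(\pi)/\mu < 1$ is then immediate from $D_P(\pi) < D_P(\sigma^*)$ combined with the fact that $\pi$, being at distance $\delta \geq 1$ from the deepest ranking, has depth below the mean-type threshold — more carefully, $D_P(\pi) \le \mathbb{E}_P[D_P(\Sigma)]$ need not hold in general, so I would instead argue $D_P(\pi) < \mu$ directly: since $\sigma^*$ is the unique deepest ranking and $\pi$ is the deepest \emph{among} those at distance $\delta$, local monotonicity (Proposition~\ref{prop:local_monotonicity}) forces $D_P(\pi) < D_P(\sigma^*)$, and the trimming regime of interest is precisely the one where $\mu$ is chosen so that $\pi$ is trimmed, i.e. $D_P(\pi) < \mu < D_P(\sigma^*)$; this is consistent with $\mathbb{E}_P[D_P(\Sigma)] > \mu$ whenever the distribution is concentrated near $\sigma^*$.

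The main obstacle I anticipate is making the asymptotic ($N\to\infty$) breakdown-point computation rigorous while keeping the adversary fully general — the definition quantifies over \emph{any} adversarial sample $A$, not just injections of a single ranking, so I must argue that the worst-case adversary is (asymptotically) the one concentrating all its mass on $\pi$, and that mixed strategies cannot do better per unit of sample. This requires a convexity/linearity argument for Borda and a monotonicity argument for the depth-trimming step (adding a ranking can only raise its own contribution and can only lower the relative depth of others in a controlled way). A secondary subtlety is that $D_N$ in the weight is the \emph{empirical} depth on the contaminated sample, so the trimming threshold's effect is itself a moving target; I would handle this by passing to the $N\to\infty$ limit first (where $D_{\widehat{P}_N} \to D_P$ uniformly by Proposition~\ref{prop:sample}(i)) and then reasoning about the limiting population-level trimmed Borda functional, so that the breakdown ratio is computed for the limiting objects and the stated strict inequalities emerge with no hidden lower-order terms.
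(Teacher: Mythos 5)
There is a genuine gap: your plan never produces the quantitative lemma on which the result actually rests. The paper's proof does not estimate the two breakdown points separately and then divide; it writes the $\delta$-break condition for (weighted) Borda as $\delta$ pairwise score inequalities $\sum_{\sigma\in S_N} w(\sigma)(\sigma(j)-\sigma(i)) \le \sum_{\sigma\in A} w(\sigma)(\sigma(i)-\sigma(j))$, bounds every surviving adversarial ranking's weight from above by the maximal weight an adversarial ranking can carry (this is where $D_P(\pi)$, resp.\ $\mu$, enters), and bounds the clean side from below by $N\,\mathbb{E}_{\widehat{P}_N}[w(\Sigma)]$ using an association-type condition between $w(\Sigma)$ and the rank gaps. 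This yields the cardinality comparison $\# A^-_w \ge \bigl(N^{-1}\sum_{\sigma\in S_N} w(\sigma)/\mu\bigr)\,\# A^-$ for the \emph{same} sample $S_N$, and the stated ratio then follows by the law of large numbers, with the right-hand inequality coming directly from the hypothesis $\mathbb{E}_P[D_P(\Sigma)] > \mu$ (via the general bound ratio $< w(\pi)/\mathbb{E}_P[w(\Sigma)]$). Your sketch instead asserts that $\epsilon^{\text{B}}_\delta(P)$ "comes out proportional to $N$ times a quantity tied to $D_P(\pi)$" and $\epsilon^{\text{DT-B}}_\delta(P)$ to "$N$ times $\mu$": this misassigns the roles of the two quantities. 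In the actual argument $D_P(\pi)$ has nothing to do with the vanilla Borda breakdown point (which is characterized purely through average rank gaps of clean vs.\ adversarial samples, with no depth appearing); it is the cap on the per-ranking efficiency of the adversary against the \emph{trimmed/weighted} estimator. With your assignment, taking the ratio would not produce $D_P(\pi)/\mu$, and no derivation is offered that would.

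Two further points. First, for the inequality $D_P(\pi)/\mu<1$ you explicitly fall back on assuming $D_P(\pi)<\mu$ ("the trimming regime of interest"), which is an added hypothesis not present in the statement; the paper obtains strict inequality from the stated condition $\mathbb{E}_P[D_P(\Sigma)]>\mu$ through the weighted comparison, not by postulating that $\pi$ is trimmed. Second, the obstacle you flag yourself — showing the worst-case adversary concentrates its mass on $\pi$ — is never resolved in your plan, and the convexity/monotonicity argument you gesture at is not supplied; the paper's proof sidesteps this entirely by reasoning about an arbitrary minimal-cardinality $\delta$-breaking sample through its average ranks, so no claim about the adversary's optimal structure is needed. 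Likewise, the appeal to Proposition 8 (uniform convergence of empirical depth, the "$\Omega(\sqrt{N})$ to move the depth function" remark) plays no role in the paper's argument and does not substitute for the missing cardinality comparison.
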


The proof, as well as more results on the robustness of Borda estimators, are provided in section \ref{suppl:proof_borda} of the supplementary. 

\subsection{Graphical Methods and Visual Inference}

The analysis of rankings suffers from the lack of graphical displays and diagrams, such as \textit{probability plots} or \textit{histograms}, for gaining insight into the structure of the data. Ranking depths can be readily used to design a visual diagnostic tool for ranking data, extending the Depth \emph{vs.} Depth plot  ($DD$-plot in abbreviated form) originally introduced \cite{LiuPS99} for multivariate data. For two samples of rankings $\boldsymbol{\Sigma^1}=\{\sigma^1_1,\; \ldots,\; \sigma^1_{N_1}\}$ and $\boldsymbol{\Sigma^2}=\{\sigma^2_1,\; \ldots,\; \sigma^2_{N_2}\}$, with corresponding empirical measures $\widehat{P^1}_{N_1}$ and $\widehat{P^2}_{N_2}$, the ranking $DD$-plot is obtained by plotting in the Euclidean plane the points:
\begin{equation}\label{eq:ddplot}
	\bigl\{\bigl(D_{\widehat{P^1}_{N_1}}(\sigma),D_{\widehat{P^2}_{N_2}}(\sigma\bigr)\,:\,\sigma\in\boldsymbol{\Sigma^1}\cup\boldsymbol{\Sigma^2}\bigr\}.
\end{equation}

\begin{table}[h!]
	\begin{center}
	\begin{tabular}{lcllll}
	Position & $d_{\tau}(\sigma_1^*,\sigma_2^*)$ & {\color{red}$\phi_1$} & {\color{darkgreen}$\phi_2$} & {\color{red}$N_1$} & {\color{darkgreen}$N_2$} \\ \hline
	(a) & $15$ & {\color{red}$\mathrm{e}^{-1}$} & {\color{darkgreen}$\mathrm{e}^{-1}$} & {\color{red}$250$} & {\color{darkgreen}$250$} \\
	(b) & $0$ & {\color{red}$\mathrm{e}^{-0.5}$} & {\color{darkgreen}$\mathrm{e}^{-2}$} & {\color{red}$250$} & {\color{darkgreen}$250$} \\
	(c) & $15$ & {\color{red}$\mathrm{e}^{-0.5}$} & {\color{darkgreen}$\mathrm{e}^{-2}$} & {\color{red}$250$} & {\color{darkgreen}$250$} \\
	(d) & $15$ & {\color{red}$\mathrm{e}^{-0.5}$} & {\color{darkgreen}$\mathrm{e}^{-2}$} & {\color{red}$400$} & {\color{darkgreen}$100$} \\
	\end{tabular}
	\caption{Parameters for pairs of samples drawn from Mallows-Kendall distribution used for Figure~\ref{fig:ddplotdiag}. }
	\label{tab:ddplotdiag}
	\end{center}
\end{table}

Depending on the distance $d$ chosen, such a plot allows to reflect location and scatter of two distributions on $\mathfrak{S}_n$, and their mutual position. To illustrate its diagnostic capacity, we plot in Figure~\ref{fig:ddplotdiag} the ranking $DD$-plots relative to the Kendall $\tau$ distance and four pairs of samples stemming from Mallows distribution with parameters defined in Table~\ref{tab:ddplotdiag}. (In this and subsequent figures the depth is re-scaled to $[0,1]$ by diving by $\|d\|_\infty$.) A few remarks can be made: For distributions differing in: 1) location only (a), the ranking $DD$-plot is symmetric w.r.t. the diagonal, 2) scatter only (b), observations from one distribution will be attributed systematically higher depth values, 3) both location and scatter (c), they can be distinguished and 4) number of the observations, it does not influence the general picture (d).

\begin{figure}[!h]
	\begin{center}
	\begin{tabular}{cc}
		{\scriptsize (a)} & {\scriptsize (b)} \\
		\includegraphics[scale=0.23,trim=3mm 0.5mm 0.5mm 14mm,clip=true,page=1]{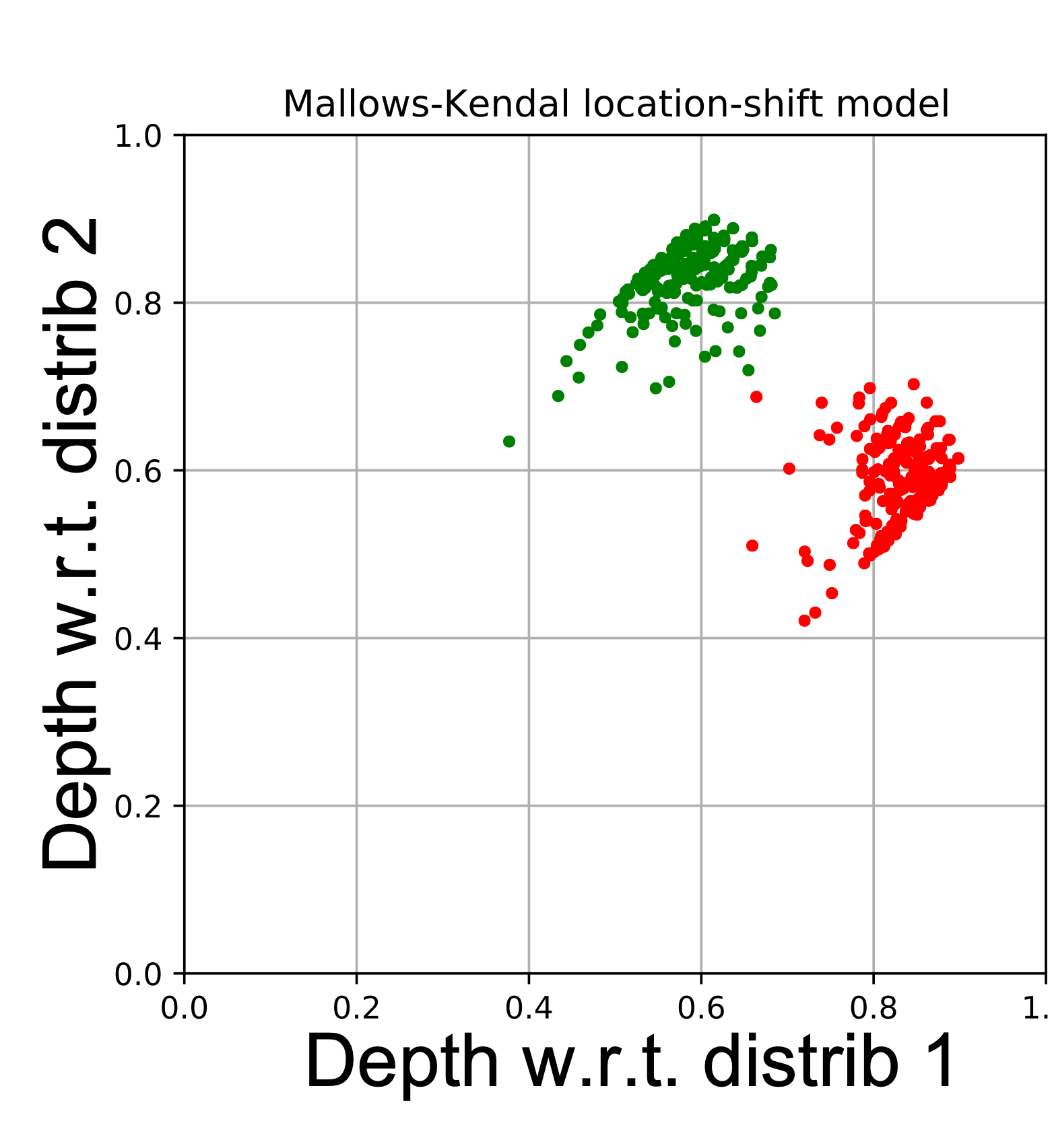} & \includegraphics[scale=0.23,trim=3mm 0.5mm 0.5mm 14mm,clip=true,page=1]{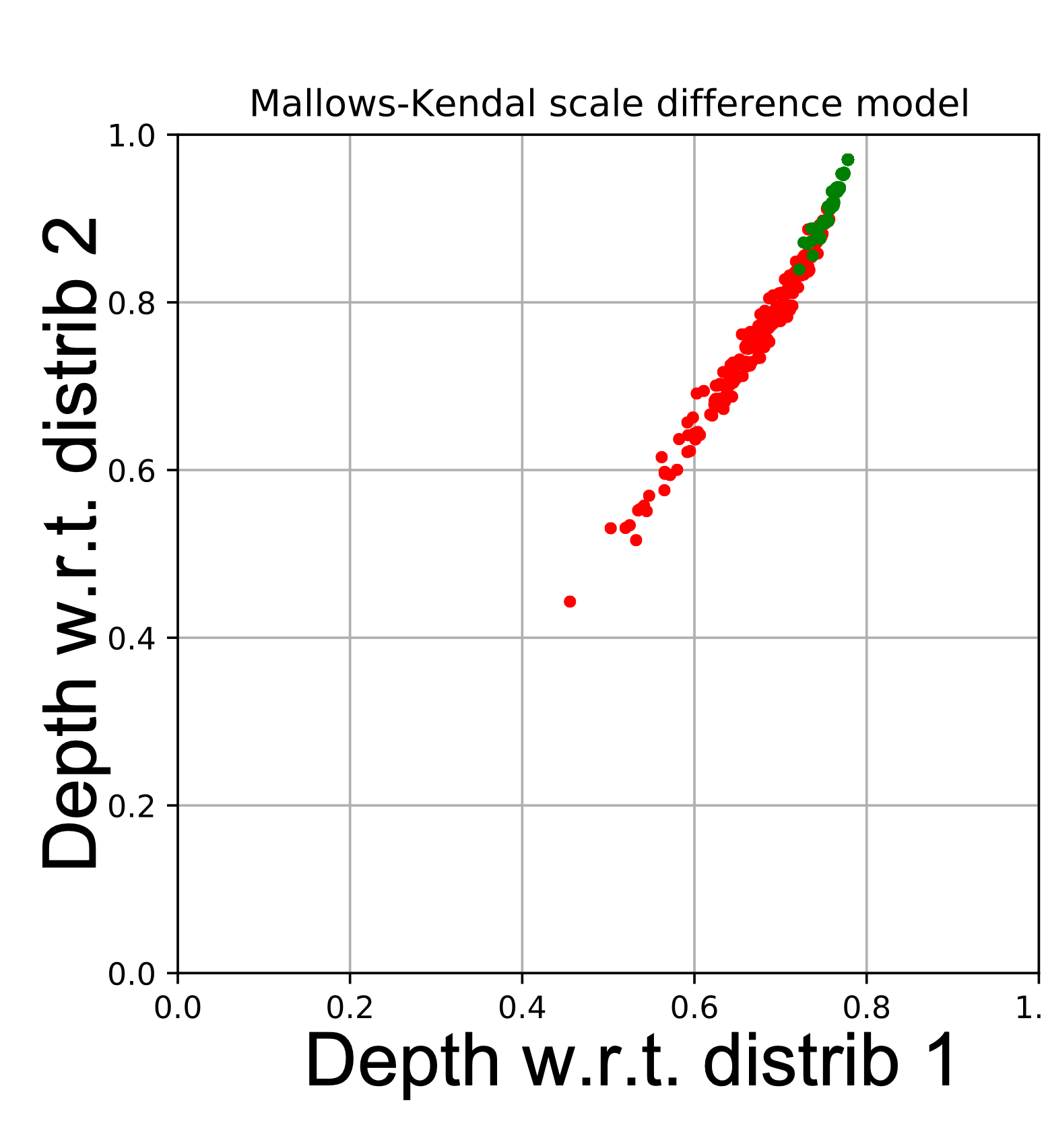}\\
		{\scriptsize (c)} & {\scriptsize (d)} \\
		\includegraphics[scale=0.23,trim=3mm 0.5mm 0.5mm 14mm,clip=true,page=1]{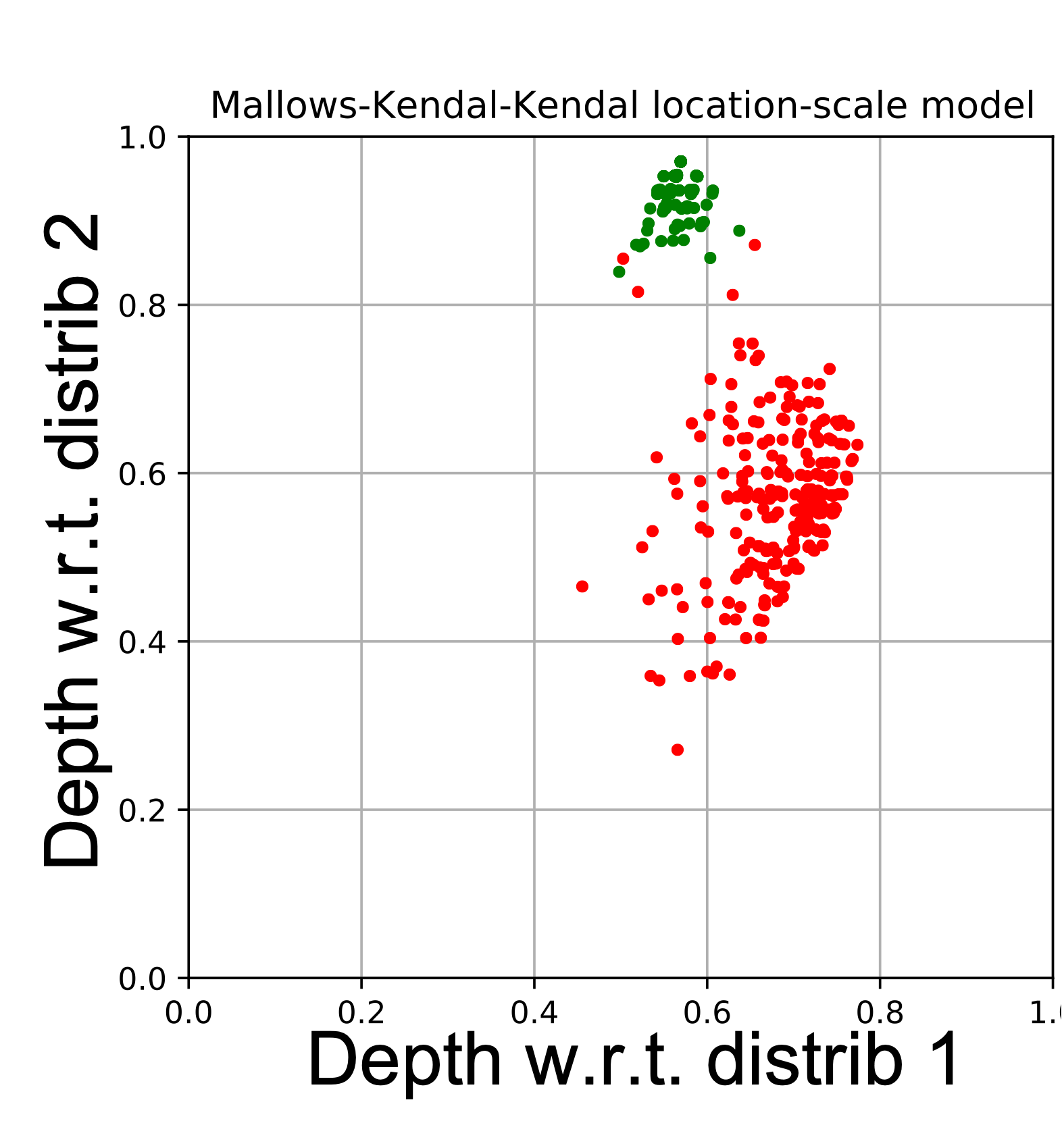} & \includegraphics[scale=0.23,trim=3mm 0.5mm 0.5mm 14mm,clip=true,page=1]{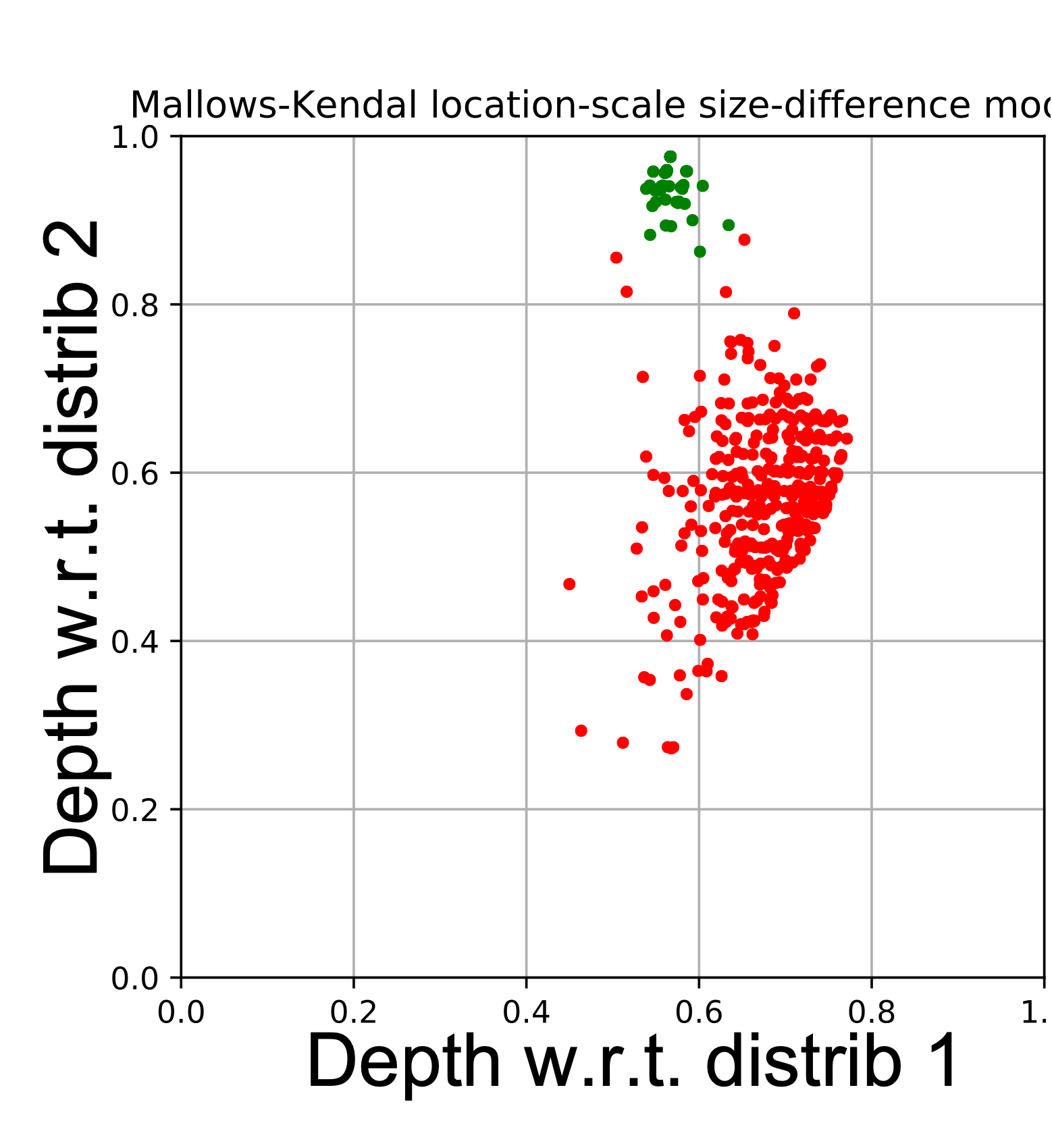} \\
	\end{tabular}
	\caption{Ranking $DD$-plot corresponding to Mallows distributions with parameters described in Table~\ref{tab:ddplotdiag}.}
	\label{fig:ddplotdiag}
	\end{center}
\end{figure}

\subsection{Outlier Detection in Ranking Data}

\begin{figure}[h!]
	\begin{center}
	\begin{tabular}{cc}
		{\scriptsize (a)} & {\scriptsize (b)} \\
		\includegraphics[scale=0.25,trim=3mm 0.5mm 0.5mm 14mm,clip=true,page=1]{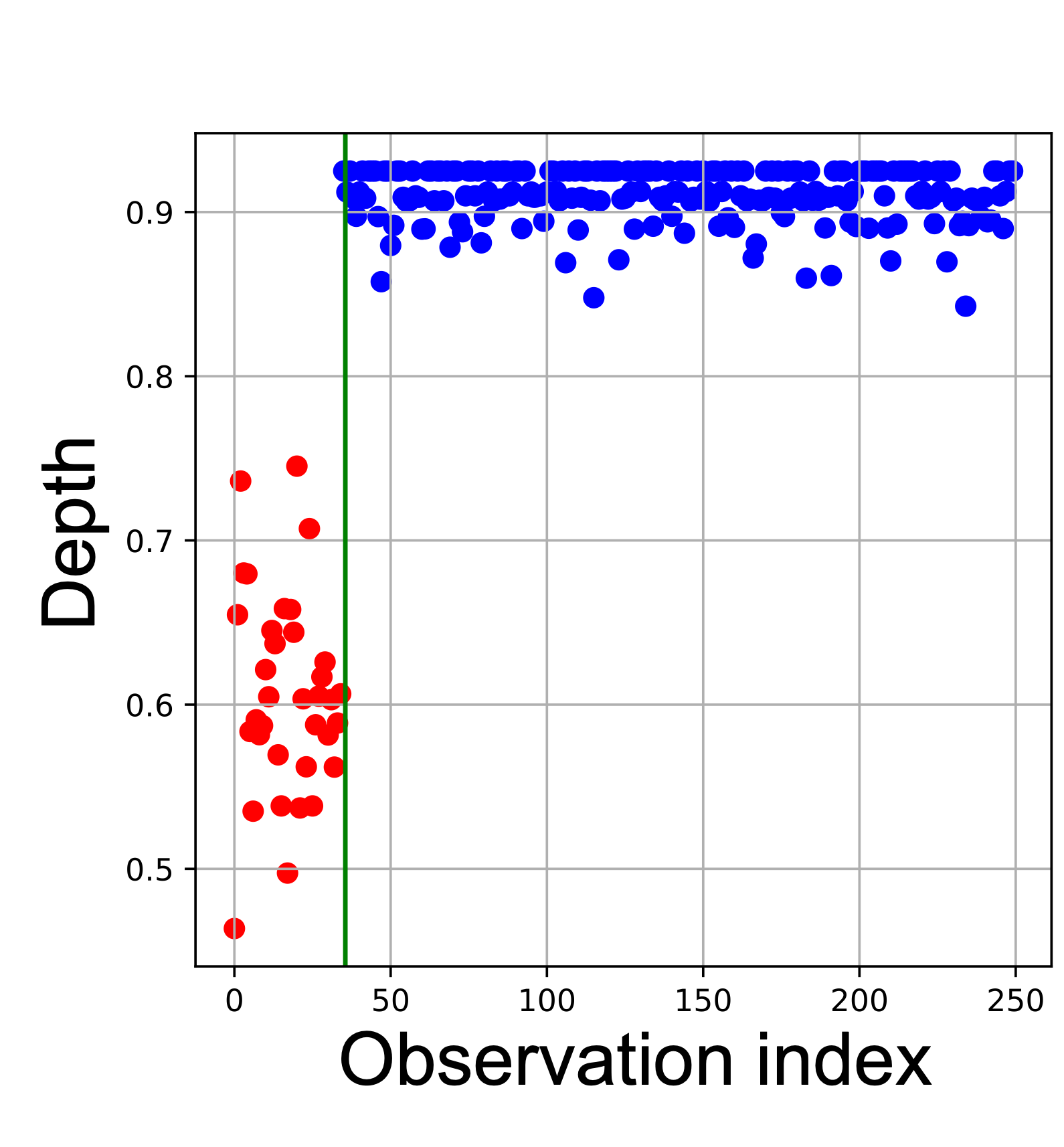} & \includegraphics[scale=0.25,trim=3mm 0.5mm 0.5mm 14mm,clip=true,page=1]{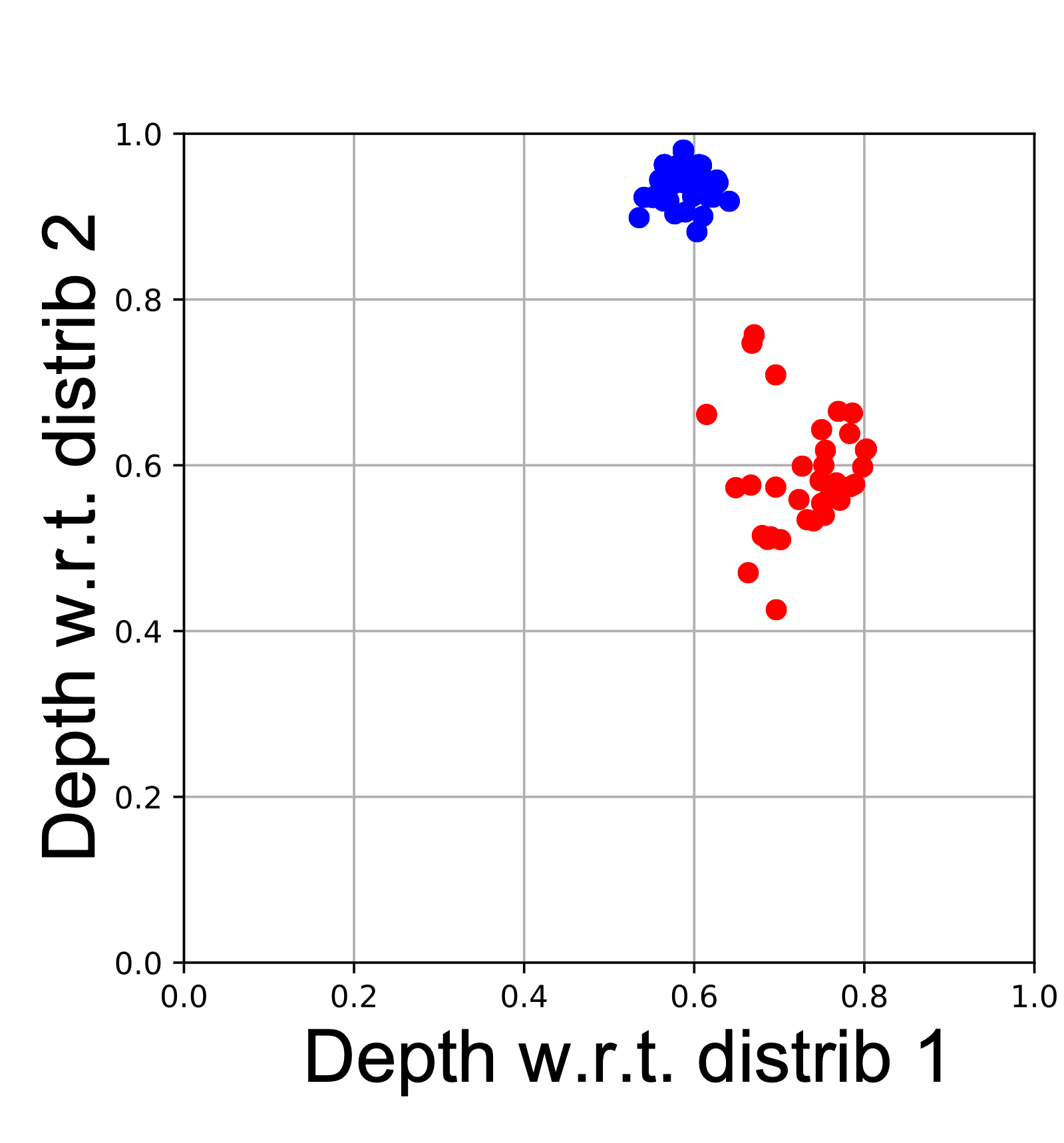} \\
		{\scriptsize (c)} & {\scriptsize (d)} \\
		\includegraphics[scale=0.25,trim=0mm 0.5mm 0.5mm 14mm,clip=true,page=1]{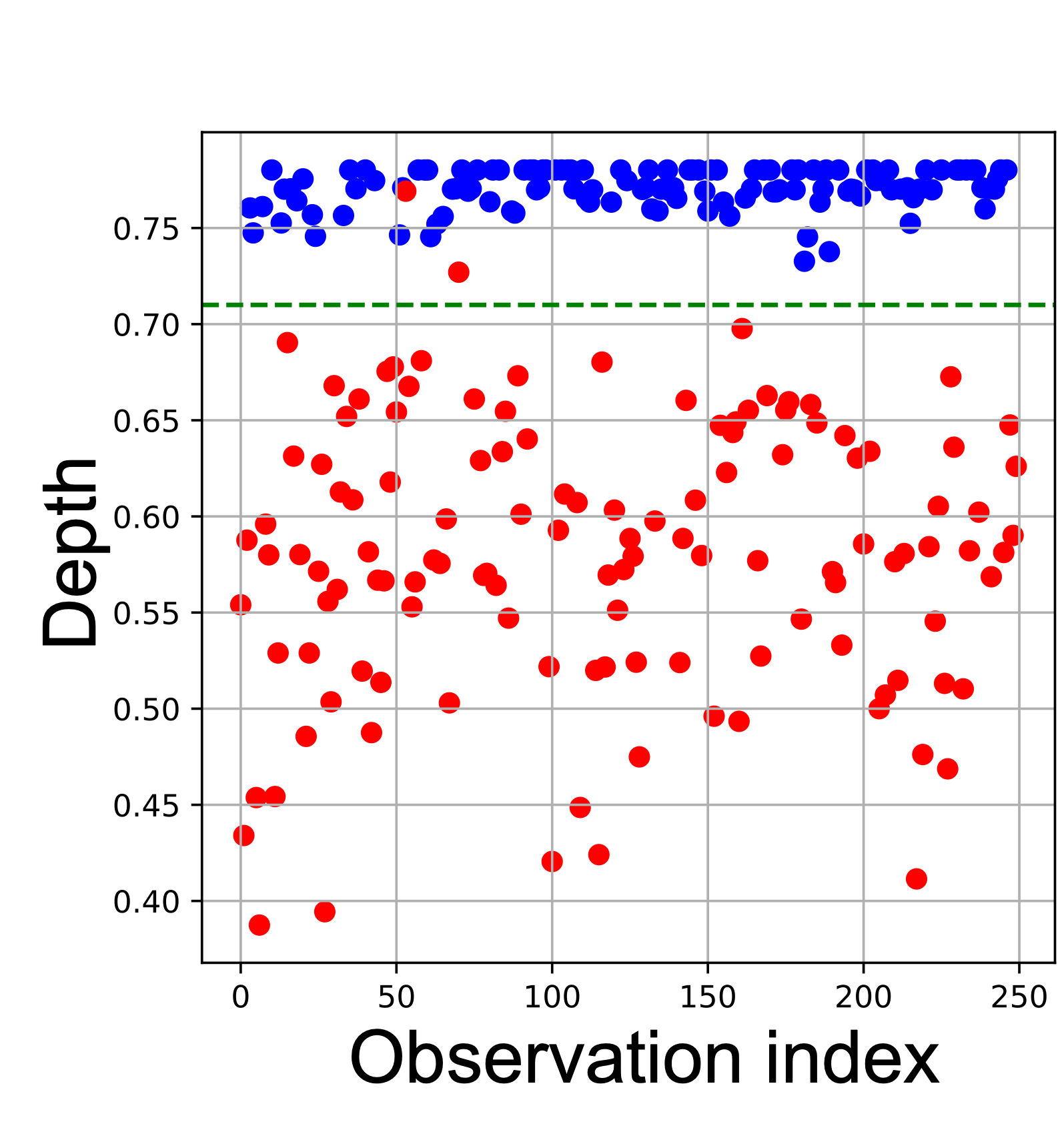} & \includegraphics[scale=0.25,trim=3mm 0.5mm 0.5mm 14mm,clip=true,page=1]{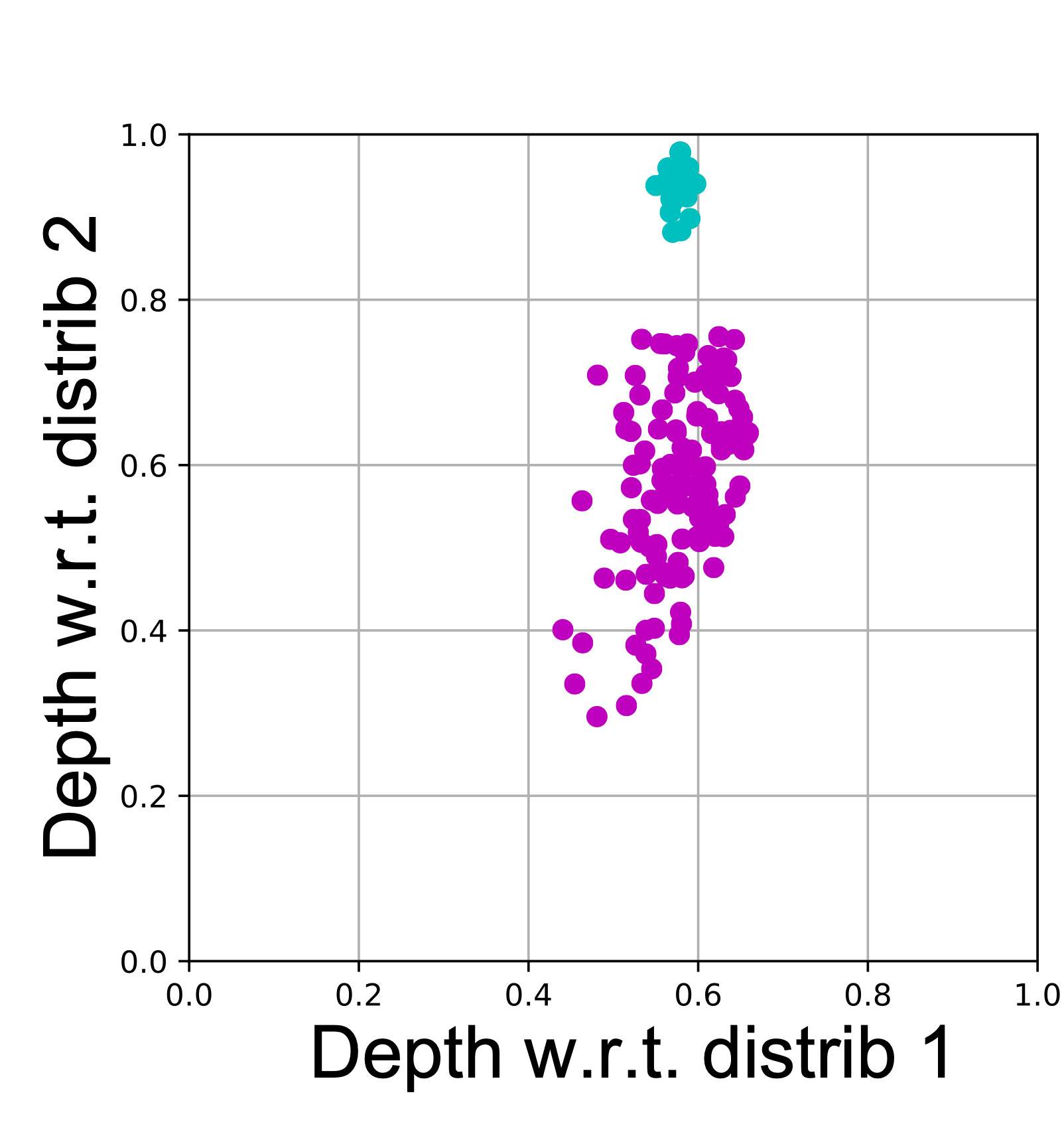} \\
	\end{tabular}
	\end{center}
	\caption{Depth plots (a,c) and $DD$-plots (b,d) for a mixture of Mallows-Kendall distributions. (a)-(b): distant centers and different size for the two components of the mixture. (c)-(d): closer centers and same size.}
	\label{fig:anomdet}
\end{figure}

We now place ourselves in the situation where a single sample of rankings is observed. For simplicity, we consider the case where the underlying ranking distribution is an unbalanced mixture of two Mallows distributions (for $n=10$), strongly differing in size ($N_1=35$ and $N_2=215$), with distant centers ($d_{\tau}(\sigma^*_1,\sigma^*_2)=15$) and parameters $\phi_1=\mathrm{e}^{-0.5}$ and $\phi_2=\mathrm{e}^{-2.5}$. 
Figure~\ref{fig:anomdet} (a) shows the ranking depth (relative to Kendall $\tau$) of each observation computed w.r.t. to the entire sample. We observe, that despite the unavailability of labels, the ranking depth clearly distinguishes the two different components. It thus permits to perform a typical anomaly detection task in the context of ranking data, where the differing minority of permutations are viewed as abnormal rankings. The diagnostic ranking $DD$-plot (b) based on the identified information about the components confirms the differences.

Consider next the case of a mixture with closer centers ($d_{\tau}(\sigma_1^*,\sigma_2^*)=11$) and equal sizes ($N_1=N_2=125$), with parameters $\phi_1=\mathrm{e}^{-0.25}$ and $\phi_2=\mathrm{e}^{-2.5}$. The depth plot (c) w.r.t. to the entire sample reflects how easily we can cluster the ranking dataset into two components (we deliberately shuffle the indices and keep colors for illustrative purposes), and we suggests a separating threshold (on the level of depth $=0.71$), which in this particular case allows for two mistaking assignments. For the diagnostic ranking $DD$-plot (d), we honestly include this mistake, and change the colors to underline this impurity.

\subsection{Rankings - Homogeneity Testing}

Depth can further be used to provide a formal inference, which we exemplify as a nonparametric test of homogeneity between two Plackett-Luce distributions \cite{critchlow91} with $n=10$. The first one (red in Figure~\ref{fig:tests}) is generated using the parameters $\boldsymbol{w}_1=(\mathrm{e}^9,...,\mathrm{e}^0)$, the second one represents its changed version $\boldsymbol{w}_2=(\mathrm{e}^{\gamma 9},...,\mathrm{e}^{\gamma 0})$. We gradually increase $\gamma$ from $0.5$ (substantial difference) to $1$ (equal in distribution), and provide the $p$-values of the Wilcoxon rank-sum test averaged over $100$ repetitions in Figure~\ref{fig:tests}. The test is performed using the reference sample (of size $500$) from the first distribution, with tested sample sizes being equal ($=50$) for both distributions (see \cite{LafayeDeMicheauxMV20} for details on the testing procedure and \cite{LiuS93} for more details). Figure \ref{fig:tests} shows how the $p$-values detect very well the difference between the two distributions when it is the case, giving a formal inference to the ranking $DD$-plot visualization, whereas, remarkably, the (parametric) nature of the underlying ranking models is not used at all by the procedure.
We also underline that, in a similar fashion, ranking depth-based \textit{goodness-of-fit} statistics could be computed, in order to evaluate how well a specific ranking model fits a ranking dataset.

\begin{figure}
	\begin{center}
	\begin{tabular}{cc}
		{\scriptsize $\gamma = 0.5$} & {\scriptsize $\gamma = 0.75$} \\
		\includegraphics[width=0.18\textwidth,trim=10 0 0mm 5mm,clip=true,page=1]{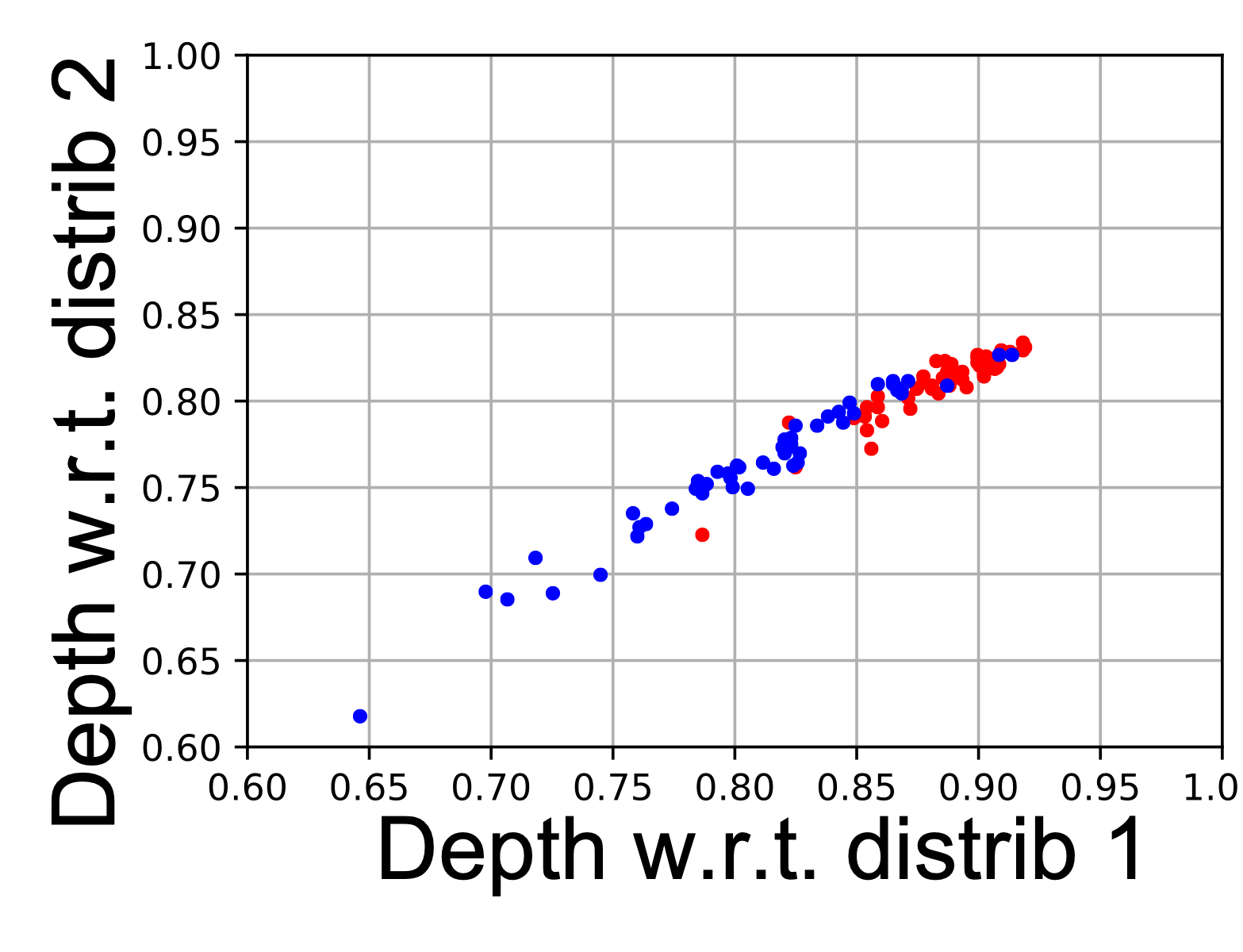} & \includegraphics[width=0.18\textwidth,trim=0 0 0mm 5mm,clip=true,page=1]{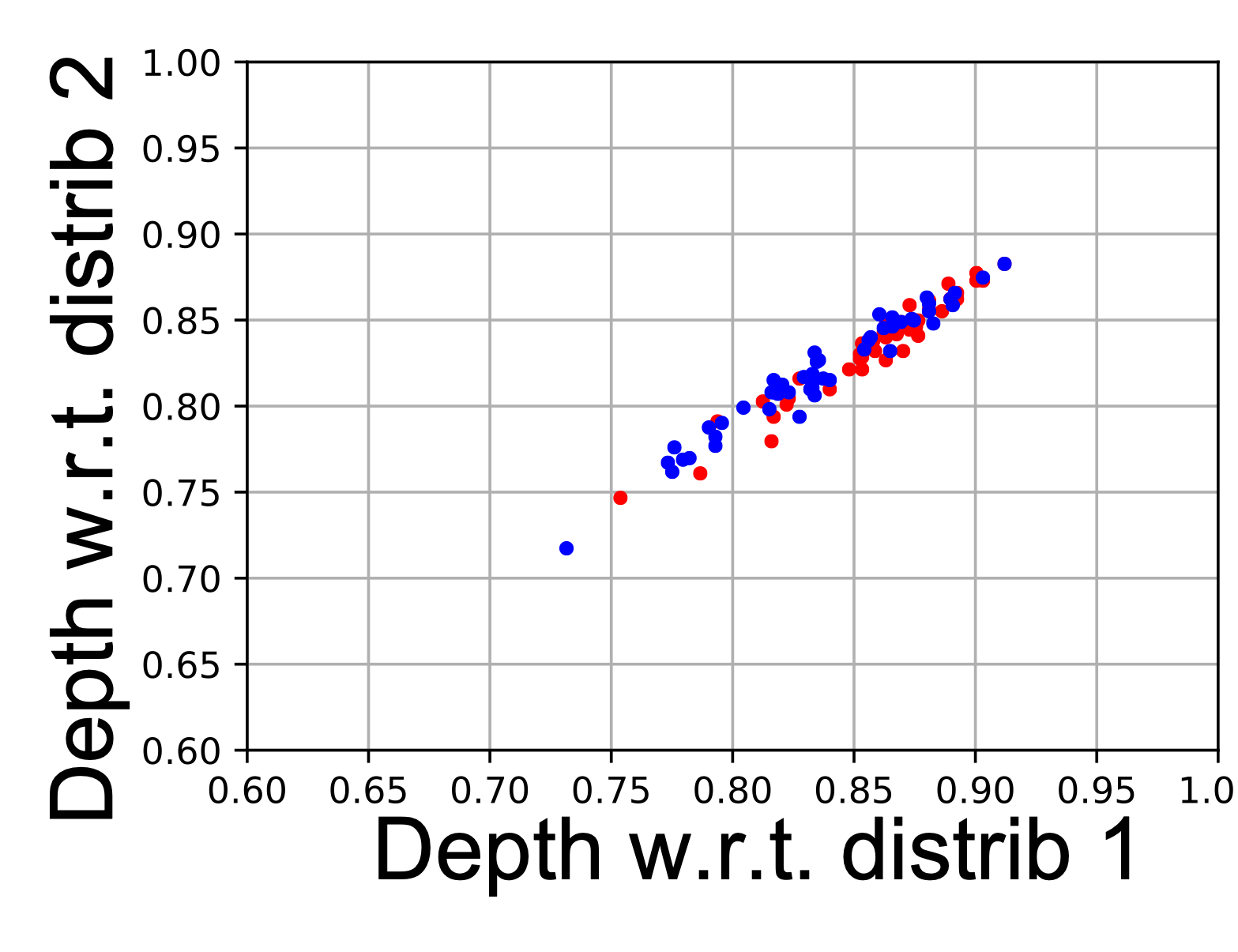} \\
		{\scriptsize $\gamma = 1$} & {\scriptsize $p$-values} \\
		\includegraphics[width=0.18\textwidth,trim=0 0 0mm 5mm,clip=true,page=1]{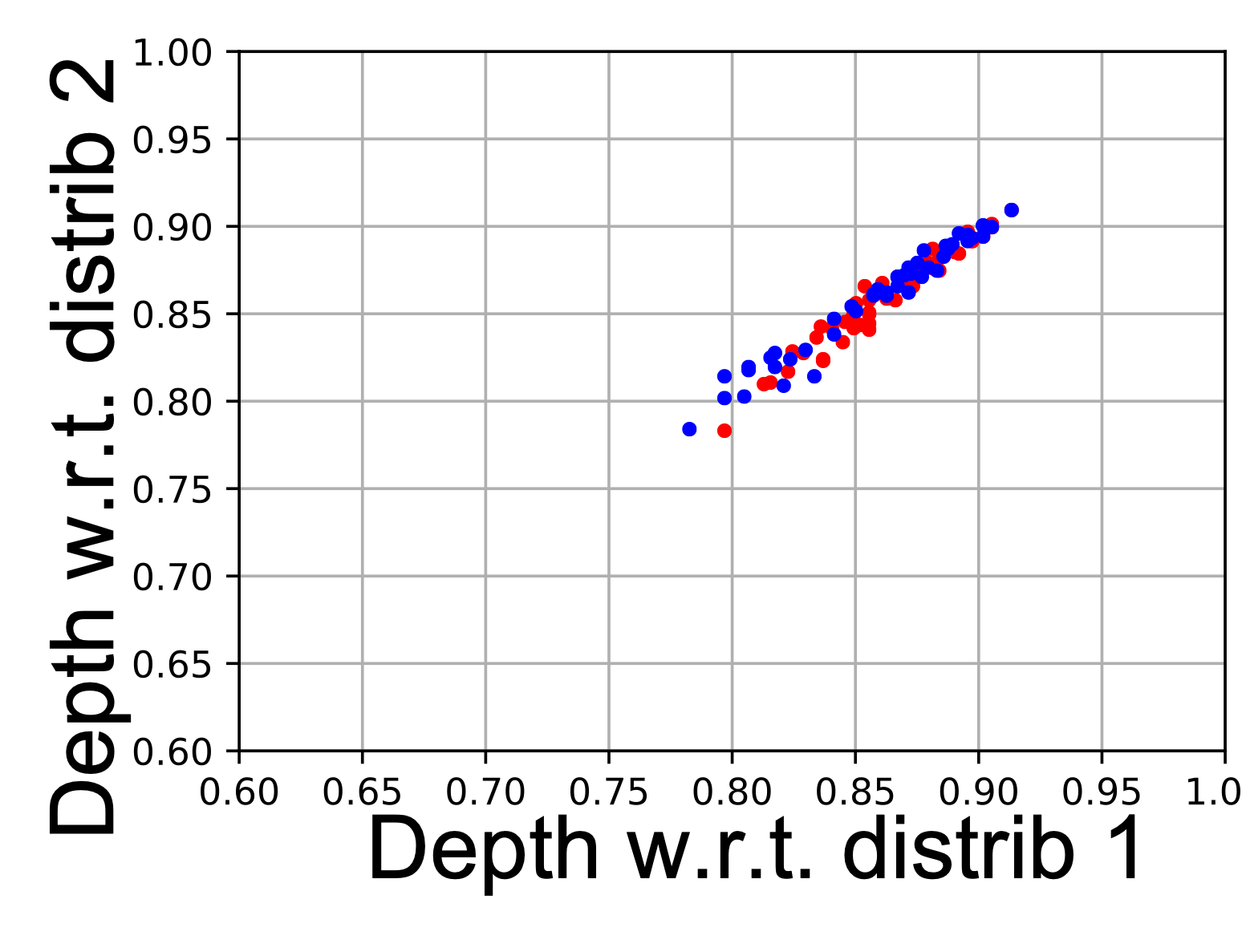} & 
		\includegraphics[width=0.27\textwidth,trim=0mm 0 0mm 0mm,clip=true,page=1]{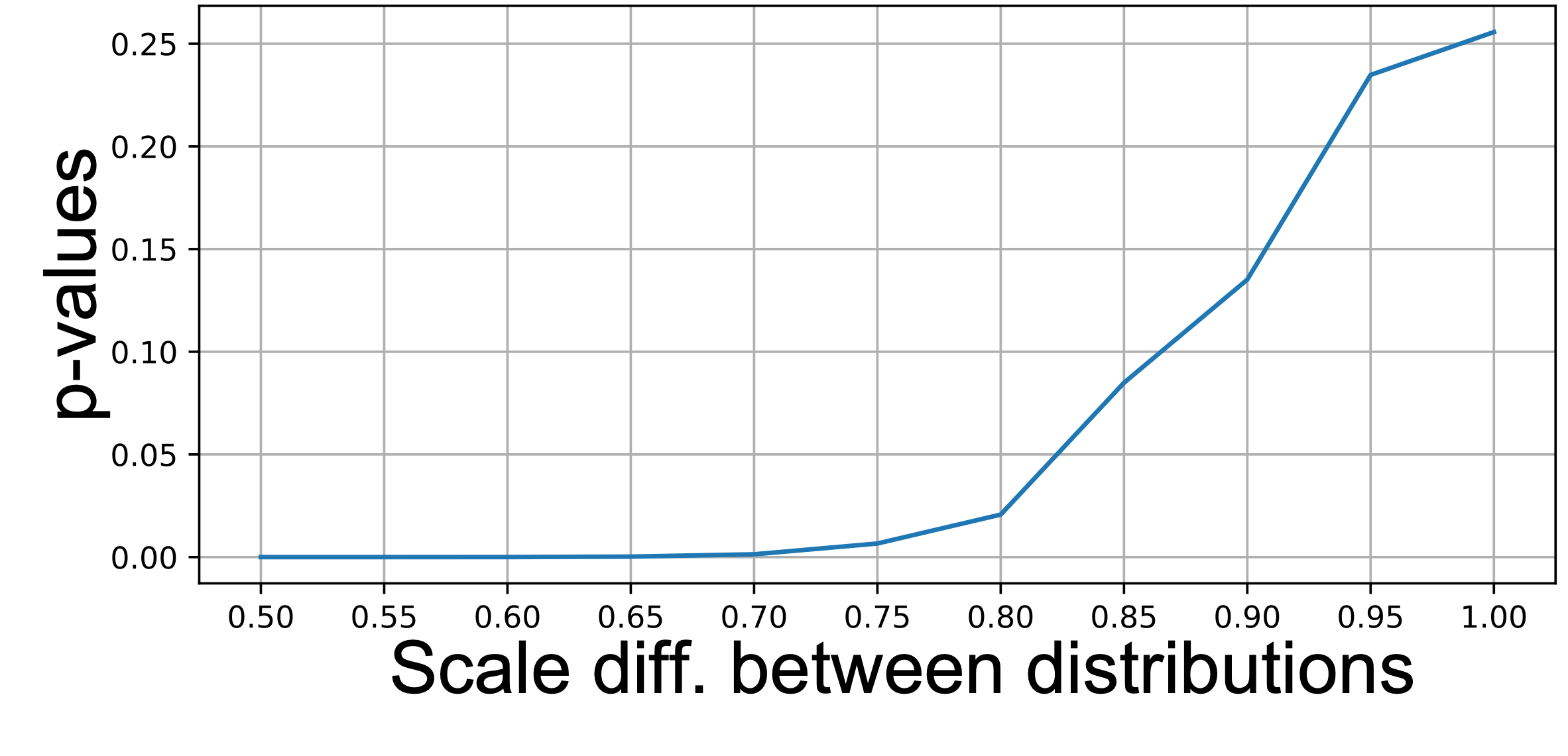}
	\end{tabular}
	\end{center}
	\caption{$DD$-plots of a pair of P-L distributions with gradually decreasing difference between them based on parameter $\gamma$ and the corresponding average $p$-values for the test of homogeneity.}
	\label{fig:tests}
\end{figure}

\paragraph{Student dataset.} We now explore our homogeneity testing machinery on a real dataset (available at \url{https://github.com/ekhiru/students-dataset}) composed of rankings from students (with a ground truth answer) before (red) and after (blue) taking the related course. The diagnostic $DD$-plot of the two cohorts together with $p$-values over $1000$ random repetitions and the asymptotic density under $H_0$ are indicated in Figure~\ref{fig:testrealdata}: they illustrate the improvement of the students' knowledge after the class.

\begin{figure}[h!]
	\begin{center}
	\begin{tabular}{cc}
		\includegraphics[width=0.155\textwidth,trim=0mm 0 0mm 0mm,clip=true]{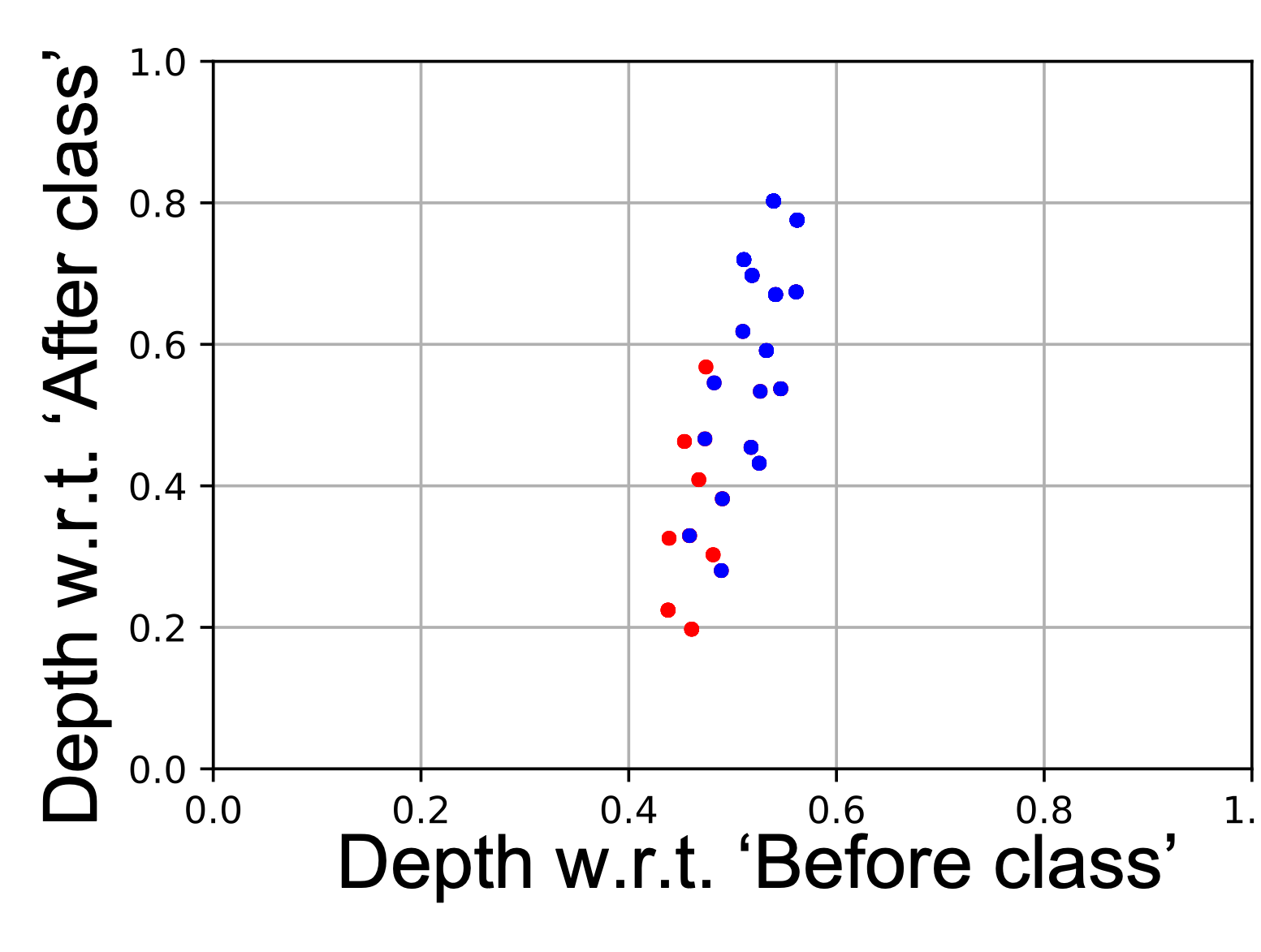} & \includegraphics[width=0.285\textwidth,trim=0mm 0 0mm 0mm,clip=true]{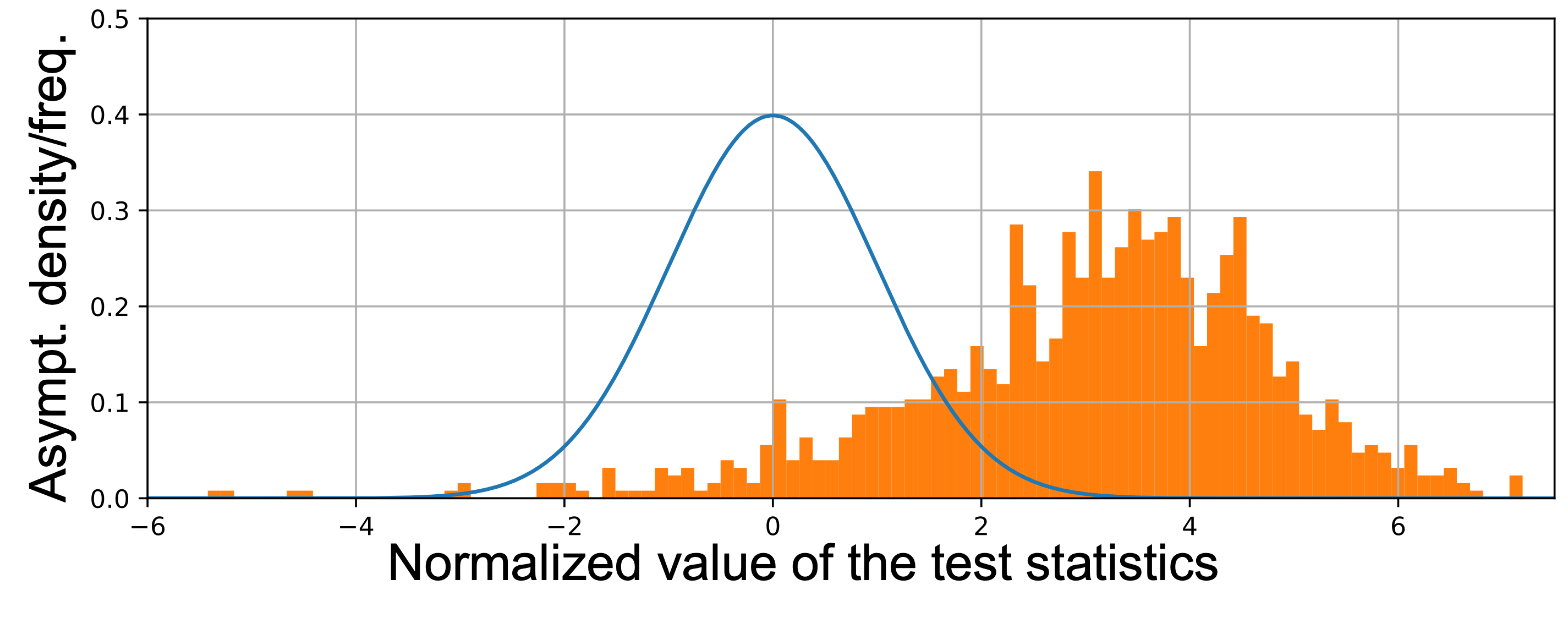}
	\end{tabular}
	\end{center}
	\caption{Left: $DD$-plot for 'before class' (red) and 'after class' (blue) students. Right: $p$-values of the homogeneity test.}
	\label{fig:testrealdata}
\end{figure}



\section*{Conclusion}

In this paper, we have extended the concept of statistical depth to ranking data, in order to apply the notions of quantiles, order statistics and ranks to the latter, overcoming hence the lack of natural order and vector space structure on $\mathfrak{S}_n$. We have listed the desirable properties a ranking depth should satisfy to emulate these notions appropriately and shown that the same metric approach as that, widely used, to deal with ranking aggregation, permits to build depth functions on $\mathfrak{S}_n$ that fulfill them in many situations. Theoretical results proving that ranking depths and related quantities can be accurately estimated by their empirical versions with guarantees have been established. We have also shown that the methodology promoted can be successfully applied to a wide variety of problems, ranging from fast and robust consensus ranking to the design of ranking data visualization techniques through the detection of outlying rankings. Both the theoretical and empirical results are very encouraging and paves the way to a more systematic use of the ranking depth concept for the statistical analysis of ranking data.

\nocite{*}
\bibliography{bibliography}
\bibliographystyle{plain}

\addtocontents{toc}{\protect\setcounter{tocdepth}{3}}

\onecolumn
\appendix

\begin{center}
    \noindent\rule{17cm}{3pt} \vspace{0.4cm}
    
    \huge \textbf{Supplementary Material}
    
    \noindent\rule{17cm}{1.2pt}
\end{center}

\tableofcontents

\section{Ranking Distributions - Popular Examples}
Below we recall some popular ranking models. They will be next used to illustrate some of the properties involved in the theoretical analysis carried out.

\begin{proposition}\label{prop:maxim_rankings}
The symmetry center property for rankings has two versions, a \textit{weak} and a \textit{strong} one, see~\cite{critchlow91}.
\begin{enumerate}
\item Strong unimodality: ranking model $P$ is said to be strongly unimodal iff there exists a modal ranking $\sigma^*$ such that for every pair $i,j$  such that $\sigma^*(i) < \sigma(j)$ and any permutations $\sigma$ such that $\sigma(i) = \sigma(j)-1$ then $P(\sigma) \geq P(\sigma\tau_{ij})$, where $\sigma\tau_{ij}(i) = \sigma(j)$, $\sigma\tau_{ij}(j) = \sigma(i)$ and $\sigma\tau_{ij}(k) = \sigma(k)$ for $k\neq i,j$. 
\item Complete consensus: ranking model $P$ is said to have complete consensus iff there exists a modal ranking $\sigma^*$ such that for every pair $i,j$  such that $\sigma^*(i) < \sigma(j)$ and any permutations $\sigma$ such that $\sigma(i) < \sigma(j)$ then $P(\sigma) \geq P(\sigma\tau_{ij})$, where $\sigma\tau_{ij}(i) = \sigma(j)$, $\sigma\tau_{ij}(j) = \sigma(i)$ and $\sigma\tau_{ij}(k) = \sigma(k)$ for $k\neq i,j$. Complete consensus implies strong unimodality. 
\end{enumerate}
\end{proposition}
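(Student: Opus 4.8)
The proposition bundles two definitions (strong unimodality and complete consensus) with a single substantive assertion, namely that complete consensus implies strong unimodality; so the plan is to prove that implication, the definitions themselves needing no argument. I would also make clear at the outset which quantity plays the role of ``symmetry center'' here: the modal ranking $\sigma^*$, which in each of the two conditions fixes the reference order of items against which probabilities are required to be monotone.

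The plan is to show that whenever $P$ satisfies complete consensus with modal ranking $\sigma^*$, the very same $\sigma^*$ witnesses strong unimodality of $P$. First I would record precisely what each condition demands. Strong unimodality with modal ranking $\sigma^*$ asks that $P(\sigma)\geq P(\sigma\tau_{ij})$ hold for every ordered pair $(i,j)$ with $\sigma^*(i)<\sigma^*(j)$ and every permutation $\sigma$ with $\sigma(i)=\sigma(j)-1$, where $\sigma\tau_{ij}$ swaps the ranks assigned to items $i$ and $j$ and fixes every other item. Complete consensus with the same modal ranking asks for the identical inequality, with the identical operator $\tau_{ij}$, but under the weaker hypothesis $\sigma(i)<\sigma(j)$ in place of $\sigma(i)=\sigma(j)-1$.

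The implication is then immediate: if $\sigma(i)=\sigma(j)-1$ then in particular $\sigma(i)<\sigma(j)$, so every triple $(i,j,\sigma)$ falling within the scope of the strong-unimodality requirement also falls within the scope of the complete-consensus requirement, and the latter supplies exactly the inequality $P(\sigma)\geq P(\sigma\tau_{ij})$ that the former requests. Hence complete consensus (with modal ranking $\sigma^*$) entails strong unimodality (with the same $\sigma^*$), which is the claim. Both conditions are, as indicated, taken from~\cite{critchlow91}, so I would cite that reference for the precise forms.

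The only delicate point here, and the closest thing to an obstacle, is purely notational: one must check that ``modal ranking'' and the transposition operator $\tau_{ij}$ are used in exactly the same way in both definitions, so that passing from one to the other requires no relabelling of items or of ranks. Once that bookkeeping is confirmed, the argument is a one-line inclusion of quantifier ranges and no computation is needed; I expect the author's proof to differ from mine only in how explicitly it spells out the action of $\tau_{ij}$ on ranks.
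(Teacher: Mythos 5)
Your argument is correct: since $\sigma(i)=\sigma(j)-1$ implies $\sigma(i)<\sigma(j)$, every instance demanded by strong unimodality falls within the scope of the complete consensus condition (with the same modal ranking $\sigma^*$ and the same transposition $\tau_{ij}$), so the implication is an immediate inclusion of quantifier ranges. The paper gives no proof of this proposition at all --- it simply records the two definitions and the implication as known facts from \cite{critchlow91} --- so your explicit one-line argument matches the intended content and merely spells out what the paper leaves implicit.
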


\begin{example}{\sc (Mallows distribution)}\label{ex:Mallows1}
Taking $d=d_{\tau}$, the Mallows model introduced in \cite{Mallows57} is the unimodal distribution $P_{\theta}$ on $\mathfrak{S}_n$ parametrized by $\theta=(\sigma_0, \phi_0)\in \mathfrak{S}_n\times (0,1]$: $\forall \sigma \in \mathfrak{S}_n$,
 \begin{equation}\label{eq:Mallows}
 P_{\theta}(\sigma)=(1/Z_0)\exp( d_{\tau}(\sigma_0,\sigma) \log \phi_0),
\end{equation}
 where $Z_0=\sum_{\sigma\in \mathfrak{S}_n} \exp(d_{\tau}(\sigma_0,\sigma)\log \phi_0 )$ is a normalization constant. One may easily show that $Z_0$ is independent from $\sigma_0$ and that $Z_0=\prod_{i=1}^{n-1}\sum_{j=0}^i \phi_0^j$. When $\phi_0<1$, the permutation $\sigma_0$ of reference is the mode of distribution $ P_{\theta_0}$, as well as its unique median relative to $d_{\tau}$. Observe in addition that the smallest the parameter $\phi_0$, the spikiest the distribution $ P_{\theta_0}$. In contrast, $ P_{\theta_0}$ is the uniform distribution on $\mathfrak{S}_n$ when $\phi_0=1$. As explained in section \ref{sec:depth_for_rankings}, ranking depth functions relative to the Kendall $\tau$ distance can be expressed as a function of the pairwise probabilities $p_{i,j}=\mathbb{P}\{  \Sigma(i)<\Sigma(j)\}$, $1\leq i\neq j\leq n$. Notice also that $\vert\vert d_{\tau}\vert\vert_{\infty}=\binom{n}{2}$.
Consider again the Mallows model $P_{\theta}$ recalled in Example \ref{ex:Mallows1}. In this case, a closed-from expression of the $p_{i,j}$'s is available, see \textit{e.g.} Theorem 2 in \cite{DBLP:conf/icml/Busa-FeketeHS14}.  Setting $h(k,\phi_0)=k/(1-\phi_0^k)$ for $k\geq 1$, one can then show that the ranking depth function relative to $P_{\theta}$ and $d_{\tau}$ is: $\forall \sigma\in \mathfrak{S}_n$,
 $D_{P_{\theta}}(\sigma) = {n \choose 2} - \sum_{\sigma(i)>\sigma(j)} H(\sigma_0(j)-\sigma_0(i),\; \phi_0)$,
where $H(k,\phi_0)=h(k+1,\phi_0)-h(k,\phi_0)$ and $H(-k,\phi_0)=1-H(k,\phi_0)$  for $k\geq 1$. Mallows is adapted naturally to work with extensions of rankings, such as from pairwise preferences \cite{Lu2014}, and partial rankings \cite{Vitelli2018}

Mallows satisfies the complete consensus property, see Property~\ref{prop:maxim_rankings}, when $\theta<1$.

The most popular extensions in the literature  are Generalized Mallows models~\cite{gMallows}, \cite{Irurozki2019a} and Mallows Block models~\cite{busa2019optimal}. They define different dispersion parameters for different ranking positions to model distributions in which there is high certainty in the top-ranked items and uncertainty at the bottom. 
These models still satisfy the complete consensus property, see Property~\ref{prop:maxim_rankings}, when $\theta<1$. 

We also point out that model \eqref{eq:Mallows} can be extended in a straightforward manner, by considering alternative distances $d$, including those described in Section~\ref{subsec:consensus_ranking} and other right invariant distances such as Cayley and Ulam, all of which satisfy the complete consensus property, see Property~\ref{prop:maxim_rankings}, when $\theta<1$.

The maximality at center is broken in more general ranking distributions with the form of mixtures of Mallows models. Mixtures have been studied in practical and theoretical settings, see \textit{e.g.} \cite{LL02,liu2018efficiently,collas21}. 
\end{example}

\begin{example}{\sc (Plackett-Luce (PL) distribution)} \label{ex:PL} 
PL assumes that rankings are generated in a stage wise manner: the most preferred item is chosen first, then the second preferred one, \ldots There is  independence among stages, that is, the probability of an item being chosen at a particular stage is only proportional to the remaining items at this stage and independent of the order of the items that have already been chosen. Thus, PL is parametrized  by 
$\boldsymbol{v} \in \mathbb{R}^n$, 
where $v(i)$ is proportional to the probability of choosing item $i$ as the preferred item at any stage (among the remaining ones). The probability of each ranking is given as 
 \begin{equation}
P_v(\sigma) = \prod_{i=1}^n \frac{\sigma^{-1}(i)}{\sum_{j=i}^n \sigma^{-1}(j)}.
\end{equation} 

The median ranking is the permutation that orders the weights decreasingly. 
The pairwise probabilities of items $i$ and $j$ have a closed-form expression involving only the weights of bot items, $ { p_{i,j} = \frac{v_i}{v_i + v_j} } $. PL's stage wise ranking process implies that adaptation to top-$k$ and  rankings is natural~\cite{Liu2019}. 

The PL models satisfy the complete consensus property, see Property~\ref{prop:maxim_rankings} for every distribution other than the uniform. Clearly, the maximality at center does not hold for mixtures of PL in general. Note that there is a body of research on PL mixtures~\cite{Liu2019,zhao2019learning}. 
\end{example}

\begin{example}{\sc (Mallows-Bradley-Terry distribution)}\label{ex:MBT}
Mallows-Bradley-Terry is a ranking model induced by paired comparisons in which the pairwise probability of items $i$ and $j$ have the form $$p_{i,j} = \frac{v_i}{v_i+v_j},$$
where $v_i$ is the parameter associated to item $i$ for $\boldsymbol{v} \in \mathbb{R}^n$. The probability of ranking $\sigma$ is then $$p(\sigma) = Z(\boldsymbol{v})\prod_{i=1}^{n-1} (v_{\sigma^{-1}(i)})^{n-i},$$
where $Z$ is a normalization constant. See \cite{HWL06} for generalizations.
\end{example}

\begin{example}{\sc (Pairwise distributions)}\label{ex:MBT}
All the above models can be written as a $n\times n$ matrix of  pairwise probabilities $p_{i,j}$ (describing the probability of item $i$ being preferred to item $j$) with restricted forms of its entries. Each of the models  imposes different restrictions in the entries of the pairwise probabilities $p_{i,j}$ but one could consider arbitrary values. We next lines characterize the properties of models with arbitrary entries $p_{i,j}$.
\begin{itemize}
    \item $P$ is strongly unimodal if and only if its entries are  weakly stochastically transitive for some reordering of the rows an columns, as defined in Proposition~\ref{def:stoch_trans}. 
    \item $P$ has complete consensus if and only if its entries are  strongly stochastically transitive for some reordering of the rows an columns. A probability distribution $P$ on $\mathfrak{S}_n$ is said to be strongly stochastically transitive iff, for all $(i,j,k)\in \n^3$, we have: $p_{i,j}\geq 1/2 \text{ and } p_{j,k}\geq 1/2 \; \Rightarrow\; p_{i,k}\geq \max\{p_{i,j},p_{j,k}\}$ and $p_{i,j}\neq 1/2$ for all $i<j$.
    
\end{itemize}

\end{example}

\section{Technical Proofs}
\label{suppl:sec_proofs}

\subsection{Conditions for satisfying the desirable properties}
\label{suppl:proof_desirable_prop}

\subsubsection{Proof of Proposition \ref{prop:invariance} (invariance)}
\label{suppl:proof_invariance}

We elaborate now on the invariance property~\ref{property:invariance}. Recall that a distance is right invariant iff for every triplet of permutations $(\sigma,\pi,\nu)\in \mathfrak{S}_n d(\sigma,\nu) = d(\sigma\pi,\nu\pi)$. Finally, the inverse of permutation $\sigma$ is denoted by $\sigma^{-1}$.

Let us first recall the invariance property for distributions and for depths~\ref{property:invariance} and our proposition~\ref{prop:invariance}: \propertyinvariance*  \propinvariance*

\begin{proof}
\begin{equation}
\begin{split}
D_{\pi P}(\sigma \pi)=&\mathbb{E}_{\pi P}[\vert\vert d\vert\vert_{\infty}-d(\sigma \pi,\Sigma)] 
=  \vert\vert d \vert\vert_{\infty} - \sum_{\nu \in \mathfrak{S}_n} (\pi P)(\nu) d(\sigma \pi, \nu) \\
= &  \vert\vert d \vert\vert_{\infty} - \sum_{\nu \in \mathfrak{S}_n} P(\nu \pi^{-1}) d(\sigma \pi, \nu)
= \vert\vert d \vert\vert_{\infty} - \sum_{\nu' \in \mathfrak{S}_n} P(\nu' \pi \pi^{-1}) d(\sigma \pi, \nu' \pi) \\
= & \vert\vert d \vert\vert_{\infty} - \sum_{\nu' \in \mathfrak{S}_n} P(\nu') d(\sigma, \nu')
=  D_{P}(\sigma). \\
\end{split}
\end{equation}
\end{proof}

\subsubsection{Proof of Proposition \ref{prop:maximality} (maximality at the center)}
\label{suppl:proof_maximality}

First, we study the relation between the depth and the probability of permutations which will be key for the results on the following sections. 

\begin{proposition}
\label{prop:rays}
Let $P$ be a SST distribution whose Kemeny's median is $\sigma^*$, and $\sigma^*(a) < \sigma^*(b)$. Let $\sigma$ be a ranking such that $\sigma(a) +1 = \sigma(b)$ and let $t_{ab}$ be a transposition, i.e., $t_{ab}(a)=b$, $t_{ab}(b)=a$ and $t_{ab}(k)=k$ for all $k\neq a,b$. 
Then, $$D(\sigma) > D(\sigma t).$$ 
\end{proposition}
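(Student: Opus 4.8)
\textbf{Plan for the proof of Proposition \ref{prop:rays}.}
The plan is to use the Kendall-$\tau$ formula for the ranking depth from Proposition \ref{prop:depth_kendall}, which expresses $D(\sigma)$ as a sum over all pairs $i<j$ of contributions depending only on the pairwise probabilities $p_{i,j}$ and the relative order of $\sigma(i),\sigma(j)$. The key observation is that $\sigma$ and $\sigma t_{ab}$ induce \emph{exactly the same} relative order on every pair of items except the pair $\{a,b\}$: since $\sigma(a)+1=\sigma(b)$, swapping the positions of $a$ and $b$ only affects how $a$ and $b$ compare to each other, not how either compares to any third item $k$ (that item $k$ sits either below both positions $\sigma(a),\sigma(b)$ or above both). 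Hence in the difference $D(\sigma)-D(\sigma t_{ab})$ every term cancels except the one indexed by $\{a,b\}$.

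Concretely, I would first fix notation so that in $\sigma$ we have $\sigma(a)<\sigma(b)$ (an adjacent inversion-free pair) and in $\sigma t_{ab}$ we have $(\sigma t_{ab})(a) = \sigma(b) > \sigma(a) = (\sigma t_{ab})(b)$. Then, using Proposition \ref{prop:depth_kendall}, the $\{a,b\}$-term of $D(\sigma)$ is $-\,(1-p_{a,b})$ if $a<b$ as indices (the ``$\sigma(i)<\sigma(j)$'' case), and the corresponding term of $D(\sigma t_{ab})$ is $-\,p_{a,b}$ (the ``$\sigma(i)>\sigma(j)$'' case); all other terms are identical. Therefore
\begin{equation*}
D(\sigma) - D(\sigma t_{ab}) = p_{a,b} - (1-p_{a,b}) = 2\left(p_{a,b} - \tfrac12\right).
\end{equation*}
(If instead $b<a$ as indices, the same computation with the roles of the two ``$\mathbb{I}$'' terms exchanged gives $D(\sigma)-D(\sigma t_{ab}) = (1-p_{b,a}) - p_{b,a} = 2(\tfrac12 - p_{b,a}) = 2(p_{a,b}-\tfrac12)$, using $p_{a,b}=1-p_{b,a}$, so the formula is the same.)

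It remains to show $p_{a,b} > 1/2$. This is where stochastic transitivity enters: since $P$ is SST and $\sigma^*$ is its Kemeny median, Proposition \ref{prop:Kendall}(iii) gives $\sigma^*(i) = 1 + \sum_{j\neq i}\mathbb{I}\{p_{i,j}<1/2\}$, which means $\sigma^*(a)<\sigma^*(b)$ forces $p_{a,b}>1/2$ (equivalently, the SST ordering places $a$ before $b$ in $\sigma^*$ precisely because $a$ beats $b$ in pairwise comparison; strictness comes from SST ruling out $p_{a,b}=1/2$). Hence $D(\sigma)-D(\sigma t_{ab}) = 2(p_{a,b}-1/2) > 0$, which is the claim. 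The main obstacle — really the only subtle point — is the bookkeeping in the first step: verifying carefully that the adjacency hypothesis $\sigma(a)+1=\sigma(b)$ is exactly what makes all the third-item terms cancel, and handling the index-order cases $a<b$ versus $b<a$ uniformly; everything else is a one-line consequence of results already established.
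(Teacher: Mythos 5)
Your proposal is correct and follows essentially the same route as the paper's proof: both expand the Kendall-$\tau$ depth via Proposition \ref{prop:depth_kendall}, use the adjacency $\sigma(a)+1=\sigma(b)$ to cancel every term except the $\{a,b\}$ one, and reduce the claim to $p_{a,b}>p_{b,a}$, which follows from SST and $\sigma^*(a)<\sigma^*(b)$. Your appeal to Proposition \ref{prop:Kendall}(iii) to justify $p_{a,b}>1/2$ is merely a slightly more explicit version of the paper's closing remark, not a different argument.
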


\begin{proof}
First, note that the composition $\sigma t_{ab}$ exchanges the ranks of items $a$ and $b$, so $d(\sigma,\sigma^*) = d(\sigma t,\sigma^*) -1$. We can rewrite $D(\sigma)$ in the following way,

\begin{equation}
\begin{split}
D(\sigma) = & 
\binom{n}{2}-\sum_{i<j} p_{i,j}\mathbb{I}\{\sigma(i)>\sigma(j) \} -\sum_{i<j} p_{j,i}\mathbb{I}\{\sigma(i)<\sigma(j) \} \\ =
& \binom{n}{2}-\sum_{i<j \land i,j \neq a,b} p_{i,j}\mathbb{I}\{\sigma(i)>\sigma(j) \} -\sum_{i<j \land i,j \neq a,b} p_{j,i}\mathbb{I}\{\sigma(i)<\sigma(j) \} - \\
& -p_{a,b}\mathbb{I}\{\sigma(a)>\sigma(b) \} -  p_{ba}\mathbb{I}\{\sigma(a)<\sigma(b) \}   \\
= &  \binom{n}{2}-\sum_{i<j \land i,j \neq a,b} p_{i,j}\mathbb{I}\{\sigma(i)>\sigma(j) \} -\sum_{i<j \land i,j \neq a,b} p_{j,i}\mathbb{I}\{\sigma(i)<\sigma(j) \} -  p_{b,a}.
\end{split}
\end{equation}

Where  the first equality is the given by proposition~\ref{prop:depth_kendall}. In  the second we split the sum for positions $i=a$ and $j=b$ in the latter term and the rest of the pairs in the previous terms. In the third one, we recall that by assumption $\sigma(a) = \sigma(b)-1$ and therefore $\mathbb{I}\{\sigma(a)<\sigma(b) \} = 1$ and $\mathbb{I}\{\sigma(a)>\sigma(b) \} = 0$. We rewrite in a similar way $D(\sigma t_{ab})$. For this part, recall that $\sigma t_{ab}(a)=\sigma(b)$, $\sigma t_{ab}(b)=\sigma(a)$ and $\sigma t_{ab}(k)=k$ for all $k\neq a,b$. 

\begin{equation}
\begin{split}
D(\sigma t_{ab}) = 
\binom{n}{2} & -\sum_{i<j} p_{i,j}\mathbb{I}\{\sigma t_{ab}(i)>\sigma t_{ab}(j) \} -\sum_{i<j} p_{j,i}\mathbb{I}\{\sigma t_{ab}(i)<\sigma t_{ab}(j) \}  \\
= \binom{n}{2} & -\sum_{i<j \land i,j \neq a,b} p_{i,j}\mathbb{I}\{\sigma t_{ab}(i)>\sigma t_{ab}(j) \} -\sum_{i<j \land i,j \neq a,b} p_{j,i}\mathbb{I}\{\sigma t_{ab}(i)<\sigma t_{ab}(j) \}  \\
& -p_{a,b}\mathbb{I}\{\sigma t_{ab}(a)>\sigma t_{ab}(b) \} - p_{b,a}\mathbb{I}\{\sigma t_{ab}(a)<\sigma t_{ab}(b) \}   \\
= \binom{n}{2} & -\sum_{i<j \land i,j \neq a,b} p_{i,j}\mathbb{I}\{\sigma(i)>\sigma(j) \} -\sum_{i<j \land i,j \neq a,b} p_{j,i}\mathbb{I}\{\sigma(i)<\sigma(j) \} - \\
& -p_{a,b}\mathbb{I}\{\sigma(a)<\sigma(b) \} -p_{ba}\mathbb{I}\{\sigma(a)>\sigma(b) \}   \\
= \binom{n}{2} & -\sum_{i<j \land i,j \neq a,b} p_{i,j}\mathbb{I}\{\sigma(i)>\sigma(j) \} -\sum_{i<j \land i,j \neq a,b} p_{j,i}\mathbb{I}\{\sigma(i)<\sigma(j) \} -p_{a,b}
\end{split}
\end{equation}

Therefore, for any two rankings $\sigma$ and $\sigma t_{ab}$ such that $D(\sigma) > D(\sigma t_{ab})$, the following holds,
\begin{equation}\label{eq:depth_prob_pairwise}
\begin{split}
 D(\sigma)  & > D(\sigma t_{ab})  \\
\Leftrightarrow & \binom{n}{2}-\sum_{i<j \land i,j \neq a,b} p_{i,j}\mathbb{I}\{\sigma(i)>\sigma(j) \} -\sum_{i<j \land i,j \neq a,b} p_{j,i}\mathbb{I}\{\sigma(i)<\sigma(j) \} -p_{b,a}  \\
& >  \binom{n}{2}-\sum_{i<j \land i,j \neq a,b} p_{i,j}\mathbb{I}\{\sigma(i)>\sigma(j) \} -\sum_{i<j \land i,j \neq a,b} p_{j,i}\mathbb{I}\{\sigma(i)<\sigma(j) \} -p_{a,b}\\
   \Leftrightarrow 
& p_{b,a} < p_{a,b}.
\end{split}
\end{equation}
For any SST model $P$ with whose median is $\sigma^*$, and where $\sigma^*(a) < \sigma^*(b)$, it holds (by definition)  that $p_{b,a} < p_{a,b}$, which concludes the proof.
\end{proof}

Let us first recall Property~\ref{property:maximality} (maximality) and Proposition~\ref{prop:maximality}. \propertymaximality* \propmaximality*

We now discuss what  is precisely meant by \textit{center} in the ranking context. We derive two main definitions for a center:
\begin{itemize}
    \item Following \cite{Tukey75, ZuoSerfling00}, we emulate the notion of \textit{half-space} symmetry (which is a very generic notion of symmetry) and define a notion of $H$-center, from which our proposition in the main paper stems from. Appart from our maximality proposition, we further provide results for distributions $P$ having a $H$-center.
    \item We define a simpler notion of center based on a natural metric approach. We also provide maximality results based on this different notion of center, called a $M$-center.
\end{itemize}

\paragraph{H-center and maximality at center.}~\\

The following results (1) define a symmetry center inspired in the classical formulation of half-space symmetry~\cite{Tukey75, ZuoSerfling00} and (2) shows that the \ken and Spearman's footrule distances satisfy the maximality at center for the defined center. 

\begin{proposition}\label{def:hcenter}
Let us call "hyperplane" the sets $ H_{i,j} =\{\sigma :  \sigma(i) < \sigma(j) \}$, 
we define the H-center $\sigma$ as $\sigma = \cap H_{i,j}$ for all $\{ (i,j) : \sigma_0(i) < \sigma_0(j) \}$.
For any $P$ such that $p_{i,j}>p_{j,i}$ for all $\{ (i,j) : \sigma_0(i) < \sigma_0(j) \}$ the H-center is $\sigma_0$. 
\end{proposition}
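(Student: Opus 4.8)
The statement to prove is Proposition~\ref{def:hcenter}: given the ``hyperplanes'' $H_{i,j} = \{\sigma : \sigma(i) < \sigma(j)\}$ and the H-center defined as $\sigma = \bigcap H_{i,j}$ over all $(i,j)$ with $\sigma_0(i) < \sigma_0(j)$, one must show (a) that this intersection is well-defined and actually singles out exactly one permutation, and (b) that for any $P$ with $p_{i,j} > p_{j,i}$ whenever $\sigma_0(i) < \sigma_0(j)$, that unique permutation is $\sigma_0$ itself. The plan is to do (a) and (b) essentially together, since (b) is almost immediate once (a) is set up correctly.

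\textbf{Step 1: The intersection is exactly $\{\sigma_0\}$.} First I would observe that $\sigma_0$ belongs to every $H_{i,j}$ in the collection: by construction the collection consists precisely of those pairs $(i,j)$ with $\sigma_0(i) < \sigma_0(j)$, so $\sigma_0 \in H_{i,j}$ for each such pair, hence $\sigma_0 \in \bigcap H_{i,j}$. Conversely, suppose $\tau \in \bigcap H_{i,j}$. Then for every pair $i \neq j$, either $\sigma_0(i) < \sigma_0(j)$ (in which case $(i,j)$ is in the collection and $\tau(i) < \tau(j)$) or $\sigma_0(j) < \sigma_0(i)$ (in which case $(j,i)$ is in the collection and $\tau(j) < \tau(i)$); in both cases $\tau$ and $\sigma_0$ order the pair $\{i,j\}$ the same way. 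Since a permutation of $\{1,\dots,n\}$ is completely determined by the relative order it induces on all pairs, $\tau = \sigma_0$. Thus $\bigcap H_{i,j} = \{\sigma_0\}$, which proves the last sentence of the proposition and shows the H-center is well-defined.

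\textbf{Step 2: Connecting to the pairwise-probability hypothesis.} The remaining point is to explain why the hypothesis ``$p_{i,j} > p_{j,i}$ for all $(i,j)$ with $\sigma_0(i) < \sigma_0(j)$'' is the natural one making $\sigma_0$ the H-center. Here I would note that $p_{i,j} > p_{j,i}$ is equivalent to $p_{i,j} > 1/2$ (since $p_{i,j} + p_{j,i} = 1$), i.e. the distribution $P$ ``votes'' for item $i$ ahead of item $j$ exactly when $\sigma_0$ does. This is precisely the condition under which it is legitimate to call $\sigma_0$ a symmetry center in the half-space sense: each relevant half-space $H_{i,j}$ containing the candidate center carries majority mass $P(H_{i,j}) = p_{i,j} > 1/2$, mirroring the classical requirement that a Tukey symmetry center lie in every closed half-space of mass at least $1/2$. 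Combining with Step~1, the H-center $\bigcap H_{i,j}$ equals $\sigma_0$, which is the claim.

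\textbf{Main obstacle.} There is no substantive analytic difficulty here; the only real care needed is in handling the definition cleanly — in particular making precise that ``$\sigma = \bigcap H_{i,j}$'' is to be read as ``the unique permutation lying in that intersection,'' and checking that the intersection is a singleton rather than empty or larger (which is exactly Step~1, and hinges on the elementary fact that a permutation is recoverable from its full set of pairwise comparisons). A secondary subtlety worth a sentence is that the role of the hypothesis on $P$ is not to pin down the intersection (that is purely combinatorial and depends only on $\sigma_0$) but to justify the terminology ``center of $P$'': the proposition is really asserting consistency between the metric/combinatorial notion of H-center attached to $\sigma_0$ and the distributional notion attached to $P$ via its pairwise marginals.
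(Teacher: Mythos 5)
Your proof is correct, and its overall structure matches the paper's: first identify which half-spaces carry majority mass, then observe that their intersection is the singleton $\{\sigma_0\}$. The difference is in how the mass comparison $P(\Sigma\in H_{i,j})>P(\Sigma\in H_{j,i})$ is obtained. The paper argues by constructing, for each $\sigma\in H_{i,j}$, the swapped ranking $\sigma'=\sigma t_{ij}\in H_{j,i}$ and invoking the relation $p(\sigma')=p(\sigma)\,p_{j,i}/p_{i,j}$, a multiplicative identity that in fact only holds for special models (e.g.\ of Bradley--Terry type), not for an arbitrary $P$ satisfying the stated hypothesis. You instead note the elementary identity $P(H_{i,j})=p_{i,j}$, so the majority-mass property of each $H_{i,j}$ with $\sigma_0(i)<\sigma_0(j)$ is literally the hypothesis $p_{i,j}>p_{j,i}$ restated; this is both simpler and valid in full generality, so your route actually repairs a weak step in the paper's argument. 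For the second half, the paper merely asserts that the intersection contains exactly one permutation, namely $\sigma_0$, whereas you spell out the underlying fact that a permutation is determined by its full set of pairwise comparisons; you also make explicit the (correct) reading that the hypothesis on $P$ serves to justify calling $\sigma_0$ a center in the Tukey/half-space sense rather than to pin down the intersection, which is purely combinatorial. In short: same decomposition, but your treatment of the probabilistic step is more elementary and more general, and your treatment of the combinatorial step is more detailed.
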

\begin{proof}
Firstly, we show that  $P(\Sigma \in  H_{i,j}) > P(\Sigma \in  H_{j,i})$. This can be done by construction: For any ranking $\sigma \in H_{i,j}$ (for which $\sigma(i) = \sigma(j)$) we can construct $\sigma'\in H_{j,i}$ that swaps positions $i$ and $j$. This construction defines a bijection between the rankings in both sets. The following relation holds:
$p(\sigma') = p(\sigma) p_{j,i}/p_{i,j} < p(\sigma)$. Therefore, $P(\Sigma \in\cap H_{i,j}) > P(\Sigma \in \cap H_{j,i})$. 

Secondly, it is clear that there is one and only one permutation in $\cap H_{i,j}$ and this is $\sigma_0$. We remark that its possible an H-center is defined (for this choice of $P$) by a smaller number of subsets, i.e., those $H_{i,j}$ for which $\sigma(i) = \sigma(j) -1$.
\end{proof}

\begin{proposition}
Let $P$ be distribution for which there is an H-center  
both \ken and Spearman's footrule based depths satisfy the maximality at center property for the H-center in Definition~\ref{def:hcenter}. 
\end{proposition}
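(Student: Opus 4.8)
The goal is to show that for a distribution $P$ admitting an H-center $\sigma_0$ (in the sense of Definition~\ref{def:hcenter}, i.e. $p_{i,j} > p_{j,i}$ whenever $\sigma_0(i) < \sigma_0(j)$), both the Kendall $\tau$ depth $D_P^{(d_\tau)}$ and the Spearman footrule depth $D_P^{(d_1)}$ attain their maximum at $\sigma_0$. The plan is to treat each distance separately, but in both cases to reduce the maximization to a position-by-position or pair-by-pair decoupling of the expected distance $L_P(\sigma) = \mathbb{E}_P[d(\sigma,\Sigma)]$, since $D_P(\sigma) = \|d\|_\infty - L_P(\sigma)$ and maximizing depth is the same as minimizing $L_P$.

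For the Kendall $\tau$ case, I would invoke Proposition~\ref{prop:depth_kendall}, which already expresses $D_P(\sigma) = \binom{n}{2} - \sum_{i<j} p_{i,j}\mathbb{I}\{\sigma(i)>\sigma(j)\} - \sum_{i<j}(1-p_{i,j})\mathbb{I}\{\sigma(i)<\sigma(j)\}$. For each unordered pair $\{i,j\}$ with (say) $i<j$, the contribution to the subtracted sum is $p_{i,j}$ if $\sigma$ ranks $i$ after $j$ and $1-p_{i,j}$ if it ranks $i$ before $j$; the pair's contribution is therefore minimized by choosing the orientation agreeing with $\mathrm{sign}(p_{i,j}-1/2)$. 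Since $p_{i,j}>p_{j,i}$ exactly when $\sigma_0(i)<\sigma_0(j)$, the orientation that is simultaneously optimal for every pair is precisely the one induced by $\sigma_0$, and because $\sigma_0$ is an actual permutation this simultaneously-optimal choice is realizable. Hence $D_P(\sigma) \le D_P(\sigma_0)$ for all $\sigma$, with the bound tight at $\sigma_0$. (This is essentially the argument already behind Proposition~\ref{prop:Kendall}(i), restated for the H-center hypothesis rather than SST, so I would cite that proposition or its proof.)

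For the Spearman footrule depth, the corresponding decoupling is over positions rather than pairs, but it is less immediate because $d_1(\sigma,\Sigma) = \sum_k |\sigma(k)-\Sigma(k)|$ does not split into independent per-item optimizations — the values $\sigma(1),\dots,\sigma(n)$ must form a permutation. The standard fact I would use is that $\sum_k \mathbb{E}_P|\sigma(k) - \Sigma(k)|$, viewed as an assignment cost, is minimized by matching item $k$ to the rank position $r$ that minimizes $\mathbb{E}_P|r - \Sigma(k)|$, i.e. to a \emph{median} of the law of $\Sigma(k)$; and the key structural claim is that the H-center hypothesis forces these per-item medians to be jointly consistent, namely equal to $\sigma_0(k)$. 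Concretely, the condition $p_{i,j}>p_{j,i}$ for all $\sigma_0(i)<\sigma_0(j)$ means that the marginal distributions of $\Sigma(1),\dots,\Sigma(n)$ are stochastically ordered in the order given by $\sigma_0$, which pins the median of $\Sigma(k)$'s distribution at $\sigma_0(k)$ and makes the optimal assignment the identity-like matching $k\mapsto\sigma_0(k)$. I would then argue that since $\sigma_0$ already achieves, for every $k$ simultaneously, the position-wise minimum of $\mathbb{E}_P|\cdot - \Sigma(k)|$, it is a fortiori a minimizer of the (more constrained) permutation problem, so $L_P(\sigma_0) \le L_P(\sigma)$ for all $\sigma$ and the footrule depth is maximal there.

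\textbf{Main obstacle.} The Kendall part is routine given Proposition~\ref{prop:depth_kendall}/\ref{prop:Kendall}. The delicate step is the footrule argument: I need to carefully justify that the H-center hypothesis (a pairwise condition) implies the stochastic ordering of the one-dimensional marginals $\Sigma(k)$, and that this in turn makes the per-position medians coincide with $\sigma_0$ and, crucially, that a simultaneous per-position minimizer which happens to be a permutation is automatically a global minimizer of the permutation-constrained problem. Getting the marginal stochastic-ordering implication right — and handling possible ties in medians when some $p_{i,j}$ could in principle be close to $1/2$, although the hypothesis is strict — is where the real work lies; I would likely establish a short lemma relating pairwise dominance $p_{i,j}>p_{j,i}$ to stochastic dominance of $\Sigma(j)$ over $\Sigma(i)$ and then conclude.
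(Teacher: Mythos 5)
Your Kendall $\tau$ half is correct: the pair-by-pair decoupling of Proposition~\ref{prop:depth_kendall}, together with the observation that the H-center hypothesis makes the pairwise-optimal orientations simultaneously realizable by $\sigma_0$, gives global maximality directly (this is the argument behind Proposition~\ref{prop:Kendall}(i)). The paper instead invokes the adjacent-transposition equivalence \eqref{eq:depth_prob_pairwise} from Proposition~\ref{prop:rays}; your global version is if anything cleaner, since it does not need the extra step of chaining adjacent swaps from an arbitrary $\sigma$ back to $\sigma_0$.

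The footrule half, however, has a genuine gap: the lemma your plan hinges on --- that $p_{i,j}>p_{j,i}$ for all pairs ordered by $\sigma_0$ forces the rank marginals $\Sigma(k)$ to be stochastically ordered, hence forces the median of $\Sigma(k)$ to be $\sigma_0(k)$ --- is false. Take $n=3$ and, writing permutations as rank vectors $(\sigma(1),\sigma(2),\sigma(3))$, put mass $0.3$ on $(1,2,3)$, $0.35$ on $(2,1,3)$, $0.25$ on $(2,3,1)$ and $0.1$ on $(3,1,2)$. Then $p_{1,2}=0.55$, $p_{1,3}=0.65$, $p_{2,3}=0.75$, so the identity is the H-center of Definition~\ref{def:hcenter} (and $P$ is even SST), yet $\mathbb{P}\{\Sigma(1)\le 1\}=0.3<\mathbb{P}\{\Sigma(2)\le 1\}=0.45$, so pairwise dominance does not give stochastic dominance, and the medians of both $\Sigma(1)$ and $\Sigma(2)$ equal $2$: the per-position minimizers are neither located at $\sigma_0(k)$ nor jointly consistent, so your relaxation bound is not attained at $\sigma_0$ and the assignment argument collapses. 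Worse, in this example $\mathbb{E}_P[d_1(\Sigma,(2,1,3))]=1.8<2.1=\mathbb{E}_P[d_1(\Sigma,(1,2,3))]$, i.e.\ the H-center is not footrule-deepest at all, so no argument can derive the footrule conclusion from the pairwise H-center hypothesis alone; additional structure on $P$ beyond its pairwise marginals is needed. Note also that the paper's own footrule proof takes a different route from yours: it is a direct exchange computation comparing $\sigma_0$ only with its adjacent transpositions $\sigma_0 t_{ij}$ (a local comparison that never passes through per-position medians), whereas your strategy, if its key lemma were true, would have yielded genuinely global optimality --- but the example above shows the obstruction is intrinsic, not merely a missing lemma.
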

\begin{proof}
As shown in Proposition~\ref{def:hcenter}, the H-center is $\sigma_0$
It remains to recall that Equation\eqref{eq:depth_prob_pairwise} in Proposition~\ref{prop:rays} states that for SST models and the \ken distance
$ D(\sigma) > D(\sigma t_{i,j}) \Leftrightarrow  p_{j,i} < p_{i,j}$.

For the Spearman's distance, let us show that $D_P(\sigma_0) \geq D_P(\sigma_1) \Leftrightarrow \mathbb{E}_P(d(\Sigma, \sigma_0)) \leq \mathbb{E}_P(d(\Sigma, \sigma_1))$, and the proof of our proposition will follow from direct application of this result.

Let $\sigma$ be any permutation.
\begin{equation*}
\begin{split}
    d(\sigma, \sigma_1) & = \sum_{k=1}^N | \sigma(k) - \sigma_1(k) | \\
    & = \sum_{k\neq i,j} | \sigma(k) - \sigma_0(k) | + | \sigma(i) - \sigma_0(i) - 1 | + | \sigma(j) - \sigma_0(j) + 1 | \\
    & = \begin{cases}
      d(\sigma, \sigma_0) \text{ if } \textcolor{NavyBlue}{\sigma(i)<\sigma(j) \leq \sigma_0(i)<\sigma_0(j) \text{ or } \sigma_0(i)<\sigma_0(j) \leq \sigma(i)<\sigma(j)} \\
      \quad \quad \text{ or } \textcolor{BurntOrange}{\sigma(j)<\sigma(i) \leq \sigma_0(i)<\sigma_0(j) \text{ or } \sigma_0(i)<\sigma_0(j) \leq \sigma(j)<\sigma(i)}  \\
      d(\sigma, \sigma_0) + 2 \text{ if } \textcolor{NavyBlue}{\sigma(i)<\sigma_0(i)<\sigma_0(j)<\sigma(j)}\\
      d(\sigma, \sigma_0) - 2 \text{ if } \textcolor{BurntOrange}{\sigma(j) \leq \sigma_0(i)<\sigma_0(j) \leq \sigma(i)}
    \end{cases}  
\end{split}
\end{equation*}
Notice the use of color: in blue are cases where $i$ and $j$ are ranked by $\sigma$ the same way as does $\sigma_0$, and in orange are the opposite cases. Then
\begin{equation*}
    \begin{split}
        & \mathbb{E}_P(d(\sigma, \sigma_0)) \leq \mathbb{E}_P(d(\sigma, \sigma_1)) \\
        \Leftrightarrow & \sum_{\sigma} \left[ \mathbb{I}(\textcolor{NavyBlue}{\text{blue cases}}) - \mathbb{I}(\textcolor{BurntOrange}{\text{orange cases}}) \right] \mathbb{P}(\Sigma = \sigma) \geq 0 \\
        \Leftrightarrow & p_{i,j} - (1 - p_{i,j}) \geq 0 \\
        \Leftrightarrow & p_{i,j} \geq 1/2
    \end{split}
\end{equation*}
This concludes the proof.
\end{proof}

\paragraph{$M$-center definition.}~\\

Let us focus on a more natural, metric-based center definition.

\begin{definition}
$\sigma_0$ is $M$-center for distance $d$ and distribution $P$ if: $\forall (\sigma_1, \sigma_2, \sigma_3)$ such that $d(\sigma_0, \sigma_1) = d(\sigma_0, \sigma_2) < d(\sigma_0, \sigma_3)$, we have:
$\mathbb{P}(\Sigma = \sigma_1) = \mathbb{P}(\Sigma = \sigma_2) \geq \mathbb{P}(\Sigma = \sigma_3)$.
\end{definition}

We have the following proposition:

\begin{proposition}
    If $d$ is a symmetric distance, and if distribution $P$ has a $M$-center for $d$, the maximality property is satisfied for distance $d$.
\end{proposition}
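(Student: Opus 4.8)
The plan is to unfold the definition of the metric-based ranking depth and reduce the claimed maximality at a metric center (the $M$-center) to a statement purely about expected distances. Recall that by Definition~\ref{def:depth_rankings} we have $D_P(\sigma) = \|d\|_\infty - L_P(\sigma)$ with $L_P(\sigma) = \mathbb{E}_P[d(\Sigma,\sigma)]$, so $D_P$ is maximized exactly where $L_P$ is minimized. Hence it suffices to show that if $\sigma_0$ is an $M$-center for the symmetric distance $d$, then $L_P(\sigma_0) \le L_P(\sigma)$ for every $\sigma \in \mathfrak{S}_n$. This turns the problem into a comparison of two sums $\sum_{\sigma'} P(\sigma') d(\sigma', \sigma_0)$ and $\sum_{\sigma'} P(\sigma') d(\sigma', \sigma)$.

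The key step is a rearrangement/coupling argument on the level sets of $d(\sigma_0, \cdot)$. First I would partition $\mathfrak{S}_n$ into the ``spheres'' $S_r = \{\sigma' : d(\sigma_0, \sigma') = r\}$ for $r$ ranging over the finitely many attained values. The defining property of an $M$-center says that $P$ is constant on each sphere $S_r$ — call this common value $q_r$ — and that $q_r$ is nonincreasing in $r$ (since $d(\sigma_0,\sigma_1)=d(\sigma_0,\sigma_2)<d(\sigma_0,\sigma_3)$ forces $P(\sigma_1)=P(\sigma_2)\ge P(\sigma_3)$). Using symmetry of $d$, one also knows that $d(\sigma_0,\sigma')=d(\sigma',\sigma_0)$, so $L_P(\sigma_0)=\sum_r q_r \, r \, \#S_r$. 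Now fix any other permutation $\sigma$. The idea is to compare, sphere by sphere, the ``mass times distance'' contributions: because $P$ puts more mass on spheres near $\sigma_0$ and less mass far away, while $d(\cdot,\sigma)$ need not be minimized near $\sigma_0$, one shows by an exchange argument (Abel summation, or the classical rearrangement inequality in the form: if a weight sequence is nonincreasing and we redistribute a fixed total mass, moving it to points of larger $d(\cdot,\sigma)$ value does not decrease the weighted sum) that $\sum_{\sigma'} P(\sigma') d(\sigma',\sigma) \ge \sum_{\sigma'} P(\sigma') d(\sigma',\sigma_0)$.

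Concretely, I would write $L_P(\sigma) - L_P(\sigma_0) = \sum_{\sigma'} P(\sigma')\big(d(\sigma',\sigma) - d(\sigma',\sigma_0)\big)$ and group terms by the value of $d(\sigma_0,\sigma')$; within each sphere $S_r$ the weight $P(\sigma')=q_r$ is constant and factors out, giving $\sum_r q_r \sum_{\sigma'\in S_r}\big(d(\sigma',\sigma)-d(\sigma',\sigma_0)\big) = \sum_r q_r\big(\sum_{\sigma'\in S_r} d(\sigma',\sigma) - r\,\#S_r\big)$. Letting $a_r := \sum_{\sigma'\in S_r} d(\sigma',\sigma) - r\,\#S_r$, one notes that $\sum_r a_r = L_{U}(\sigma)\cdot n! - L_{U}(\sigma_0)\cdot n! \ge 0$ where $U$ is the uniform distribution (since the uniform-average distance to any point is the same constant by transitivity of $\mathfrak{S}_n$ acting on itself, so in fact $\sum_r a_r = 0$), and moreover the partial sums $\sum_{r \le t} a_r$ are $\le 0$ for every threshold $t$ — intuitively, a ball of radius $t$ around $\sigma_0$ is ``as close as possible'' on average among all sets of that size. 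Combining this ``partial sums nonpositive, total sum zero'' structure with the fact that $(q_r)$ is nonincreasing and nonnegative, Abel summation yields $\sum_r q_r a_r \ge 0$, i.e. $L_P(\sigma) \ge L_P(\sigma_0)$, which is the claim.

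\textbf{Main obstacle.} The delicate point is justifying the inequality $\sum_{r\le t} a_r \le 0$, i.e. that balls centered at $\sigma_0$ minimize the total distance to an arbitrary target $\sigma$ among all subsets of the same cardinality. For general symmetric $d$ this is not automatic and may genuinely require the sphere sizes $\#S_r$ to be independent of the basepoint (which holds when $d$ is right-invariant, as for all the distances considered in the paper) together with a compatibility between the metric structure and the group structure; so the honest version of this proof will likely assume $d$ is right-invariant (or bi-invariant), and I would expect the authors' actual argument to invoke exactly such a symmetry, or alternatively to prove the inequality directly in a pairwise/transposition-by-transposition fashion analogous to Proposition~\ref{prop:rays}, walking $\sigma_0$ toward $\sigma$ along a geodesic and checking $L_P$ is nondecreasing at each step using the $M$-center monotonicity on the two spheres whose masses change.
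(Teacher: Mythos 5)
Your reduction to showing $L_P(\sigma_0)\le L_P(\sigma)$ and the decomposition over the spheres $S_r$ around the $M$-center, with constant and nonincreasing weights $q_r$, is sound, but the argument breaks at exactly the step you flag, and moreover with a sign error that reverses the conclusion. With your $a_r=\sum_{\sigma'\in S_r}\bigl(d(\sigma',\sigma)-d(\sigma',\sigma_0)\bigr)$ and $A_t=\sum_{r\le t}a_r$, Abel summation gives $\sum_r q_r a_r=\sum_t (q_t-q_{t+1})A_t + q_{\max}A_{\max}$, so to obtain $L_P(\sigma)-L_P(\sigma_0)\ge 0$ you need the partial sums to satisfy $A_t\ge 0$, not $A_t\le 0$ as you assert; indeed $A_0=d(\sigma_0,\sigma)>0$ already contradicts your stated inequality, and if your premises were true your own Abel step would yield $L_P(\sigma)\le L_P(\sigma_0)$, the opposite of maximality. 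The correctly signed lemma ($A_t\ge 0$: the ball $B_t(\sigma_0)$ has total distance to any $\sigma$ at least its total distance to its own center) does hold when $d$ is right-invariant, because then the multiset $\{d(\sigma',\sigma):\sigma'\in\mathfrak{S}_n\}$ is the same for every $\sigma$ and $\sum_{\sigma'\in B_t(\sigma_0)}d(\sigma',\sigma_0)$ is the sum of the $\#B_t$ smallest values of that multiset; the same invariance is also what makes $\sum_r a_r=0$ (transitivity of the action alone is not enough — you need $d$ invariant under it). None of this follows from symmetry of $d$ alone, which is all the proposition assumes, so as a proof of the stated claim there is a genuine gap: the decisive inequality is both mis-signed and unproven, even though you honestly identify it as the obstacle.

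For comparison, the paper does not take this global route; it runs the local, transposition-based argument you anticipated as the alternative. It compares the $M$-center $\sigma_0$ only with an adjacent swap $\sigma_1=\sigma_0$ with items $i,j$ exchanged, notes that $d(\sigma_1,\sigma)-d(\sigma_0,\sigma)=\pm c_{i,j}$ according to whether $\sigma$ orders $(i,j)$ as $\sigma_0$ does, groups permutations by their distance $r$ to $\sigma_0$, and pairs the sphere at radius $r$ with the sphere at radius $\|d\|_\infty-r$, using the $M$-center monotonicity $P_r\ge P_{\|d\|_\infty-r}$ to get positivity. Crucially, the paper also has to posit an extra combinatorial hypothesis on sphere counts (that within each sphere the pair-agreeing permutations dominate up to radius $\|d\|_\infty/2$, with the imbalance symmetric about $\|d\|_\infty/2$) — precisely the kind of compatibility between the metric and the group structure you suspected is unavoidable — and, as written, it only establishes that $\sigma_0$ is deeper than its adjacent swaps rather than deeper than every permutation. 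So your plan is a genuinely different, more global argument, and it becomes a valid proof once the sign is corrected and right-invariance (or an equivalent sphere-structure assumption) is added to the hypotheses; in its submitted form, however, it does not prove the proposition.
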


Most distances (as the one studied in this paper) are symmetric. In addition, the proposition applies to Mallows models as they do exhibit a $S$-center.

\begin{proof}
Let $\sigma_0$ be a $M$-center for $P$ and distance $d$, with $(i,j)$ such that $\sigma_0(i) < \sigma_0(j) = \sigma_0(i)+1$. Let $\sigma_1$ be the same ranking as $\sigma_0$ except it swaps the ranks of $i$ and $j$.

We show that $D_P(\sigma_0) > D_P(\sigma_0)$ i.e. $\mathbb{E}_P(d(\Sigma, \sigma_0)) < \mathbb{E}_P(d(\Sigma, \sigma_1))$ i.e. $\sum_{\sigma} \mathbb{P}(\Sigma = \sigma) \left[ d(\sigma_1, \sigma) - d(\sigma_0, \sigma) \right] > 0$.

Let $\sigma$ be any ranking such that $d(\sigma_0, \sigma) = d$. We have:
\begin{itemize}
    \item (1) $d(\sigma_0, \sigma) < d(\sigma_1, \sigma) = d + c_{i,j}$ iff $(i,j)$ is ranked the same way in $\sigma_0$ and $\sigma$
    \item (2)  $d(\sigma_0, \sigma) > d(\sigma_1, \sigma) = d - c_{i,j}$ else,
\end{itemize} where $c_{i,j} > 0$ is a constant depending only on $(i,j)$. For example, if $d$ is Kendall's tau, $c_{i,j} = 1$, if $d$ is Spearman's footrule, $c_{i,j} = 2$, if $d$ is Spearman's rho, $c_{i,j} = 2|\sigma(j) - \sigma(i)|$.

In addition, let us write $\#d$ the number of rankings at distance $d$ from $\sigma_0$, which we can divide into the two groups (1) and (2). Let us then write $\#d(1)$ (resp. $\#d(2)$) the number of rankings $\sigma$ at distance $d$ from $\sigma_0$ that rank $i$ and $j$ the same way (resp. differently) as $\sigma_0$. We suppose the following: if $d \leq || d ||_{\infty} / 2$, then $\#d(1) \geq \#d(2)$ (and if $d > || d ||_{\infty} / 2$, then $\#d(1) \leq \#d(2)$), and more precisely, $| \#d(1) - \#d(2) | = k(d) = k( ||d||_{\infty} - d ) \; \forall \; d \leq ||d||_{\infty}/2$, meaning that this cardinality difference depends only on the distance to half of the maximal distance.

Let us also write $P_d = \mathbb{P}(\Sigma = \sigma)$ for any $\sigma$ at distance $d$ from $\sigma_0$.

\begin{equation*}
    \begin{split}
        \sum_{\sigma} \mathbb{P}(\Sigma = \sigma) \left[ d(\sigma_1, \sigma) - d(\sigma_0, \sigma) \right] & = \sum_{d=0}^{||d||_{\infty}} P_d \times \#d \times |c_{i,j}| \\
        & = \sum_{d=0}^{||d||_{\infty}} P_d \times (\#d(1) - \#d(2)) \times c_{i,j} \\
        & = \sum_{d=0}^{||d||_{\infty}/2} P_d \times k(d) \times c_{i,j} - \sum_{d'=||d||_{\infty}/2 + 1}^{||d||_{\infty}} \underbrace{P_{d'}}_{ < P_{||d||_{\infty} - d'} } \times k(d') \times c_{i,j}\\
        & > \sum_{d=0}^{||d||_{\infty}/2} P_d \times c_{i,j} \times ( k(d) - k(||d||_{\infty} - d) ) \\
        & > 0
    \end{split}
\end{equation*}

\end{proof}

\subsubsection{Proofs of Propositions \ref{prop:local_monotonicity} and \ref{prop:global_monotonicity} (monotonicity)}
\label{suppl:proof_monotonicity}

The Monotonicity properties~\ref{property:local_monotonicity} and~\ref{property:local_monotonicity} do not hold in general. As an illustration, fig.~\ref{fig:rank_mat} shows the distance to the median as a function of depth for every rankings in sample generated by Mallows or Placket-Luce distributions.

\begin{figure*}[htbp]
\begin{center}
    \subfigure[]{\includegraphics[width=0.24\textwidth]{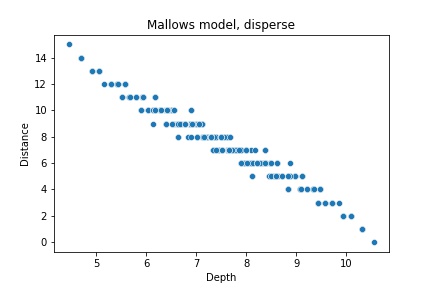}}
    \subfigure[]{\includegraphics[width=0.24\textwidth]{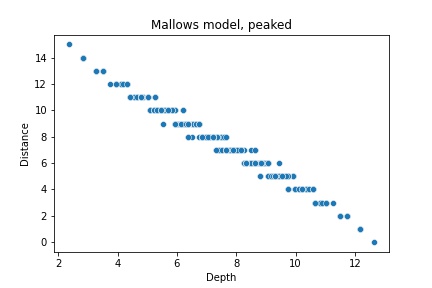}}
    \subfigure[]{\includegraphics[width=0.24\textwidth]{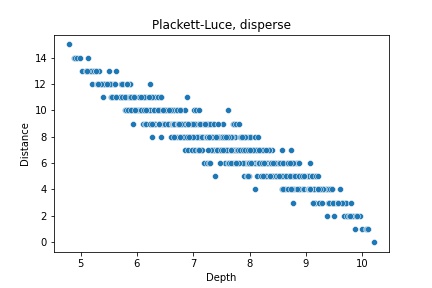}}
    \subfigure[]{\includegraphics[width=0.24\textwidth]{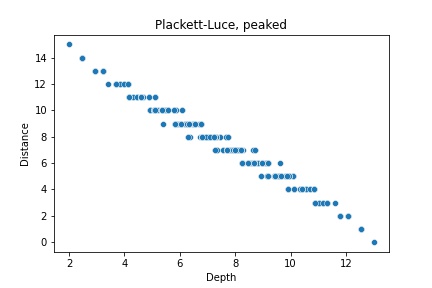}}
\caption{Each permutation in $\mathfrak{S}_6$ is a point displaying its depth (X-axis) and distance to the median (Y-axis). The ranking models are (a) Mallows model with $\phi=\mathrm{e}^{-0.375}$, (b) Mallows model with $\phi=\mathrm{e}^{-0.625}$, (c) Plackett-Luce with $\boldsymbol{w}_2=(\mathrm{e}^{n},...,\mathrm{e}^{1})$, (b) Plackett-Luce with $\boldsymbol{w}_2=({n},...,{1})$.}
\label{fig:rank_mat}
\end{center}
\end{figure*}

However, we derive two conditions making these monotonicity properties to hold, restricting ourselves to the case where the distance $d$ used is Kendall $\tau$, and also to distributions $P$ that are strictly stochastically transitive (SST) to ensure uniqueness of the \textit{central} ranking $\sigma^{*}$ (see \cite{CKS17}).

First, let us recall the local monotonicity property and our local monotonicity proposition: \propertylocalmonotonicity* \proplocalmonotonicity*

The first part of this proposition follows immediately form Proposition~\ref{prop:rays}: as we move further from the median (as measured by the Kendall $\tau$ distance) swapping adjacent ranks, the depth is strictly decreasing.

Now, we derive a second, stronger local monotonicity property. The following propostion explicit the conditions under which it is satisfied.

\begin{proposition}
For a generic SST distribution $P$, if two rankings $\sigma$ and $\sigma'$ with $d = d(\sigma^{*}, \sigma)$ and $d' = d(\sigma^{*}, \sigma')$ satisfies the following:
$$ 2 \left[ \sum_{(i,j) | p_{i,j}<1/2, \; \sigma \text{ correct}, \; \sigma' \text{ incorrect}} p_{i,j} - \sum_{(i,j) | p_{i,j}<1/2, \;\sigma \text{ incorrect}, \; \sigma' \text{ correct}} p_{i,j} \right] - (d'-d) \leq 0 $$

Then the following property holds:
\begin{equation}
    d=d(\sigma^{*}, \sigma) < d'=d(\sigma^{*}, \sigma^{'}) \Longrightarrow D_P(\sigma) \geq D_P(\sigma^{'}),
\end{equation}
where $d$ is Kendall $\tau$, $\sigma^{*}$ is Kemeny's median and "$\sigma$ correct on $(i,j)$" means that $\sigma$ and $\sigma^{*}$ order the pair $(i,j)$ the same way.
\label{prop:monotonicity_local_2}
\end{proposition}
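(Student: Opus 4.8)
The plan is to reduce the claimed implication to the formula for $D_P^\ast - D_P(\sigma)$ supplied by Proposition~\ref{prop:Kendall}(iii). Under the SST assumption the deepest ranking $\sigma^\ast$ is unique and, by that proposition, for every $\sigma\in\mathfrak{S}_n$ we have
\begin{equation*}
D_P^\ast - D_P(\sigma) = 2\sum_{i<j}\Bigl\lvert p_{i,j}-\tfrac12\Bigr\rvert\,\mathbb{I}\bigl\{(\sigma(j)-\sigma(i))(p_{i,j}-\tfrac12)<0\bigr\},
\end{equation*}
i.e. the depth deficit of $\sigma$ is $2$ times the sum, over the pairs on which $\sigma$ \emph{disagrees} with $\sigma^\ast$, of $\lvert p_{i,j}-\tfrac12\rvert$. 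Consequently, the inequality $D_P(\sigma)\ge D_P(\sigma')$ is equivalent to
\begin{equation*}
\sum_{(i,j)\ \sigma\text{ incorrect}}\Bigl\lvert p_{i,j}-\tfrac12\Bigr\rvert \ \le\ \sum_{(i,j)\ \sigma'\text{ incorrect}}\Bigl\lvert p_{i,j}-\tfrac12\Bigr\rvert .
\end{equation*}
First I would cancel the pairs on which $\sigma$ and $\sigma'$ are \emph{both} incorrect (they contribute equally to both sides), so that the inequality becomes a comparison between the pairs that are correct for $\sigma$ but incorrect for $\sigma'$ and the pairs that are incorrect for $\sigma$ but correct for $\sigma'$.

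Next I would bring the Kendall-$\tau$ distances into play. Since $d_\tau(\sigma^\ast,\sigma)$ counts exactly the pairs on which $\sigma$ disagrees with $\sigma^\ast$, we have $d = \#\{(i,j):\sigma\text{ incorrect}\}$ and $d' = \#\{(i,j):\sigma'\text{ incorrect}\}$. After the cancellation above, the number of "$\sigma$ correct, $\sigma'$ incorrect" pairs minus the number of "$\sigma$ incorrect, $\sigma'$ correct" pairs equals $d'-d$. I would then split each $\lvert p_{i,j}-\tfrac12\rvert$ term according to whether $p_{i,j}<\tfrac12$ or $p_{i,j}>\tfrac12$; the pairs with $p_{i,j}>\tfrac12$ can be handled symmetrically (or folded into the $p_{i,j}<\tfrac12$ case by relabelling), so the whole estimate is controlled by the quantity appearing in the hypothesis. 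Writing $A=\sum_{(i,j)\mid p_{i,j}<1/2,\ \sigma\text{ correct},\ \sigma'\text{ incorrect}}p_{i,j}$ and $B=\sum_{(i,j)\mid p_{i,j}<1/2,\ \sigma\text{ incorrect},\ \sigma'\text{ correct}}p_{i,j}$, a short manipulation (using $\lvert p-\tfrac12\rvert=\tfrac12-p$ when $p<\tfrac12$ and the counting identity for $d'-d$) shows that the required depth inequality holds precisely when $2(A-B)-(d'-d)\le 0$, which is exactly the stated condition.

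The main obstacle I anticipate is bookkeeping the four categories of pairs (correct/incorrect for $\sigma$ $\times$ correct/incorrect for $\sigma'$), intersected with the sign of $p_{i,j}-\tfrac12$, and making sure the count of "newly incorrect" minus "newly correct" pairs is correctly identified with $d'-d$ after the both-incorrect pairs cancel; one must be careful that "correct" is defined relative to $\sigma^\ast$ and that the pairs $(i,j)$ with $p_{i,j}=\tfrac12$ contribute $0$ to the depth deficit (by SST there are none with $i<j$, but one should still note it). Once this combinatorial accounting is set up cleanly, the rest is an elementary rearrangement of Proposition~\ref{prop:Kendall}(iii), with no analytic subtlety. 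I would also remark that taking all $\lvert p_{i,j}-\tfrac12\rvert$ equal recovers the plain Kendall-$\tau$ monotonicity of Proposition~\ref{prop:local_monotonicity}, as a sanity check.
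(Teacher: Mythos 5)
Your proposal is correct and follows essentially the same route as the paper: the paper derives this proposition directly from the identity \eqref{eq:monotonicity_proof0} established inside the proof of Proposition \ref{prop:global_monotonicity}, which is exactly the pairwise bookkeeping $L_P(\sigma)-L_P(\sigma') = 2(A-B)-(k-k')$ with $k-k'=d'-d$ that you recover from Proposition \ref{prop:Kendall}(iii) after cancelling the pairs on which both rankings err. The only cosmetic difference is the prior result invoked (the closed-form depth deficit rather than the direct expansion of $L_P(\sigma)-L_P(\sigma')$), and your reindexing of the pairs with $p_{i,j}>1/2$ plays the same role as the paper's WLOG assumption that $p_{i,j}<1/2$ for all $i<j$.
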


The proof of this proposition can be directly derived from the proof of Proposition~\ref{prop:global_monotonicity} (eq. \ref{eq:monotonicity_proof0})

Second, we recall the global monotonicity property and our proposition: \propertyglobalmonotonicity* \propglobalmonotonicity*

\begin{proof}
$P$ is SST so $\forall \; (i,j,l), p_{i,j} > 1/2$ and $p_{j,l} > 1/2 \Longrightarrow p_{i,l} > 1/2$.
WLOG, let us suppose that $\forall i < j, p_{i,j} < 1/2$. As $\sigma^{*}$ is the unique Kemeny's median, we have $\sigma^{*}(n) < \sigma^{*}(n-1) < ... < \sigma^{*}(1)$ (i.e $n \succ n-1 \succ ... \succ 1$).

Let $(\sigma, \sigma^{'})$ be two rankings such that $d = d(\sigma^{*}, \sigma) < d(\sigma^{*}, \sigma^{'}) = d'$. 
Let us write $k:=\#  \{ (i,j) | \mathbb{I} \left( (\sigma^*(i) - \sigma^*(j)) (\sigma(i) - \sigma(j)) \right) > 0 \times \mathbb{I} \left( (\sigma^*(i) - \sigma^*(j)) (\sigma'(i) - \sigma'(j)) \right) < 0 \}$, which means that there are $k$ pairs $(i,j)$ on which $\sigma$ agrees with $\sigma^*$ (i.e. $\sigma$ is "correct" on $(i,j)$) but $\sigma'$ disagrees with $\sigma^*$ (i.e. $\sigma'$ is "incorrect" on $(i,j)$). We define $k'$ similarly by interchanging the roles of $\sigma$ and $\sigma'$.

Our goal is then to find a condition on the distribution of rankings $P$ such that: $$\max_{\sigma, \sigma'} L_P(\sigma) - L_P(\sigma') < 0, \quad \text{ with } k>k'$$

First, let us study the range of possible values for $k$. Let us divide the $n(n-1)/2$ pairs $i<j$ following:
\begin{align*}
    &1) \quad \sigma \text{ agrees with } \sigma^{*} \text{ and } \sigma' \text{ disagrees with } \sigma^{*} & \to k \text{ pairs} \\
    &2) \quad \sigma \text{ agrees with } \sigma^{*} \text{ and } \sigma' \text{ agrees with } \sigma^{*} & \to a \text{ pairs} \\
    &3) \quad \sigma \text{ disagrees with } \sigma^{*} \text{ and } \sigma' \text{ agrees with } \sigma^{*} & \to k' \text{ pairs} \\
    &4) \quad \sigma \text{ disagrees with } \sigma^{*} \text{ and } \sigma' \text{ disagrees with } \sigma^{*} & \to b \text{ pairs}
\end{align*}

We then have
\begin{equation*}
  \left\{
    \begin{aligned}
      & k' + b = d \\
      & k + b = d' \\
      & k + a + k' + b = n(n-1)/2
    \end{aligned}
  \right. \text{ so } \\
  \left\{
    \begin{aligned}
      & k' = k + d - d' \\
      & b = d' - k \\
      & a = n(n-1)/2 - k - d
    \end{aligned}
  \right.
\end{equation*}

Finally, since we have $0 \leq k, k', a, b \leq n(n-1)/2$, we end up having the following relevant conditions on $k$:
$$ d'-d \leq k \leq d' $$
Now, let us write $p^{(m)}$ the $m$-th hightest element of the vector $(p_{i,j})_{i<j}$ of size $n(n-1)/2$, so that $1/2 > p^{(1)} > p^{(2)} > ... > p^{(n(n-1)/2)}$.
Then, we have

\begin{align}
    \max_{\sigma, \sigma'} L_P(\sigma) - L_P(\sigma^{'}) &= \max_{\sigma, \sigma'} \sum_{i<j} p_{i,j}\left[ \mathbb{I} \{\sigma(i)-\sigma(j)>0 \} - \mathbb{I} \{\sigma^{'}(i)-\sigma^{'}(j)>0 \} \right] + \nonumber \\
    & \quad \quad \quad \quad (1-p_{i,j})\left[ \mathbb{I} \{\sigma(i)-\sigma(j)<0 \} - \mathbb{I} \{\sigma^{'}(i)-\sigma^{'}(j)<0 \} \right] \nonumber \\
    & = \max_{\sigma, \sigma'} \sum_{i<j} (2p_{i,j} - 1) \left[\mathbb{I} \{\sigma(i)-\sigma(j)>0 \} + \mathbb{I} \{\sigma^{'}(i)-\sigma^{'}(j)<0 \} - 1 \right] \nonumber \\
    &= \max_{\sigma, \sigma'} \sum_{i<j; \sigma \text{corr.}, \sigma^{'} \text{ incorr.}} (2p_{i,j}-1) - \sum_{i<j; \sigma \text{ incorr.}, \sigma^{'} \text{ corr.}} (2p_{i,j}-1) \nonumber \\
    &\leq \max_{\sigma, \sigma'} 2 \left[ \sum_{i<j; \sigma \text{ corr.}, \sigma^{'} \text{ incorr.}} p_{i,j} -  \sum_{i<j; \sigma \text{ incorr.}, \sigma^{'} \text{ corr.}} p_{i,j} \right] - (k - k') \nonumber \\
    & \text{ with } k'=k-(d'-d) \label{eq:monotonicity_proof0} \\
    & \leq 2 \left[\underbrace{p^{(1)} + ... + p^{(k)}}_{k \text{ elements}} - \underbrace{p^{(n(n-1)/2 - k' - 1)} - ... -p^{(n(n-1)/2)}}_{k'=k-(d'-d) \text{ elements}} \right] - (d'-d) \nonumber \\
    & \leq 2 \left[(p^{(1)} - p^{(n(n-1)/2 - k' - 1)}) + ... + (p^{(k')} - p^{(n(n-1)/2)}) + \right. \nonumber \\ 
    & \quad \quad \quad \quad \quad \quad \quad \quad \quad \quad \quad \quad \quad  \left. p^{(k'+1)} + ... + p^{(k)}  \right] - (d'-d) \nonumber \\
    & \leq 2 \left[(k' \times s + (1/2 - h) (d'-d) \right] - (d'-d) \nonumber \\
    & \leq 2(d \times s - h) \nonumber \\
    & \leq 0 \nonumber
\end{align}
\end{proof}

Note also that if $P$ is non-SST, then the global monotonicity property never holds, which can be easily proven by taking a counter-example and following the same proof structure.

\subsection{Proof Proposition \ref{prop:sample} (learning rate bounds)}
\label{suppl:proof_bounds}

Here we prove the finite sample results stated in the proposition below.

\propsample*

\begin{proof}
Hoeffding inequality combined with  the union bound yields: $\forall t>0$,
\begin{multline*}
\mathbb{P}\left\{ \sup_{\sigma\in \mathfrak{S}_n} \left\vert \widehat{D}_N(\sigma)-D_P(\sigma)\right\vert  >t  \right\}\leq 
 \sum_{\sigma\in \mathfrak{S}_n}\mathbb{P}\left\{ \frac{1}{N}\left\vert\sum_{i=1}^N\left\{ d(\Sigma_i, \sigma)-\mathbb{E}_P[d(\Sigma,\sigma)]    \right\}  \right\vert     >t \right\}
\leq   2n!\exp\left(-\frac{N2t^2}{\vert\vert d\vert\vert_{\infty}^2}  \right),
\end{multline*}
which establishes assertion $(i)$. 

Turning to the proof of assertion $(ii)$, we introduce
$$
\bar{S}_P(u)=\mathbb{P}_{\Sigma}\{\widehat{D}_{\lfloor N/2 \rfloor}(\Sigma)\geq u\},\; \; u\geq 0.
$$
By triangular inequality, we have with probability one:
\begin{multline}\label{eq:triang}
\sup_{u\geq 0}\left\vert (K_h\ast\widehat{S}_N)(u)-(K_h\ast S_P)(u)\right\vert \leq 
\sup_{u\geq 0}\left\vert (K_h*\widehat{S}_N)(u)-(K_h*\bar{S}_P)(u)\right\vert +\\\sup_{u\geq 0}\left\vert (K_h*S_P)(u)-(K_h*\bar{S}_P)(u)\right\vert .
\end{multline}
Observe that we almost-surely have:
$$
  \sup_{u\geq 0}\left\vert (K_h*\widehat{S}_N)(u)-(K_h*\bar{S}_P)(u)\right\vert \leq \sup_{u\geq 0}\left\vert \widehat{S}_N(u)-\bar{S}_P(u)\right\vert.
$$
By virtue of Dvoretsky-Kiefer-Wolfovitz inequality, we have, for all $t\geq 0$,
\begin{multline}\label{eq1}
\mathbb{P}\left\{ \sup_{u\geq 0}\left\vert \widehat{S}_N(u)-\bar{S}_P(u)\right\vert  \geq t \right\}=\mathbb{E}\left[\mathbb{P}\left\{ \sup_{u\geq 0}\left\vert \widehat{S}_N(u)-\bar{S}_P(u)\right\vert  \geq t \mid \Sigma_1,\; \ldots,\; \Sigma_{\lfloor N/2 \rfloor}\right\}\right]
\leq 2\exp(-2nt^2).
\end{multline}

Let $s>0$, we introduce the event, independent from $\Sigma$,
$$
\mathcal{E}_{N,s}=\left\{  \sup_{\sigma\in \mathfrak{S}_n}\left\vert \widehat{D}_{\lfloor N/2 \rfloor} (\sigma)-D_P(\sigma)\right\vert \leq s  \right\}.
$$
We almost-surely have: $\forall u\geq 0$,
$$
\bar{S}_P(u)=\mathbb{P}_{\Sigma}\{ D_{P}(\Sigma)\geq u + D_{P}(\Sigma)-\widehat{D}_{\lfloor N/2 \rfloor}(\Sigma)\}.
$$
Consequently, on the event $\mathcal{E}_{N,s}$, it holds that: $\forall u\geq 0$,
\begin{multline*}
(K_h*S_P)(u+s)-(K_h*S_P)(u)\leq (K_h*\bar{S}_P)(u)-(K_h*\widehat{S}_N)(u)\leq (K_h*S_P)(u)-(K_h*S_P)(u-s),
\end{multline*}
as well as
\begin{equation}\label{eq2}
\sup_{u\geq 0}\left\vert (K_h*S_P)(u)-(K_h*\bar{S}_P)(u)\right\vert \leq \vert\vert K'\vert\vert_{\infty}(s/h),
\end{equation}
since the mapping $K_h*S_P$ being differentiable, with derivative bounded by $\vert\vert K'\vert\vert_{\infty}/h$ in absolute value. Hence, using the union bound, combining \eqref{eq:triang} with assertion $(i)$ and \eqref{eq1}-\eqref{eq2}, we get that for all $\delta\in (0,1)$, with probability larger than $1-\delta$:
\begin{equation*}
\sup_{u\geq 0}\left\vert (K_h\ast\widehat{S}_N)(u)-(K_h\ast S_P)(u)\right\vert \leq \left( \sqrt{\log(4/\delta)}+\vert\vert d\vert\vert_{\infty}\sqrt{\log(4n!/\delta)} \right)/\sqrt{2N}.
\end{equation*}
This proves assertion $(ii)$.
\end{proof}

\subsection{Proofs of Propositions \ref{prop:depth_kendall}, \ref{prop:Kendall} and \ref{prop:mallows} (results for Kendall $\tau$ - Mallows model)}
\label{suppl:proof_kendall}

\propdepthkendall*
\begin{proof}
The proof is a simple computation, recalling that $\forall i \neq j, p_{i,j}=\mathbb{P}(\Sigma(i) < \Sigma(j))$. Then, $D_P(\sigma) = \vert \vert d \vert \vert_{\infty} - \mathbb{E}_P( d_{\tau}(\Sigma, \sigma) ) = \binom{n}{2} - \sum_{i<j} \mathbb{P}\left( (\Sigma(i) - \Sigma(j)) (\sigma(i) - \sigma(j)) < 0 \right) = \binom{n}{2} - \sum_{i<j}p_{i,j}\mathbb{I}(\sigma(i)>\sigma(j)) - \sum_{i<j}(1-p_{i,j})\mathbb{I}(\sigma(i)<\sigma(j))$ by simple conditioning.
\end{proof}

\propkendall*
\begin{proof}
Observing that $n(n-1)/2=L_P(\sigma)+L_P(n-\sigma)$ for all $\sigma\in \mathfrak{S}_n$ in the Kendall $\tau$ case, the result is essentially a reformulation of Theorem 5 in \cite{CKS17} in terms of ranking depth, insofar as $D_P=n(n-1)/2-L_P$.
\end{proof}

Let us recall some classical results about the Mallows distribution. Taking $d=d_{\tau}$, the Mallows model introduced in \cite{Mallows57} is the unimodal distribution $P_{\theta}$ on $\mathfrak{S}_n$ parametrized by $\theta=(\sigma_0, \phi_0)\in \mathfrak{S}_n\times (0,1]$: $\forall \sigma \in \mathfrak{S}_n$,
$P_{\theta}(\sigma)=(1/Z_0)\exp( d_{\tau}(\sigma_0,\sigma) \log \phi_0)$, where $Z_0=\sum_{\sigma\in \mathfrak{S}_n} \exp(d_{\tau}(\sigma_0,\sigma)\log \phi_0 )$ is a normalization constant.

One may easily show that $Z_0$ is independent from $\sigma_0$ and that $Z_0=\prod_{i=1}^{n-1}\sum_{j=0}^i \phi_0^j$. When $\phi_0<1$, the permutation $\sigma_0$ of reference is the mode of distribution $P_{\theta_0}$, as well as its unique median relative to $d_{\tau}$. Observe in addition that the smallest the parameter $\phi_0$, the spikiest the distribution $P_{\theta_0}$. In contrast, $P_{\theta_0}$ is the uniform distribution on $\mathfrak{S}_n$ when $\phi_0=1$. 

A closed-from expression of the pairwise probabilities $p_{i,j}$ is available (see \textit{e.g.} Theorem 2 in \cite{DBLP:conf/icml/Busa-FeketeHS14}). Setting $h(k,\phi_0)=k/(1-\phi_0^k)$ for $k\geq 1$, one can then show the following: that the ranking depth function relative to $P_{\theta}$ and $d_{\tau}$ is given by:
\begin{restatable}{proposition}{propmallows}
\label{prop:mallows}
If $P = P_{\theta}$ the Mallows distribution and $d=d_{\tau}$ the Kendall $\tau$ distance, then $\forall \sigma\in \mathfrak{S}_n$, $D_{P_{\theta}}(\sigma) = {n \choose 2} - \sum_{\sigma(i)>\sigma(j)} H(\sigma_0(j)-\sigma_0(i),\; \phi_0)$, where $H(k,\phi_0)=h(k+1,\phi_0)-h(k,\phi_0)$ and $H(-k,\phi_0)=1-H(k,\phi_0)$  for $k\geq 1$.
\end{restatable}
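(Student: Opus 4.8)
The plan is to reduce the claim to the general Kendall $\tau$ depth formula of Proposition~\ref{prop:depth_kendall} and then substitute the known closed form for the pairwise marginals of the Mallows model. First I would rewrite Proposition~\ref{prop:depth_kendall} more symmetrically: since $P$ is supported on $\mathfrak{S}_n$ there are no ties, so $p_{i,j}+p_{j,i}=1$ for all $i\neq j$, and therefore
\begin{equation*}
\sum_{i<j}p_{i,j}\mathbb{I}\{\sigma(i)>\sigma(j)\}+\sum_{i<j}(1-p_{i,j})\mathbb{I}\{\sigma(i)<\sigma(j)\}=\sum_{\substack{(i,j):\,i\neq j,\ \sigma(i)>\sigma(j)}}p_{i,j},
\end{equation*}
the right-hand sum running over the $\binom{n}{2}$ ordered pairs $(i,j)$ on which $\sigma$ is decreasing (for each unordered pair exactly one orientation contributes, and it contributes $p_{a,b}$ with $\sigma(a)>\sigma(b)$). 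Hence $D_P(\sigma)=\binom{n}{2}-\sum_{\sigma(i)>\sigma(j)}p_{i,j}$ for any distribution $P$ on $\mathfrak{S}_n$.

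Next I would identify $p_{i,j}=\mathbb{P}\{\Sigma(i)<\Sigma(j)\}$ when $P=P_\theta$. The crucial fact is that for the Mallows model this marginal depends on $(i,j)$ only through the signed gap of reference ranks $\sigma_0(j)-\sigma_0(i)$, and equals $H(\sigma_0(j)-\sigma_0(i),\phi_0)$ with $H$ built from $h(k,\phi_0)=k/(1-\phi_0^{k})$ exactly as in the statement; this is Theorem~2 in \cite{DBLP:conf/icml/Busa-FeketeHS14}, up to the elementary rearrangement $H(k,\phi_0)=h(k+1,\phi_0)-h(k,\phi_0)$. Alternatively one can re-derive it directly: by relabelling items through $\sigma_0^{-1}$ one may assume $\sigma_0=e$, and then the repeated-insertion representation of the Mallows model shows that the relative rank of the $\ell$-th inserted item among the first $\ell$ follows a truncated geometric law with ratio $\phi_0$, these laws being independent across $\ell$ (whence $Z_0=\prod_{\ell=1}^{n-1}\sum_{m=0}^{\ell}\phi_0^{m}$), so that $p_{i,j}$ is a ratio of partial geometric sums that collapses to $H(\sigma_0(j)-\sigma_0(i),\phi_0)$. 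Useful consistency checks: $H(k,\phi_0)+H(-k,\phi_0)=1$, $H(1,\phi_0)=1/(1+\phi_0)$ (the classical adjacent-pair probability), and $H(k,\phi_0)\to 1/2$ as $\phi_0\to 1$, matching the uniform case.

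Finally, substituting $p_{i,j}=H(\sigma_0(j)-\sigma_0(i),\phi_0)$ into the rewritten depth formula yields the stated identity. I expect the only genuine work to be the second step, namely matching the marginal formula of \cite{DBLP:conf/icml/Busa-FeketeHS14} to the particular $H$-parametrisation used here and handling the sign convention $H(-k,\phi_0)=1-H(k,\phi_0)$ on the pairs with $\sigma_0(i)>\sigma_0(j)$; everything else is bookkeeping on ordered versus unordered pairs.
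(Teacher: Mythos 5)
Your proposal is correct and follows essentially the same route as the paper: cite Theorem~2 of \cite{DBLP:conf/icml/Busa-FeketeHS14} for $p_{i,j}=H(\sigma_0(j)-\sigma_0(i),\phi_0)$ and substitute into Proposition~\ref{prop:depth_kendall}. Your explicit bookkeeping on ordered versus unordered pairs and the sketched repeated-insertion re-derivation are extras the paper omits, but the core argument is identical.
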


where $h(k, \phi_0) = k/(1-\phi_0^k)$ for $k \geq 1$
\begin{proof}
Theorem 2 in \cite{DBLP:conf/icml/Busa-FeketeHS14} states that, for the Mallows model and using our notations, $\forall i \neq j, p_{i,j} = H(\sigma_0(j)-\sigma_0(i), \phi_0)$. The results follows from direct application of proposition~\ref{prop:depth_kendall}
\end{proof}.

\subsection{Proof of Proposition \ref{prop:borda_breakdown_ratio} (Borda estimators' robustness)}
\label{suppl:proof_borda}

Proposition~\ref{prop:borda_breakdown_ratio} refers to the robustness of the depth-trimmed-Borda compared to the classical Borda.
\propborda*

In this subsection, we will in fact proves some auxiliary results as well as a generalization of this proposition.

Let us first recall some definitions and results about the Borda estimators. Borda is an approximation to the barycentric ranking median (which is NP-hard for $n>4$~\cite{Dwork:2001:RAM:371920.372165}) for a sample of complete rankings drawn from a MM~\cite{Fligner1988}. Moreover, Borda is quasi-linear in time and outputs the correct median w.h.p. with a polynomial number of samples~\cite{Caragiannis2013}. A robust aggregation procedure for top-$k$ rankings in very noisy settings is proposed in~\cite{collas21}.


The Borda median estimator for sample $X$ orders the items increasingly by their Borda score, defined as $B(i) = \sum_{\sigma\in X}  \sigma(i)$.

We define the depth-weighted-Borda as a generalization of the classic and depth-trimmed-Borda in which there exists a weight associated with each ranking. It generalizes Borda in the following way: For each item $i$, the Borda score is computed as $B(i) = \sum_{\sigma\in X} w(\sigma) \sigma(i)$. The final estimator for the median is the ranking that orders the items by their Borda score. The depth-weighted-Borda is equivalent to replicating the rankings proportionally to their weight. 
This analysis generalizes to any weights are \textit{increasing} function of the depths. In particular, the depth-trimmed-Borda is the case of depth-weighted-Borda in which $w(\sigma) = \mathbb{I} \{D(\sigma) > \mu \}$.

We settle here the notation for the following lines. We denote by $S_N \sim P$ a sample of rankings (of size N) and $A$ an adversarial sample.
\begin{definition}
Let $P$ be a distribution, let us write $S_N \sim P$ a sample drawn from $P$ of size $N$ and $\sigma^{T}_{S_N}$ the median based on the estimator method $T$ on sample $S_N$.

The estimator $T$ is said to be $\delta$-broken (for Kendall's $\tau$) for sample size $N$ and distribution $P$ if for any $S_N \sim P$ of size $N$, there exists an adversarial sample $A$ such that $d_{\tau}(\sigma^{T}_{S_N}, \sigma^{T}_{S_N \cup A}) \geq \delta$.
\end{definition}


The next result characterizes the carnality of a sample that breaks the Borda estimator of a sample $S_N$ distributed according to $P$. This is an auxiliary result for Proposition~\ref{prop:borda_breakdown_ratio}.

\begin{proposition}
\label{thm:break_borda_d}
Let $S_N \sim P$. Let $A^{-}$ be the adversarial sample that $\delta$-breaks the Borda estimator (for sample size $N$ and distribution $P$) such that $A^{-}$ is of minimal cardinality. Let $\bar r_N(i) = N^{-1} \sum_{\sigma\in S_N} \sigma( i )$ and $\bar r(i) =  (\# A^{-})^{-1} \sum_{\sigma\in A^{-}} \sigma( i )$ be the average ranking of item $i$ in $S_N$ and $A^{-}$ respectively. Finally, let $\bar R$ be the ordered vector composed of $ \frac{ \bar r_N(j) - \bar r_N(i)}{  \bar r( i ) - \bar r( j )  } $ for all $(i,j)$ such as both the numerator and denominator are positive. 
Then 
$$    \# A^{-}  = \lceil N \left[ \bar R \right]_{(\delta)} \rceil$$
where $[x]_{(\delta)}$ denotes the $\delta$-{th} quantile of a vector $x$.
\end{proposition}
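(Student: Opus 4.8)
The plan is to unwind the definition of "$\delta$-broken'' for the Borda estimator and express the condition on the adversarial sample $A$ directly in terms of Borda scores, then read off the minimal cardinality. First I would recall that the Borda estimator of a sample $Y$ orders the items by the scores $B_Y(i) = \sum_{\sigma \in Y}\sigma(i)$; equivalently, for two items $i,j$, item $i$ is placed before $j$ iff $B_Y(i) < B_Y(j)$, i.e. iff $\sum_{\sigma\in Y}(\sigma(i)-\sigma(j)) < 0$. Dividing by $|Y|$, this only depends on the average-rank vector $\bar r_Y(\cdot) = |Y|^{-1}\sum_{\sigma\in Y}\sigma(\cdot)$. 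So the Borda ranking of $S_N$ is governed by $\bar r_N$, and the Borda ranking of $S_N\cup A$ is governed by the weighted average $\frac{N}{N+\#A}\bar r_N + \frac{\#A}{N+\#A}\bar r_A$, where $\bar r_A(\cdot) = (\#A)^{-1}\sum_{\sigma\in A}\sigma(\cdot)$.

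Next I would translate "$d_\tau(\sigma^{\mathrm{B}}_{S_N},\sigma^{\mathrm{B}}_{S_N\cup A}) \ge \delta$'' into: at least $\delta$ pairs $(i,j)$ that are ordered one way by $\bar r_N$ get flipped in the combined sample. Without loss of generality (relabelling items) assume the Borda order of $S_N$ is the identity, so $\bar r_N(i) < \bar r_N(j)$ for $i<j$; a pair $(i,j)$ with $i<j$ is flipped precisely when $N\bar r_N(i) + \#A\cdot \bar r_A(i) \ge N\bar r_N(j) + \#A\cdot \bar r_A(j)$, i.e. when
\[
\#A \;\ge\; N\cdot\frac{\bar r_N(j)-\bar r_N(i)}{\bar r_A(i)-\bar r_A(j)},
\]
which is a meaningful (finite, positive) threshold exactly when both numerator and denominator are positive — i.e. when the adversary's average ranking reverses the pair. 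For a \emph{fixed} adversarial average-rank vector $\bar r_A$, the set of pairs that flip for a given budget $\#A=m$ is exactly those $(i,j)$ whose threshold is $\le m$; hence the adversary flips at least $\delta$ pairs iff $m$ exceeds the $\delta$-th smallest such threshold. Collecting the thresholds $\frac{\bar r_N(j)-\bar r_N(i)}{\bar r_A(i)-\bar r_A(j)}$ over all admissible pairs into the ordered vector — this is $\bar R$ in the statement, with $\bar r = \bar r_{A^-}$ — the minimal budget that $\delta$-breaks Borda against this particular $\bar r_A$ is $\lceil N[\bar R]_{(\delta)}\rceil$. Finally, taking the infimum over adversarial strategies $A$, i.e. over choices of $\bar r_A$, and noting that $A^-$ is by definition the minimizer, gives $\#A^- = \lceil N[\bar R]_{(\delta)}\rceil$ with $\bar R$ built from $\bar r = \bar r_{A^-}$.

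The main obstacle is handling the asymptotic "$N\to\infty$'' and "for any sample $S_N\sim P$'' quantifiers cleanly: the average-rank vector $\bar r_N$ is random, so strictly speaking $\bar R$ is a random object, and one must argue that as $N\to\infty$ the relevant thresholds concentrate (by the law of large numbers $\bar r_N(i)\to\mathbb{E}_P[\Sigma(i)]$) so that the "for any $S_N$'' in the definition of $\delta$-broken becomes an essentially deterministic condition; I would also need to be careful that the adversary is free to choose $A$ with any rational average-rank vector achievable by permutations, and that the ceiling accounts for integrality of $\#A$. A secondary subtlety is that flipping a pair $(i,j)$ in the Borda order does not by itself guarantee a Kendall-$\tau$ distance of exactly the number of flipped pairs when transitivity of the combined order could fail; however, since the combined Borda order is itself a total order induced by the score vector, $d_\tau$ between the two Borda outputs is exactly the number of discordant pairs, so this point is automatic. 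Modulo these quantifier/concentration bookkeeping steps, the identity follows directly from the score-threshold computation above.
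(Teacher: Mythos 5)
Your proposal matches the paper's own argument: unwinding the Borda-score definition of $\delta$-breaking pair by pair yields exactly the threshold $\# A^- \geq N\bigl(\bar r_N(j)-\bar r_N(i)\bigr)/\bigl(\bar r(i)-\bar r(j)\bigr)$ for the flipped pairs, and minimality of $A^-$ then identifies $\# A^-$ with the ceiling of $N$ times the $\delta$-th quantile of these thresholds, which is precisely the paper's proof. The asymptotic/concentration bookkeeping you flag is not needed here, since this proposition is stated at fixed $N$ for a given sample (the $N\to\infty$ limit only enters later, in the breakdown-point ratio result).
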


\begin{proof}
By definition, $A^{-}$ $\delta$-breaks Borda iff the following holds.
\begin{equation}
    \begin{split}
    d & (\sigma^{B}_{S_N}, \sigma^{B}_{S_N \cup A^{-}}) = \delta   \\
    &  \Leftrightarrow \delta = \# \{  (i<j) : \sum_{\sigma \in S_N} \sigma(i) + \sum_{\sigma \in A^{-}} \sigma(i) \geq \sum_{\sigma \in S_N} \sigma(j) + \sum_{\sigma \in A^{-}} \sigma(j) \}  \\
    &  \Leftrightarrow \delta = \#  \{  (i<j) : \sum_{\sigma \in S_N} \sigma(i) - \sigma(j) \geq \sum_{\sigma \in A^{-}} \sigma(j) - \sigma(i) \}  \\
    &  \Leftrightarrow \delta = \#  \{  (i<j) : \sum_{\sigma \in S_N} \sigma( j ) - \sigma( i ) \leq \sum_{\sigma \in A^{-}} \sigma( i ) - \sigma( j ) \} \\
    & \Leftrightarrow \delta = \#  \{  (i,j) : 0 < \sum_{\sigma \in S_N} \sigma( j ) - \sigma( i ) \leq \sum_{\sigma \in A^{-}} \sigma( i ) - \sigma( j ) \}
    \label{eq:break_borda_d}
    \end{split}
\end{equation}

From a statistical perspective, we can bound the cardinality of $A^{-}$ as follows: let $(i,j)$ be a pair of index belonging to the set define just above.
\begin{equation}
\begin{split}
&   \sum_{\sigma \in S_N} \sigma( j ) - \sigma( i ) \leq \sum_{\sigma \in A^{-}} \sigma( i ) - \sigma( j ) \\
&  \Leftrightarrow  N   \: (\bar r_N(j) - \bar r_N(i) ) \leq   \# A^{-}  (\bar r( i ) - \bar r( j ) )\\  
&  \Rightarrow  \# A^{-}  \geq \frac{ N   \: (\bar r_N(j) - \bar r_N(i) ) }{  \bar r( i ) - \bar r( j )  },
\label{eq:card_y_borda}
\end{split}
\end{equation}
which holds for exactly $\delta$ pairs of items $(i,j)$. We conclude the proof by recalling that $A^{-}$ is of minimal cardinality.
\end{proof}

The next auxiliary result shows that provided certain conditions, if a sample breaks the depth-weighted-Borda then it breaks Borda.

\begin{proposition}
\label{thm:Y_ratio}
Let $S_N \sim P$. Let $A^{-}$ (resp. $A^{-}_w$) be the adversarial sample that $\delta$-breaks the Borda (resp. depth-weighted Borda) estimator (for sample size $N$ and distribution $P$) such that $A^{-}$ (resp. $A^{-}_w$) is of minimal cardinality.
Let $\bar r_N(i) = N^{-1} \sum_{\sigma\in S_N} \sigma( i )$ and $ \bar r_w( i ) = (\# A^{-}_w) ^{-1} \sum_{\sigma\in A^{-}_w} \sigma( i )  $ be the average ranking of item $i$ in $S_N$ and $A^{-}_w$ respectively. 

Let $ \pi_w = \arg\max_{\sigma \in  A^{-}_w } w(\sigma) $ and $\mu = w(\pi_w)$ the threshold of maximum depth for adversarial rankings. 

Finally, suppose that $\hat{P}_N$ and $w$ satisfy: $ \mathbb{E}_{\hat{P}_N}(w(\Sigma)) > w(\pi_w) = \mu$  and $\; \forall \; (i,j) \text{ s.t. } \mathbb{E}_{\hat{P}_N}(\Sigma(j) - \Sigma(i)) > 0$, $\mathbb{E}_{\hat{P}_N}(w(\Sigma)(\Sigma(j) - \Sigma(i)) \geq \mathbb{E}_{\hat{P}_N}(w(\Sigma))\mathbb{E}_{\hat{P}_N}(\Sigma(j) - \Sigma(i))$ (these two assumptions enforce the use of a weight function that is in accordance with $\hat{P}_N$)

Then, the cardinality of $A^{-}$ and $A^{-}_w$ are related as follows:
$$ \# A^{-}_w  \geq  \frac{ N^{-1} \sum_{\sigma \in S_N} w(\sigma) }{ \mu }  \# A^{-}.$$
\end{proposition}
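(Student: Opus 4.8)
The plan is to reproduce, for the depth-weighted Borda, the combinatorial characterisation of the minimal breaking sample established in Proposition~\ref{thm:break_borda_d}, and then to compare the two minimal cardinalities pair by pair using the two standing hypotheses. First I would rerun the algebra leading to \eqref{eq:break_borda_d}, with each Borda score $\sum_{\sigma}\sigma(\cdot)$ replaced by its weighted version $B_w(i)=\sum_{\sigma}w(\sigma)\sigma(i)$; the same manipulations then show that $A^{-}_w$ $\delta$-breaks the depth-weighted Borda estimator (for sample size $N$ and distribution $P$) if and only if there are exactly $\delta$ pairs $(i,j)$ with
\begin{equation*}
0<\sum_{\sigma\in S_N}w(\sigma)\big(\sigma(j)-\sigma(i)\big)\;\leq\;\sum_{\sigma\in A^{-}_w}w(\sigma)\big(\sigma(i)-\sigma(j)\big).
\end{equation*}
On the clean side, $\sum_{\sigma\in S_N}w(\sigma)(\sigma(j)-\sigma(i))=N\,\mathbb{E}_{\hat{P}_N}\!\big[w(\Sigma)(\Sigma(j)-\Sigma(i))\big]$; on the adversarial side, since $w(\sigma)\leq w(\pi_w)=\mu$ for every $\sigma\in A^{-}_w$ by the definition of $\pi_w$ and $\mu$, one has $\sum_{\sigma\in A^{-}_w}w(\sigma)(\sigma(i)-\sigma(j))\leq \mu\,(\#A^{-}_w)\big(\bar r_w(i)-\bar r_w(j)\big)$.

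Second I would plug in the two hypotheses. For a pair $(i,j)$ in the breaking set one necessarily has $\bar r_N(j)>\bar r_N(i)$ --- otherwise, by the contrapositive of the correlation hypothesis, the clean sum above would be $\leq 0$ --- so the correlation hypothesis applies and yields
\begin{equation*}
\sum_{\sigma\in S_N}w(\sigma)\big(\sigma(j)-\sigma(i)\big)\;\geq\;N\,\mathbb{E}_{\hat{P}_N}[w(\Sigma)]\,\mathbb{E}_{\hat{P}_N}[\Sigma(j)-\Sigma(i)]\;=\;\Big(\textstyle\sum_{\sigma\in S_N}w(\sigma)\Big)\big(\bar r_N(j)-\bar r_N(i)\big).
\end{equation*}
Combining the two bounds, for each of the $\delta$ breaking pairs of $A^{-}_w$,
\begin{equation*}
\#A^{-}_w\;\geq\;\frac{N^{-1}\sum_{\sigma\in S_N}w(\sigma)}{\mu}\cdot\frac{N\big(\bar r_N(j)-\bar r_N(i)\big)}{\bar r_w(i)-\bar r_w(j)}.
\end{equation*}

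Third I would close the loop with Proposition~\ref{thm:break_borda_d}: the factor $N(\bar r_N(j)-\bar r_N(i))/(\bar r_w(i)-\bar r_w(j))$ is exactly the per-pair lower bound \eqref{eq:card_y_borda} on the number of adversarial points needed to flip the pair $(i,j)$ in the \emph{plain} Borda estimator when the adversarial average is $\bar r_w$. Taking the maximum of this factor over the $\delta$ pairs broken by $A^{-}_w$, and recalling that $\#A^{-}=\lceil N[\bar R]_{(\delta)}\rceil$ is, by Proposition~\ref{thm:break_borda_d} and minimality, the $\delta$-th order statistic of exactly such per-pair quantities, I would argue that $\max_{(i,j)}N(\bar r_N(j)-\bar r_N(i))/(\bar r_w(i)-\bar r_w(j))\geq \#A^{-}$, whence $\#A^{-}_w\geq \big(N^{-1}\sum_{\sigma\in S_N}w(\sigma)\,/\,\mu\big)\,\#A^{-}$, which is the claim.

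The main obstacle is precisely this last aggregation step: the adversarial averages $\bar r_w$ of the weighted problem need not coincide with those of an optimal plain-Borda adversary, and the $\delta$ pairs broken by $A^{-}_w$ need not be the $\delta$ cheapest ones for plain Borda, so one must rule out that some reallocation of the adversarial budget across pairs undercuts the quantile value $\lceil N[\bar R]_{(\delta)}\rceil$ defining $\#A^{-}$. The clean way to see why it cannot is to pass to the effective (fractional) weighted samples: the weighted Borda on $S_N\cup A^{-}_w$ equals the plain Borda on a clean sample of mass $\sum_{\sigma\in S_N}w(\sigma)$ contaminated by an adversary of mass $\leq\mu\,\#A^{-}_w$, and the correlation hypothesis guarantees that this weighted clean sample is pairwise at least as well separated --- hence at least as hard to break --- as $S_N$ itself; the scale invariance of Borda (which depends only on the ordering of the scores) then turns the "mass ratio" into the claimed "cardinality ratio". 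Making this reduction fully rigorous, together with the bookkeeping of ceilings and the case where $A^{-}_w$ breaks strictly more than the minimal number of pairs, is where the real work lies.
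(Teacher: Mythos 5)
Your derivation is essentially the paper's own proof: the paper proceeds exactly through your per-pair inequality, lower-bounding the clean side by $\bigl(\sum_{\sigma\in S_N}w(\sigma)\bigr)\bigl(\bar r_N(j)-\bar r_N(i)\bigr)$ via the correlation hypothesis and upper-bounding the adversarial side by $\mu\,(\#A^{-}_w)\bigl(\bar r_w(i)-\bar r_w(j)\bigr)$ via $w(\sigma)\leq w(\pi_w)=\mu$, arriving at the same bound $\#A^{-}_w\geq z\cdot N(\bar r_N(j)-\bar r_N(i))/(\bar r_w(i)-\bar r_w(j))$ with $z=N^{-1}\sum_{\sigma\in S_N}w(\sigma)/\mu$. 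The "last aggregation step" that you flag as the real remaining work is not carried out in any more detail in the paper: the authors conclude in a single sentence, observing that $z$ is independent of $(i,j)$ and that $A^{-}_w$ also $\delta$-breaks the plain Borda estimator, and then invoke Proposition~\ref{thm:break_borda_d}; your own remark that $z>1$ forces every pair flipped under the weighted criterion to be flipped under the plain one is precisely the justification of that intermediate claim. So your concern about $\bar r_w$ differing from the averages of an optimal plain-Borda adversary, and about the $\delta$ flipped pairs not being the cheapest ones, points at a looseness that is present in the published argument itself rather than at something your proposal fails to reproduce; relative to the paper's proof, your attempt contains all the steps that are actually written down, and the fractional-mass heuristic you sketch for the final comparison goes beyond, not below, the level of rigor offered there.
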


\begin{proof}

Since $A^{-}_w$ $\delta$-breaks the depth-weighted-Borda, we can follow the same proof outline as for proposition \ref{thm:break_borda_d} and bound the cardinality $ \# A^{-1}_w $ as follows,
\begin{equation}
\begin{split}
&   \sum_{\sigma \in S_N} w(\sigma) ( \sigma( j ) - \sigma( i ) ) \leq \sum_{\sigma \in A^{-}_w} w(\sigma) ( \sigma( i ) - \sigma( j ) )\\
&  \Rightarrow N \times  \: N^{-1} \sum_{\sigma \in S_N} w(\sigma)(\sigma(j) - \sigma(i)) \leq   \# A^{-}_w  w( \pi ) (\bar r_w( i ) - \bar r_w( j ) )\\  
&  \Rightarrow  \# A^{-}_w  \geq \frac{ N   \: (\bar r_n(j) - \bar r_N(i)) }{ \bar r_w( i ) -  \bar r_w( j )  } \frac{ N^{-1} \sum_{\sigma \in S_N} w(\sigma) }{ \mu } \\
\label{eq:card_y_borda_dinf}
\end{split}
\end{equation}
Since $ \frac{ N^{-1} \sum_{\sigma \in S_N} w(\sigma) }{ \mu }$ is independent of $i,j$ and $A^{-}_w$ also $\delta$-breaks the Borda estimator, we can conclude:
\begin{equation}
    \begin{split}
        \# A^{-}_w  \geq \# A^{-}  \frac{ N^{-1} \sum_{\sigma \in S_N} w(\sigma) }{ \mu }.
    \end{split}
\end{equation}

\end{proof}

We are finally ready to prove a generalization of our proposition \ref{prop:borda_breakdown_ratio} stated in the main paper. Let us first define our notion of $\delta$-\textit{breakdown point}, which extends the classical concept.

\begin{definition}
Let $P$ be a distribution. The $\delta$-breakdown point for an estimator $T$ with respect to distribution $P$ is defined as the smallest cardinality of an adversarial sample that $\delta$-breaks $T$ in the limit when $N \to \infty$ for distribution $P$.

More specifically, $\epsilon_{\delta}^{T} (P) = \min \# A$ s.t. $\lim_{N \to \infty} d_{\tau}(\sigma^T_{S_N}, \sigma^T_{S_N \cup A}) = \delta$
\end{definition}

In the following proposition, we write $\epsilon^{B}_{\delta} (P) $ (resp. $\epsilon^{DW-B}_{\delta} (P)$) the $\delta$-breakdown point for the Borda (resp. depth-weighted Borda) estimator with respect to distribution $P$.

\begin{proposition}[breakdown points ratio]\label{thm:breakdown_ratio}
Let $P$ be a distribution such that $\mathbb{E}_{ P } [ w(\Sigma) ] > w(\pi) $, where $\pi = \arg\max_{\sigma \; | \; d_{\tau}(\sigma^*, \sigma) = \delta} w(\sigma)$ and $\sigma^* = \arg\max_{\sigma \in \mathfrak{S}_n} D_{P}(\sigma)$. Let $P$ and $w$ satisfy: $\forall (i,j) \text{ s.t. } \mathbb{E}_P(\Sigma(j) - \Sigma(i)) > 0$, $\mathbb{E}_P(w(\Sigma)(\Sigma(j) - \Sigma(i)) ) \geq \mathbb{E}(w(\Sigma)) \mathbb{E}(\Sigma(j) - \Sigma(i))$. Then,

\begin{equation}
\begin{split}
\lim_{N \to\infty} \frac{\epsilon_{\delta}^{B} (P)}{\epsilon_{\delta}^{DW-B} (P)} 
< \frac{ w( \pi ) }{ \mathbb{E}_{ P } [ w(\Sigma) ] } < 1.
\end{split}
\end{equation}
\end{proposition}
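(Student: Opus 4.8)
The plan is to obtain the asymptotic breakdown points from the finite-$N$ cardinality bounds of Propositions~\ref{thm:break_borda_d} and~\ref{thm:Y_ratio} by dividing by $N$ and letting $N\to\infty$, using the law of large numbers to replace empirical averages over $S_N$ by expectations under $P$. First I would note that, by Proposition~\ref{thm:break_borda_d}, the minimal adversarial cardinality satisfies $\#A^- = \lceil N[\bar R]_{(\delta)}\rceil$, so $\epsilon_\delta^{B}(P)/N \to [\bar R]_{(\delta)}$ where, as $N\to\infty$, $\bar r_N(i)\to \mathbb{E}_P[\Sigma(i)]$ by the SLLN; thus the entries $\tfrac{\bar r_N(j)-\bar r_N(i)}{\bar r(i)-\bar r(j)}$ converge to their population analogues and $\epsilon_\delta^{B}(P)$ behaves like $N$ times a fixed positive constant. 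The key point is that $\epsilon_\delta^{B}(P)\to\infty$ linearly in $N$, so the \emph{ratio} $\epsilon_\delta^{B}/\epsilon_\delta^{DW\text{-}B}$ has a finite positive limit.

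Next I would invoke Proposition~\ref{thm:Y_ratio}: under the two stated compatibility assumptions on $P$ and $w$ (which are exactly the limiting forms, via the SLLN, of the hypotheses $\mathbb{E}_{\widehat P_N}(w(\Sigma))>\mu$ and the covariance-type inequality $\mathbb{E}_{\widehat P_N}(w(\Sigma)(\Sigma(j)-\Sigma(i)))\geq \mathbb{E}_{\widehat P_N}(w(\Sigma))\,\mathbb{E}_{\widehat P_N}(\Sigma(j)-\Sigma(i))$), we have for every $N$ large enough
\begin{equation*}
\#A^-_w \;\geq\; \frac{N^{-1}\sum_{\sigma\in S_N} w(\sigma)}{\mu}\,\#A^-.
\end{equation*}
Dividing by $N$, taking $N\to\infty$, and using $N^{-1}\sum_{\sigma\in S_N}w(\sigma)\to \mathbb{E}_P[w(\Sigma)]$ together with $\mu = w(\pi)$ (the threshold realized by the least-deep adversarial ranking, which in the limit is the ranking of highest depth at Kendall distance $\delta$ from $\sigma^*$, i.e.\ $\pi=\arg\max_{\sigma\,:\,d_\tau(\sigma^*,\sigma)=\delta}w(\sigma)$), one gets
\begin{equation*}
\epsilon_\delta^{DW\text{-}B}(P) \;\geq\; \frac{\mathbb{E}_P[w(\Sigma)]}{w(\pi)}\,\epsilon_\delta^{B}(P),
\end{equation*}
which rearranges to $\dfrac{\epsilon_\delta^{B}(P)}{\epsilon_\delta^{DW\text{-}B}(P)} \leq \dfrac{w(\pi)}{\mathbb{E}_P[w(\Sigma)]}$. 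Finally, the strict inequality $w(\pi)/\mathbb{E}_P[w(\Sigma)]<1$ is just the hypothesis $\mathbb{E}_P[w(\Sigma)]>w(\pi)$, and the strictness of the left inequality follows because the covariance-type bound used to derive Proposition~\ref{thm:Y_ratio} is strict on the non-degenerate distribution $P$ (equivalently, the denominator bound $w(\pi)$ strictly dominates the relevant averaged weights on the $\delta$ critical pairs).

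The main obstacle I anticipate is the careful handling of the limits and the quantiles: one must check that the $\delta$-th quantile $[\bar R]_{(\delta)}$ and the set of ``critical pairs'' $(i,j)$ appearing in Proposition~\ref{thm:break_borda_d} stabilize as $N\to\infty$ (so that the same $\delta$ pairs govern both numerator and denominator in the limit), and that the minimal-cardinality adversarial samples $A^-$ and $A^-_w$ can be chosen so that $\bar r_w(i)$ converges and $\pi_w\to\pi$; this requires that $\lim_{N\to\infty}d_\tau(\sigma^T_{S_N},\sigma^T_{S_N\cup A})$ be well-defined, i.e.\ that for $N$ large the Borda order of $S_N$ is the true median order $\sigma^*$ (which holds w.h.p.\ for SST-type $P$, cf.\ \cite{Caragiannis2013}), so that $\delta$-breaking means moving $\sigma^*$ to a ranking at distance exactly $\delta$, and the worst such target is $\pi$. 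Once this limiting consistency is established, the rest is the bookkeeping indicated above.
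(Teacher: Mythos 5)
Your proposal is correct and follows essentially the same route as the paper: both proofs rest on Proposition~\ref{thm:Y_ratio} together with the law of large numbers, identifying $z=\lim_N N^{-1}\sum_{\sigma\in S_N}w(\sigma)/w(\pi)=\mathbb{E}_P[w(\Sigma)]/w(\pi)$ and concluding the ratio is bounded by $1/z<1$. The only cosmetic difference is that the paper writes the breakdown points as normalized fractions $\#A^-/(\#A^-+N)$ before taking the limit (and notes that any $\delta$-breaking adversarial sample must move the median to distance $\delta$, so $\mu=w(\pi)$), whereas you work directly with cardinalities and add the (harmless) linear-growth observation via Proposition~\ref{thm:break_borda_d}; the substance is the same.
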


\begin{proof}
We start by noting that for $S_N$ to be $\delta$-broken then the adversarial sample has to be at least at distance $\delta$ regardless the distribution for the weights.  
Then, we denote $z = \mathbb{E}_{ P } [ w(\Sigma) ] / w( \pi ) = \lim_{N \to \infty} N^{-1} \sum_{\sigma \in S_N} w(\sigma) / w(\pi)$ (by the law of large numbers) and take Proposition~\ref{thm:Y_ratio} to write the limiting ratio of the breakdown points when the number of samples tends to infinity as follows. 

\begin{equation}
\begin{split}
\lim_{N  \to\infty} \frac{\epsilon_{\delta}^{B}(P)}{\epsilon_{\delta}^{DW-B}(P)} 
= \lim_{N  \to\infty} \frac{ \frac{\# A^{-} }{\# A^{-} + N } }{ \frac{\# A^{-}_w  }{\# A^{-}_w  + N  } }  
< \lim_{N  \to\infty} \frac{ \frac{\# A^{-} }{\# A^{-} + N } }{ \frac{\# A^{-} \cdot z}{\# A^{-} \cdot z\; +\; N  } }  
< \frac{1}{z} =  \frac{ w( \pi ) }{ \mathbb{E}_{ P } [ w(\Sigma) ] } < 1
\end{split}
\end{equation}
\end{proof}

This is the main result related to the robustness of the Borda median estimator. It shows that the breakdown point  of Borda is smaller than the breakdown point for the depth-trimmed-Borda provided certain conditions. We denote by $\mu$ the threshold of the depth-trimmed-Borda. 

Then, our proposition \ref{prop:borda_breakdown_ratio} is straightforward when we choose the weight function $w$ so that $w(\sigma) = \mathbb{I}(D_P(\sigma) \geq \mu)$ in Proposition~\ref{thm:breakdown_ratio}.

\section{Further results}
\label{suppl:further_results}

\subsection{Ranking quantile function}
\label{suppl:ranking_quantile_fct}

In Proposition \ref{prop:sample}, rate bounds for the deviation between empirical and theoretical versions of the depth function (respectively, of the smoothed depth survivor function) have been stated. We here give some indications for obtaining similar results for the ranking quantile function.

As the considered distribution is discrete, finite and real-valued (because depth is real-valued), the results of \cite{Ma11} can be directly applied. Using their notations, we have:

$\Sigma$ is a random variable of distribution $P$ on $\mathfrak{S}_n$ that takes distinct values $\sigma_1,\; \ldots,\; \sigma_m$ with respective propabilities $\rho_1,\; \ldots,\; \rho_m$. Each element $\sigma_i \in \{ \sigma_1, ..., \sigma_m \}$ has a depth $\delta_i$, which leads us to write $D_P(\Sigma)$ the random variable associated to the depth.

Now, let us reorder the indices and write the distinct depth values $\delta_1 <...<\delta_d$ with respective probabilities of occurrence $p_1,\; \ldots,\; p_d$, where $\forall \; i \in [\![1,d]\!], p_i = \sum_{1 \leq j \leq m} \rho_j \mathbb{1}(\delta_j = \delta_i)$. Let us define the \textit{mid-function} $F_{mid}(x) = \mathbb{P}(D_P(\Sigma) \leq x) - 1/2 \mathbb{P}(D_P(\Sigma) = x)$ and the ranking quantile function based on \textit{mid-functions}:
\begin{equation}
  Q(\alpha) = F_{mid}^{-1}(\alpha) =
    \begin{cases}
      \delta_1 & \text{if $\alpha \leq p_1/2$}\\
      \lambda \delta_k + (1-\lambda)\delta_{k+1} & \text{if $\alpha=\lambda \pi_k + (1-\lambda)\pi_{k+1}$} \\ 
      &\quad\quad\text{for any $\lambda \in [0,1]$ and $1 \leq k \leq d-1$}\\
      \delta_d & \text{if $\alpha \geq \pi_d$}
    \end{cases} ,
\label{def:smooth_quant_ma}     
\end{equation}
where $\forall k \in [\![2,d]\!], \pi_k = \sum_{i=1}^{k-1}p_i + p_k/2 = F_{mid}(\delta_k)$.

Then, the following results hold:
\begin{align*}
  1) & \; \hat{Q}_N(\alpha) \overset{\mathbb{P}}{\longrightarrow} \delta_1 \text{ if $\alpha < p_1/2$} \\
  2) & \; \hat{Q}_N(\alpha) \overset{\mathbb{P}}{\longrightarrow} \delta_d \text{ if $\alpha > \pi_d$} \\
  3) & \; \sqrt{N}(\hat{Q}_N(\alpha) - (\lambda \delta_{k+1} + (1-\lambda)\delta_{k+2})) \overset{\mathbb{P}}{\longrightarrow} \mathcal{N}(0, sd(\alpha, \lambda, p_{k+1}, p_{k+2}) ) \\ 
  & \quad\quad\quad\quad \text{ if $\alpha = \lambda\pi_{k+1} + (1-\lambda)\pi_{k+2}$ for $0<\lambda<1$ and $0\leq k \leq d-2 $} \\
  4) & \; \sqrt{N}(\hat{Q}_N(\alpha) - \delta_{k+1})f(\hat{Q}_N(\alpha), \delta_{k+1})  \overset{\mathbb{P}}{\longrightarrow} \mathcal{N}(0, \alpha(1-\alpha) - p_{k+1}/4) \\
  & \quad\quad\quad\quad  \text{ if $\alpha = \pi_{k+1}$ for $0 \leq k \leq d-1$},
\end{align*}
where $sd(\alpha, \lambda, p_{k+1}, p_{k+2}) = \alpha (1-\alpha) - (1-(\lambda-1)^2)p_{k+1}/4 - (1-\lambda^2)p_{k+2}/4$ and $f(\hat{Q}_N(\alpha), \delta_{k+1}) = 1/2 (p_{k+1}+p_{k+2})/(\delta_{k+2}-\delta_{k+1}) \text{ if $\hat{Q}_N(\alpha) > \delta_{k+1}$ and } 1/2 (p_{k+1}+p_{k})/(\delta_{k+1}-\delta_{k})$ else.

These results provide us with asymptotic guarantees about the ranking quantile function based on mid-functions, as defined in eq.~\ref{def:smooth_quant_ma}. However, non-asymptotic bounds as well as similar results for the depth regions should be investigated further and are left for future work, like the discrepancy between empirical and theoretical ranking depth regions, which can be measured by \textit{e.g.} the cardinality of their symmetric difference.

\subsection{Pairwise comparisons as an alternative statistical framework}
\label{suppl:pairwise_comp}

Since the computation of Kendall $\tau$ distance involves pairwise comparisons only, one could compute empirical versions of the risk functional $L$ in a statistical framework stipulating that the observations are less complete than $\{\Sigma_1,\; \ldots,\; \Sigma_N  \}$ and formed by i.i.d. pairs:
$$
(\mathbf{e}_k,\; \epsilon_k),\;  k=1,\; \ldots,\; N,
$$ 
where the $\mathbf{e}_k=(\i_k,\j_k)$'s are independent from the $\Sigma_k$'s  and drawn from an unknown distribution $\nu$ on the set $\mathcal{E}_n$ such that $\nu(e)>0$ for all $e\in \mathcal{E}_n$ and $\epsilon_k=sgn( \Sigma_k(\j_k)- \Sigma_k(\i_k) )$ with $\mathbf{e}_k=(\i_k,\j_k)$ for $1\leq k\leq N$. Based on these observations, an estimate of the risk of any median candidate $\sigma\in \mathfrak{S}_n$ is given by:
\begin{equation}
\widetilde{L}_N(\sigma)=\sum_{i<j}\frac{1}{N_{i,j}}\sum_{k=1}^N\mathbb{I} \{ \mathbf{e}_k=(i,j),\;  \epsilon_k(\sigma(j)-\sigma(i))<0 \},
\end{equation}
where $N_{i,j}=\sum_{k=1}^N\mathbb{I}\{ \mathbf{e}_k=(i,j)\}$.


\section{Additional experiments}
\label{suppl:additional_exp}

Here we display additional numerical results, completing those presented in the main text. First, in Section~\ref{ssec:synthetic}, we analyze the sensitivity of the proposed depth notion to a difference between distributions and its subsequent ability to provide formal inference. Second, in Section~\ref{ssec:sushi}, we detail a further application to real data. Please note, that, as this is the case in the main text, in all visualizations the data depth is re-scaled to $[0,1]$ interval by division by maximal possible distance for given $n$.

\subsection{Trimming strategy}
\label{suppl:trimming_strat}

Now we have characterized under which conditions the different properties of Property~\ref{property:property_depth_rankings} hold, we explore how to use them in practice. For example, using Kendall $\tau$ distance and samples drawn from a Mallows distribution easily make the invariance and maximality at the center properties hold, but not necessarily the monotonicity property.

First, even though a Mallows model is SST, its empirical distribution counterpart may not be.
Second, the adjacent condition for local monotonicity indicates that under such a model,the monotonicity property hold for the median and any of its adjacent ranking. Moreover, the second local condition is more likely to be satisfied for rankings close to (in terms of Kendall $\tau$ distance) the median.

These two observations and the fact that the depth of a ranking $\sigma$ represents its centrality within the dataset make a \textit{trimming strategy} highly relevant for consensus ranking experiments. The intuition behind this strategy is that least deep points corresponds to \textit{outliers} for the dataset: removing them step by step should make the dataset less noisy, so that the depth of the remaining points get more and more accurate. To the extreme, when successively trimming rankings in the dataset until there is only one ranking left should leave us with an accurate median for the dataset. However, since the depth function satisfies useful properties when the underlying distribution is SST, a sufficient trimming strategy would stop there. The algorithm corresponding to this strategy is defined in Algorithm \ref{algo:trimming} of the main paper, recalled here.
\algotrimming*

\begin{figure}[!h]
    \begin{center}
    \begin{tabular}{cc}
        (a) & (b) \\
         \includegraphics[scale=0.3,clip=true,page=1]{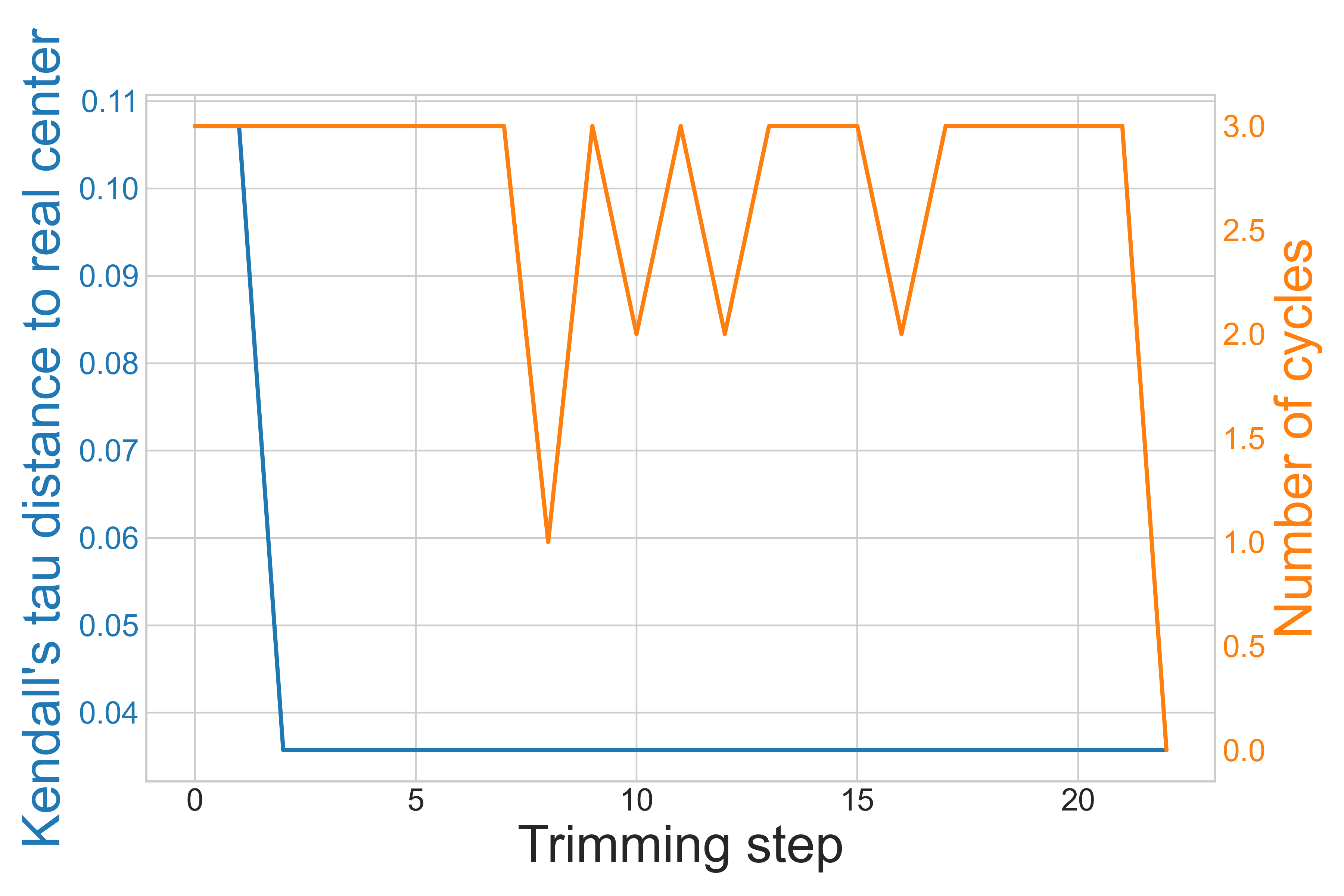} &
         \includegraphics[scale=0.3,clip=true,page=1]{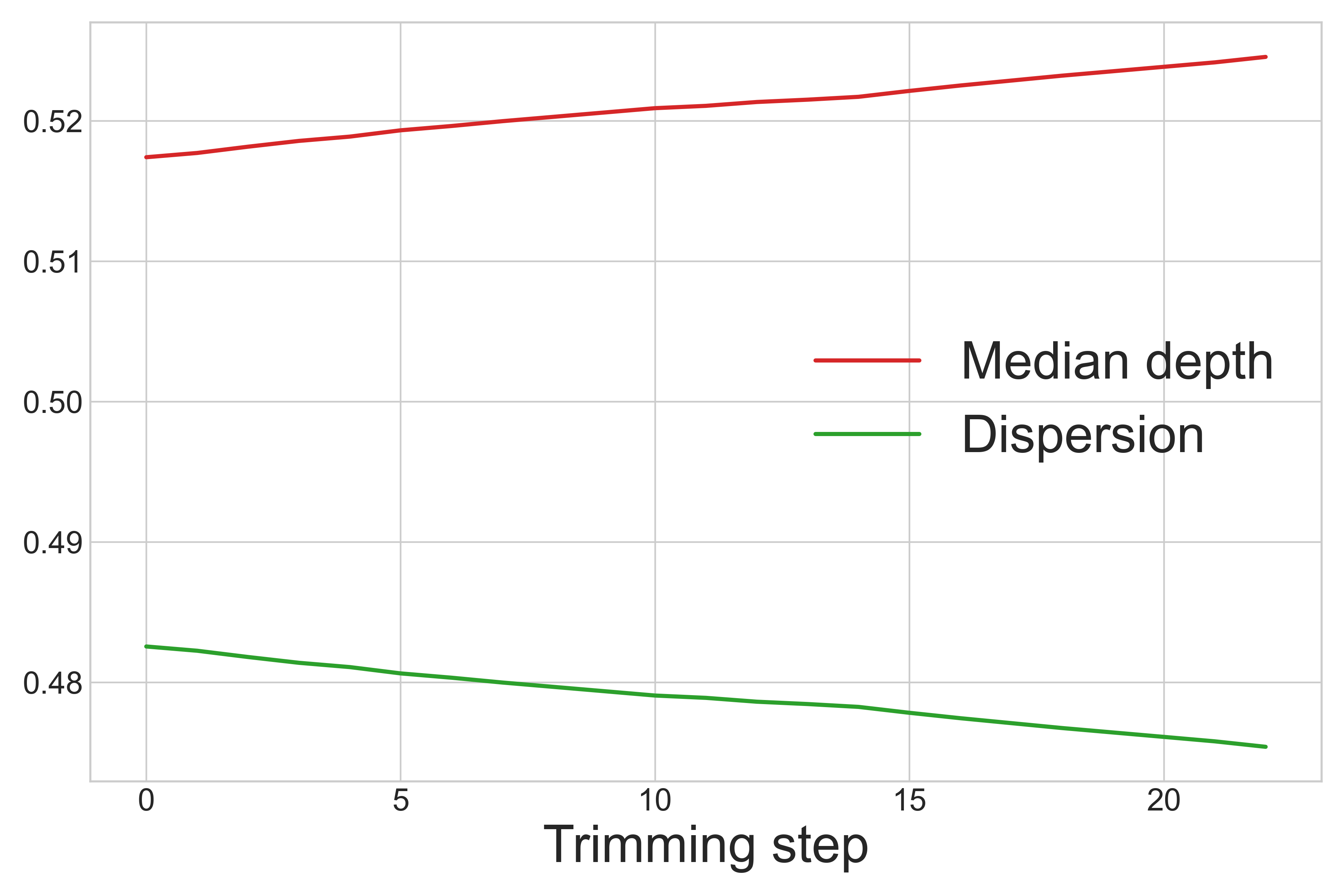}
    \end{tabular}
    \caption{Trimming strategy: evolution of candidate median (deepest ranking) normalized distance to real median and number of cycles through trimming (a); evolution of median depth and sample dispersion through trimming (b).}
    \label{fig:trimming_plot_normal}
    \end{center}
\end{figure}

Fig.~\ref{fig:trimming_plot_normal} illustrates the trimming strategy for a Mallows model generated with $n=8$ items, $\phi_0 = 0.985$, and $N=1000$ samples. We can see that trimming indeed remove cycles from the empirical dataset and thus make the empirical distribution SST, and that during trimming, the deepest rankings (saved as the candidate medians) get closer (in Kendall $\tau$ distance) to the real median used for generating the samples.

The depth function thus provide an alternative and relevant way to compute the median of a dataset: by trimming until getting a SST distribution first, we ensure that the depth function has desirable properties and thus that the median we obtain in practice gets very close to the true median.

\subsection{Visual analysis}
\label{suppl:visual_analysis}

With data depth being a nonparametric tool not exploiting \textit{a priori} information about the distribution, we focus on easy-to-manipulate Mallows model using Kendall $\tau$ distance; we refer to the main text for the formal definition and parameters' notation, see Example~1.

\begin{figure}[!h]
	\begin{center}
		\begin{tabular}{ccccc}
			{\tiny \quad $d_\tau(\sigma_1^*,\sigma_2^*)=0$} & {\tiny \quad $d_\tau(\sigma_1^*,\sigma_2^*)=1$} & {\tiny \quad $d_\tau(\sigma_1^*,\sigma_2^*)=3$} & {\tiny \quad $d_\tau(\sigma_1^*,\sigma_2^*)=5$} & {\tiny \quad $d_\tau(\sigma_1^*,\sigma_2^*)=7$} \\
			\includegraphics[width=0.17\textwidth,trim = 0 0 8mm 9.25mm,clip=true]{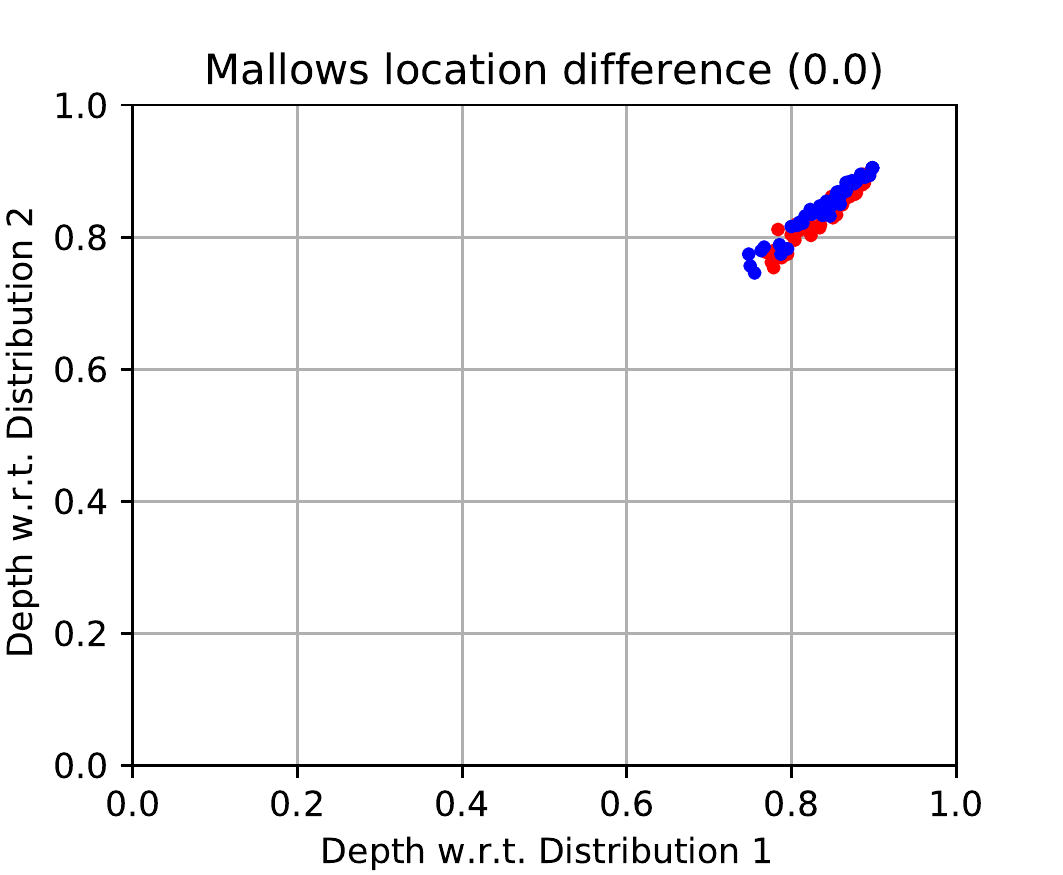} & \includegraphics[width=0.17\textwidth,trim = 0 0 8mm 9.25mm,clip=true]{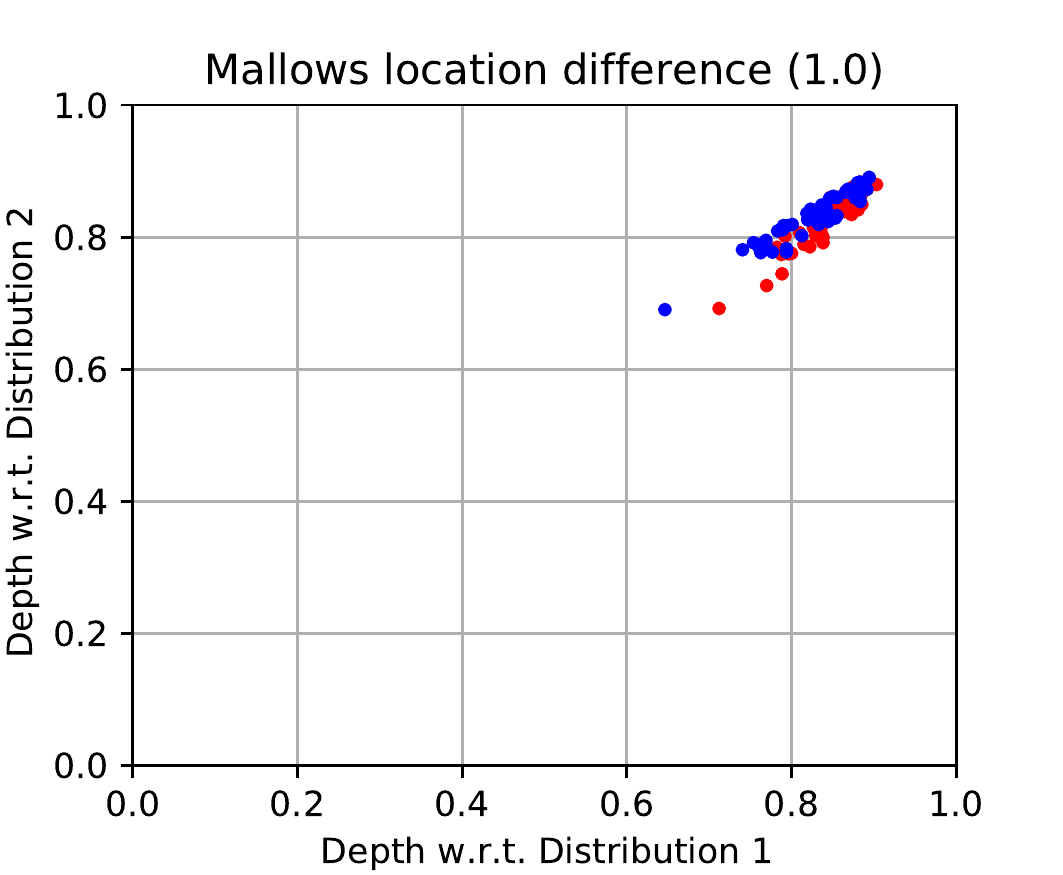} & \includegraphics[width=0.17\textwidth,trim = 0 0 8mm 9.25mm,clip=true]{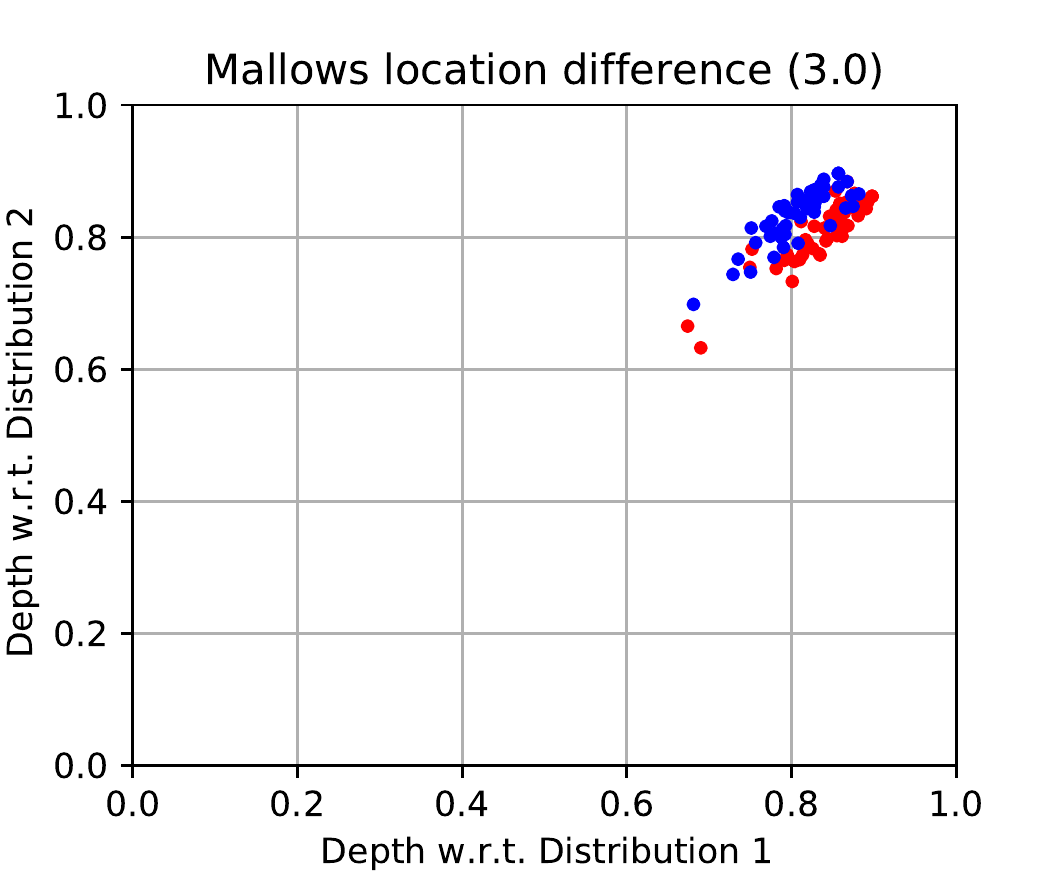} & \includegraphics[width=0.17\textwidth,trim = 0 0 8mm 9.25mm,clip=true]{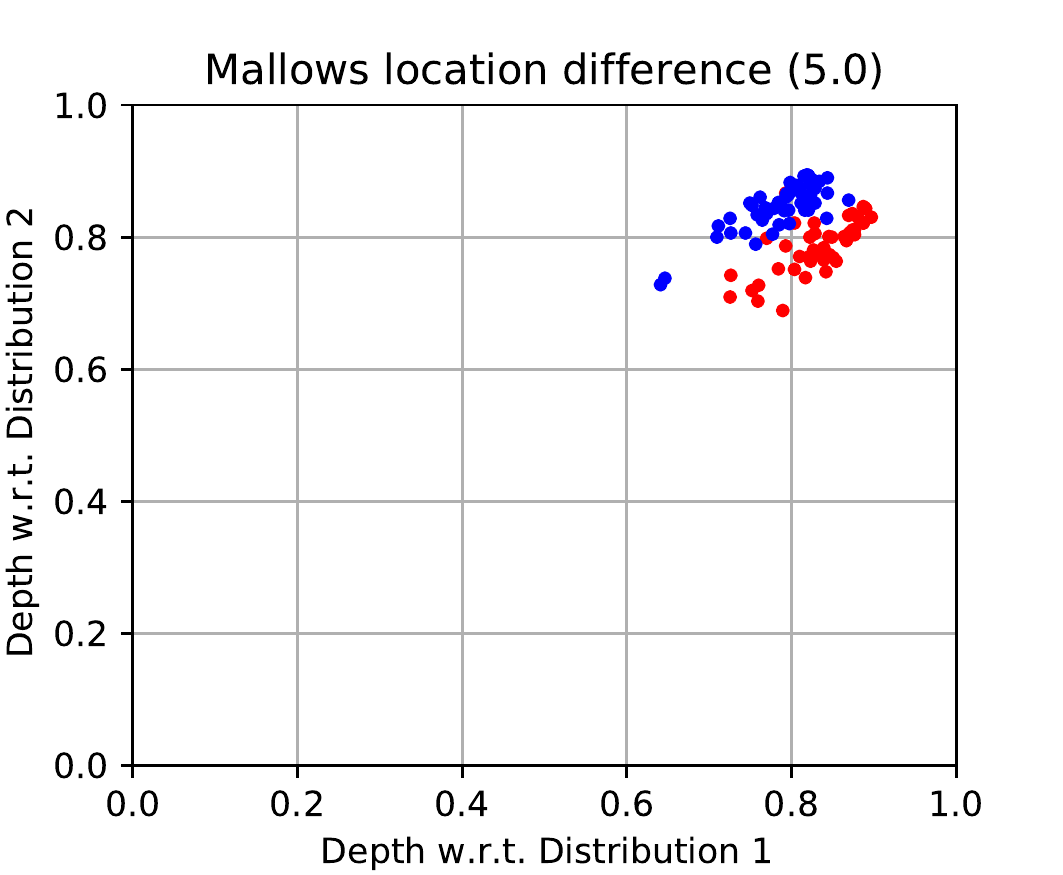} & \includegraphics[width=0.17\textwidth,trim = 0 0 8mm 9.25mm,clip=true]{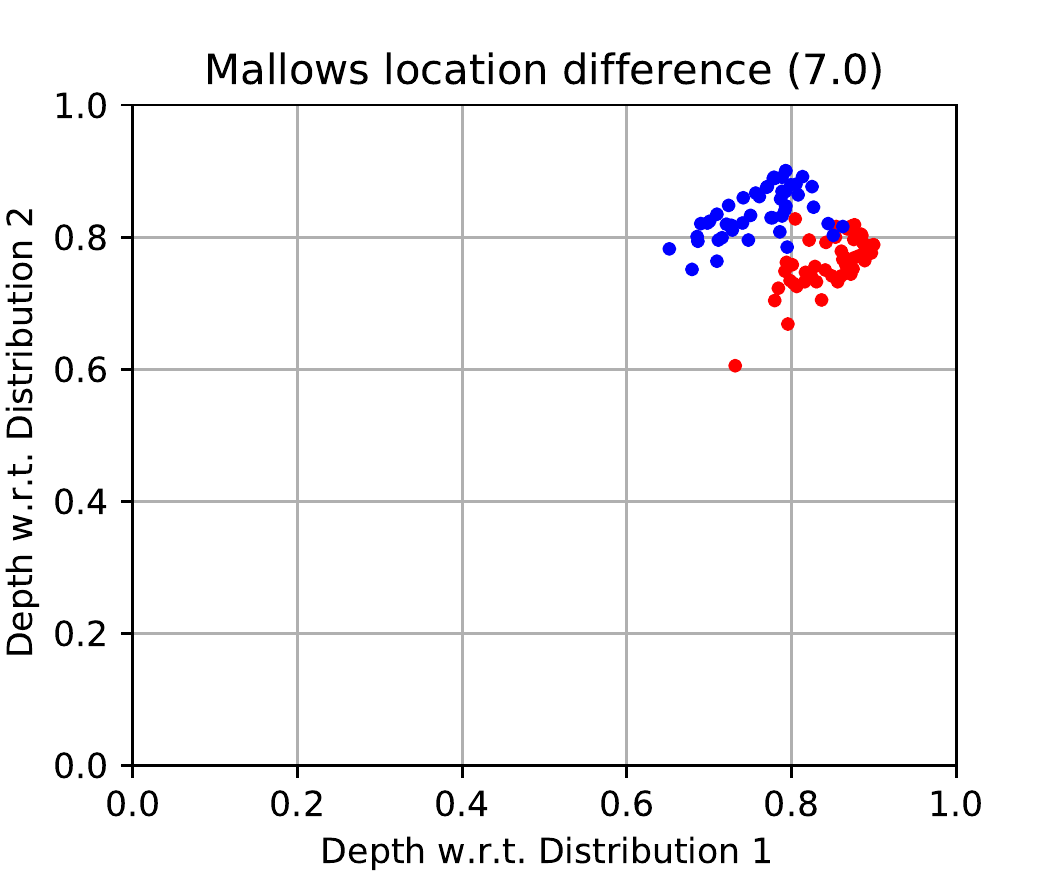} \\
			{\tiny \quad $d_\tau(\sigma_1^*,\sigma_2^*)=9$} & {\tiny \quad $d_\tau(\sigma_1^*,\sigma_2^*)=11$} & {\tiny \quad $d_\tau(\sigma_1^*,\sigma_2^*)=13$} & {\tiny \quad $d_\tau(\sigma_1^*,\sigma_2^*)=15$} & {\tiny \quad $d_\tau(\sigma_1^*,\sigma_2^*)=17$} \\
			\includegraphics[width=0.17\textwidth,trim = 0 0 8mm 9.25mm,clip=true]{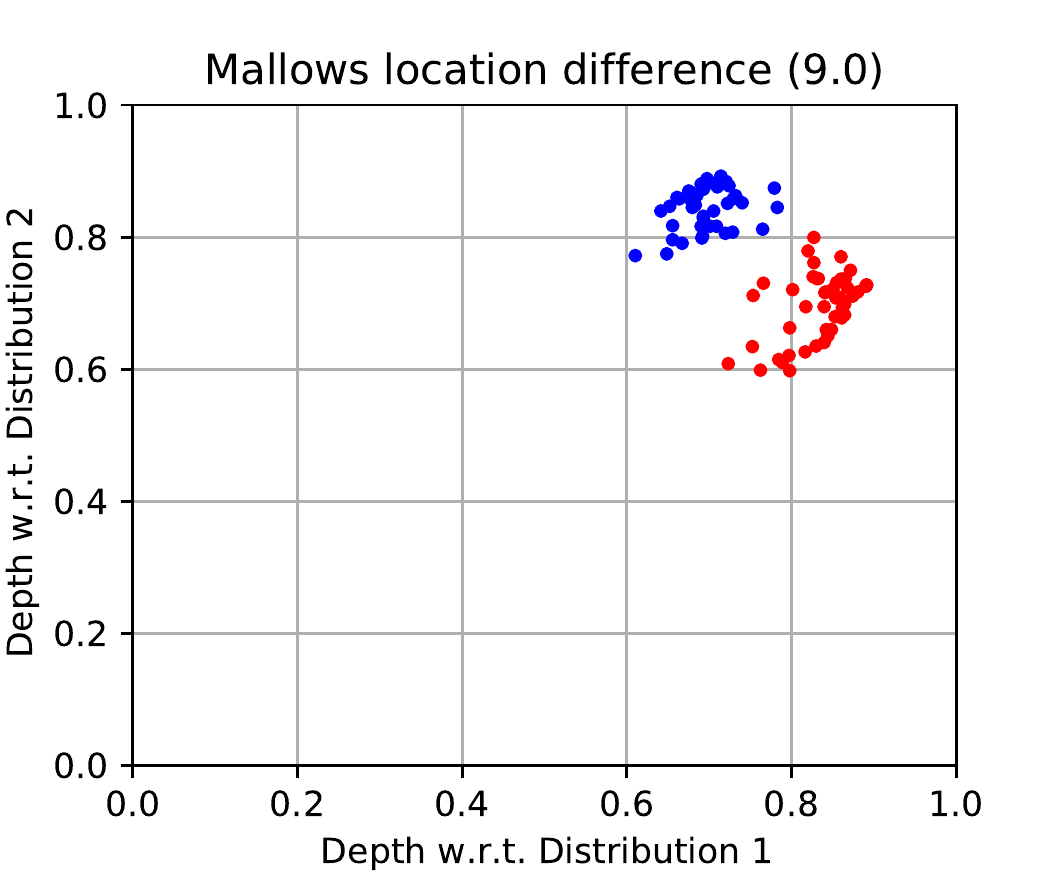} & \includegraphics[width=0.17\textwidth,trim = 0 0 8mm 9.25mm,clip=true]{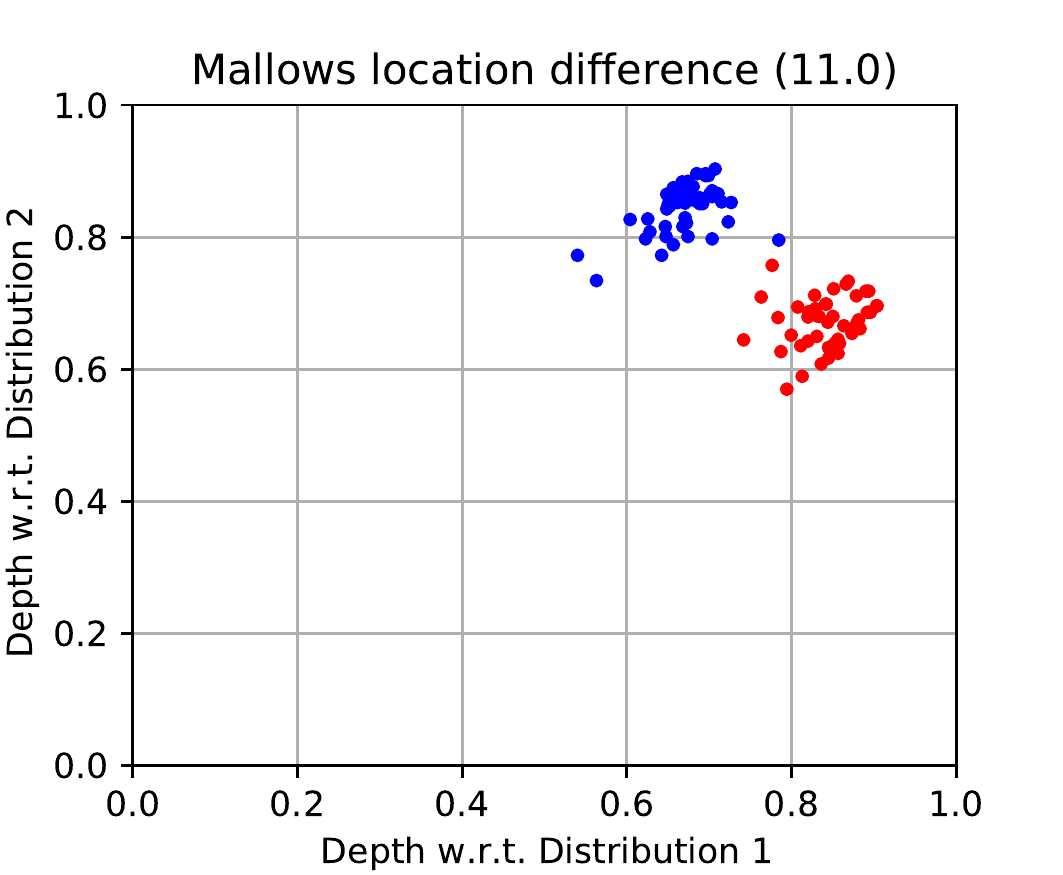} & \includegraphics[width=0.17\textwidth,trim = 0 0 8mm 9.25mm,clip=true]{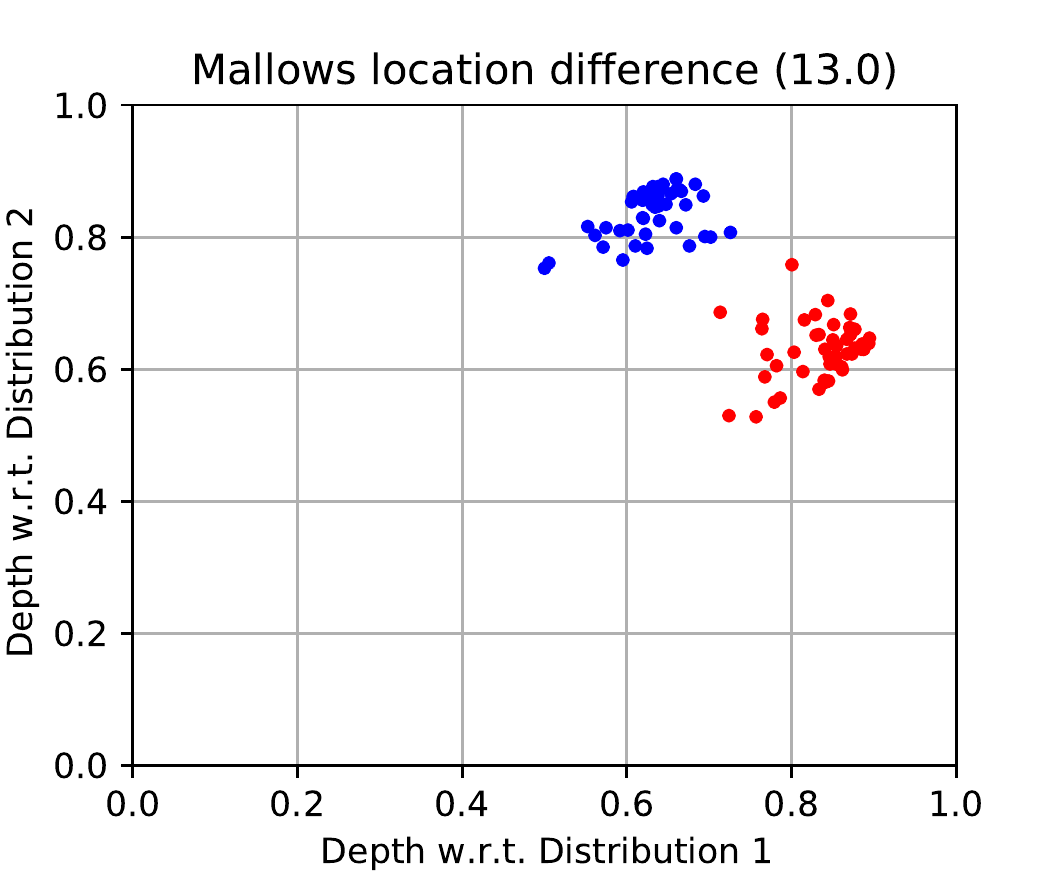} & \includegraphics[width=0.17\textwidth,trim = 0 0 8mm 9.25mm,clip=true]{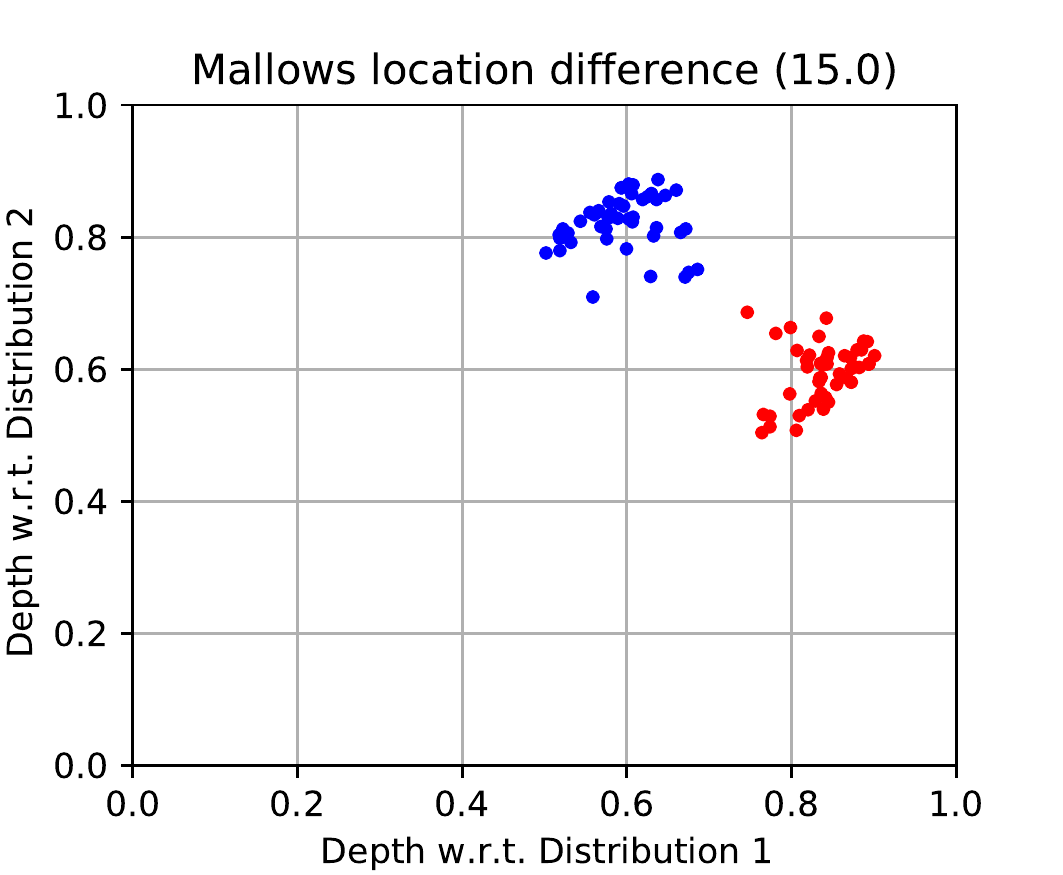} & \includegraphics[width=0.17\textwidth,trim = 0 0 8mm 9.25mm,clip=true]{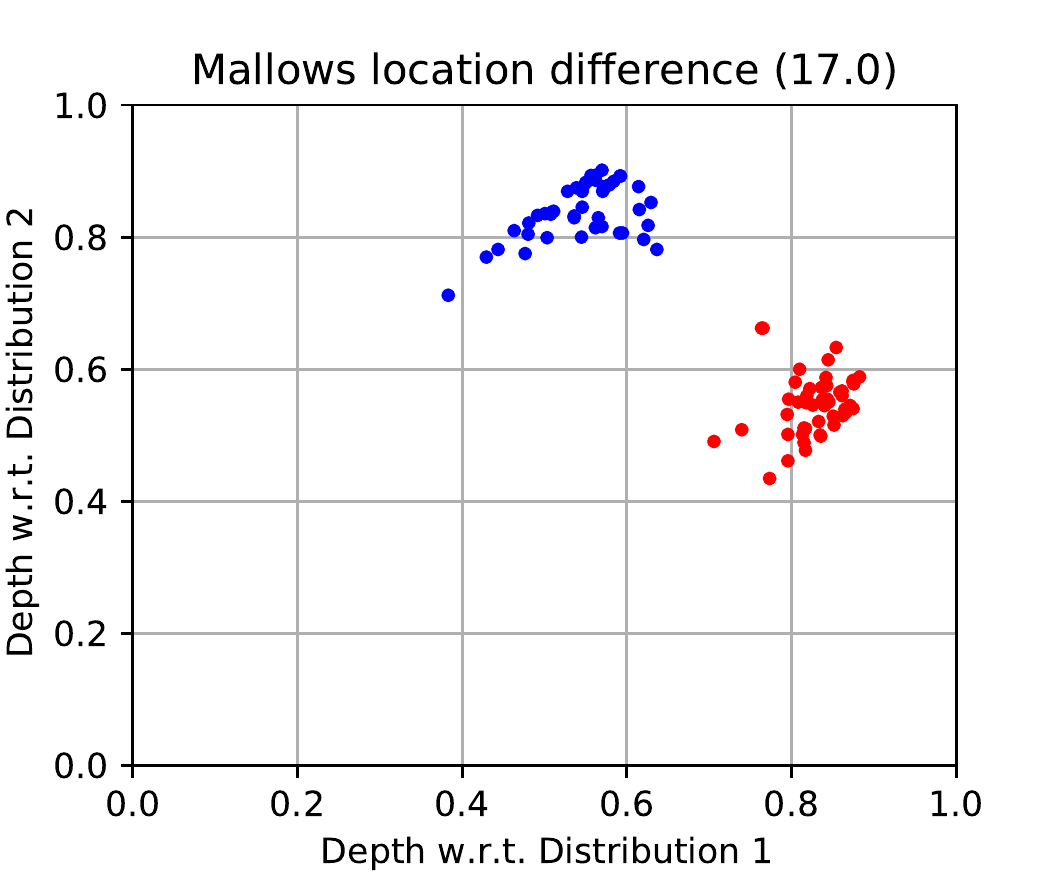} \\
		\end{tabular}
	\end{center}
	\caption{$DD$-plots for pairs of distributions stemming from different instances of the \emph{location-shift model}. The two distributions contain $50$ observations each, drawn from two Mallows models using Kendall $\tau$ distance with parameter $\phi_1=\phi_2=\mathbf{e}^{-1}$. Difference between locations is indicated in each individual plot.}\label{fig:ddplotsMallows1}
\end{figure}

\begin{figure}[!h]
	\begin{center}
		\includegraphics[width=0.65\textwidth,trim = 0 0 0 10mm,clip=true]{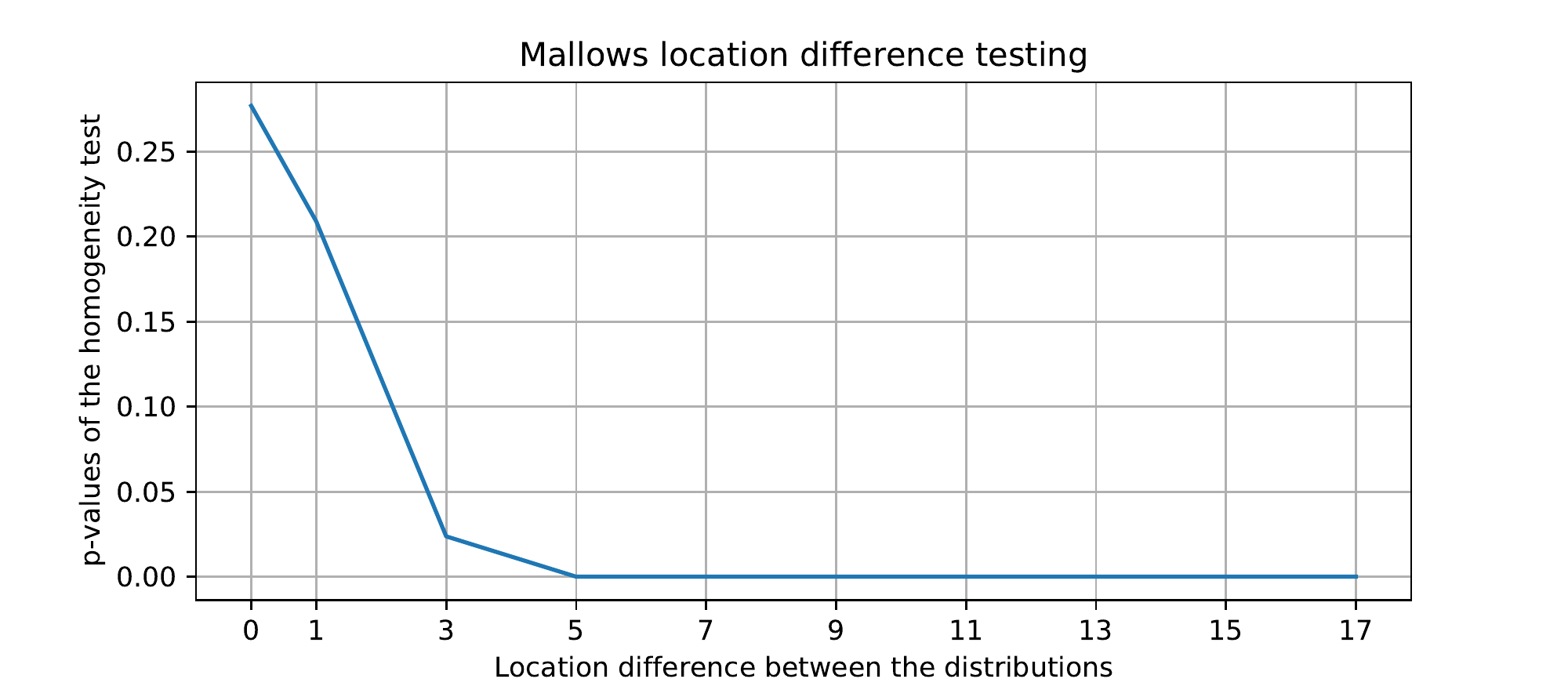}	
	\end{center}
	\caption{$p$-values (averaged over $100$ random repetitions) for the test of homogeneity for a pair of Mallows distributions stemming from the \emph{location-shift model} with location difference $d_\tau(\sigma^*_1,\sigma^*_2)\in\{0,...,17\}$.}\label{fig:pvalsMallows1}
\end{figure}

First, we consider a \emph{location-shift model}: a sequence of pairs of distributions with parameter pairs $(\sigma^*_1, \phi_1)$ and $(\sigma^*_2, \phi_2)$. Setting $\phi_1=\phi_2=\mathbf{e}^{-1}$, we vary $\sigma^*_2$ so that $d_\tau(\sigma^*_1,\sigma^*_2)\in\{0,1,3,5,7,9,11,15,17\}$. Figure~\ref{fig:ddplotsMallows1}, which contains visualization for each pair of parameters for $50$ observations from each distribution, illustrates gradual capturing by the suggested visualization of the increasing location shift between the two laws.

Bearing in mind the same idea, using the same distributional settings, we now provide a formal statistical inference by \emph{homogeneity testing}. More precisely, for each pair of distributions, taking one of them for a reference, we perform the testing procedure $100$ times and indicate average $p$-values in Figure~\ref{fig:pvalsMallows1} (where we stick to the same test setting as in the main text, drawing $500$ observations for the reference distribution and using the Wilcoxon rank-sum statistic). As expected, when there is no parameter difference, the null hypothesis of the distribution's equality cannot be surely rejected. When the difference in the location increases, it is captured very quickly by the testing procedure rejecting on the level $\le 0.05$ when $d_\tau(\sigma^*_1,\sigma^*_2)=3$ only, and with even higher reliability for larger differences.

\begin{figure}[!h]
	\begin{center}
		\begin{tabular}{ccccc}
			{\tiny $\log\phi_1-\log\phi_2=0$} & {\tiny $\log\phi_1-\log\phi_2=0.1$} & {\tiny $\log\phi_1-\log\phi_2=0.2$} & {\tiny $\log\phi_1-\log\phi_2=0.3$} & {\tiny $\log\phi_1-\log\phi_2=0.4$} \\
			\includegraphics[width=0.17\textwidth,trim = 0 0 8mm 9.25mm,clip=true]{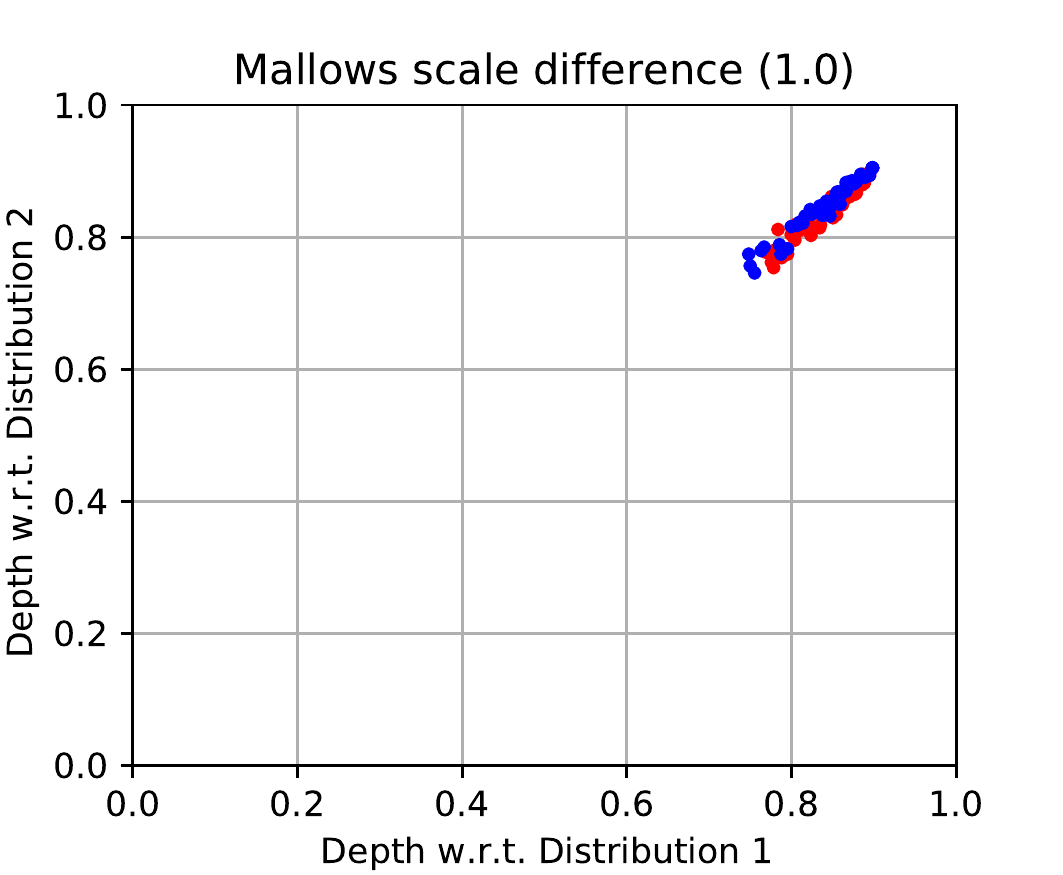} & \includegraphics[width=0.17\textwidth,trim = 0 0 8mm 9.25mm,clip=true]{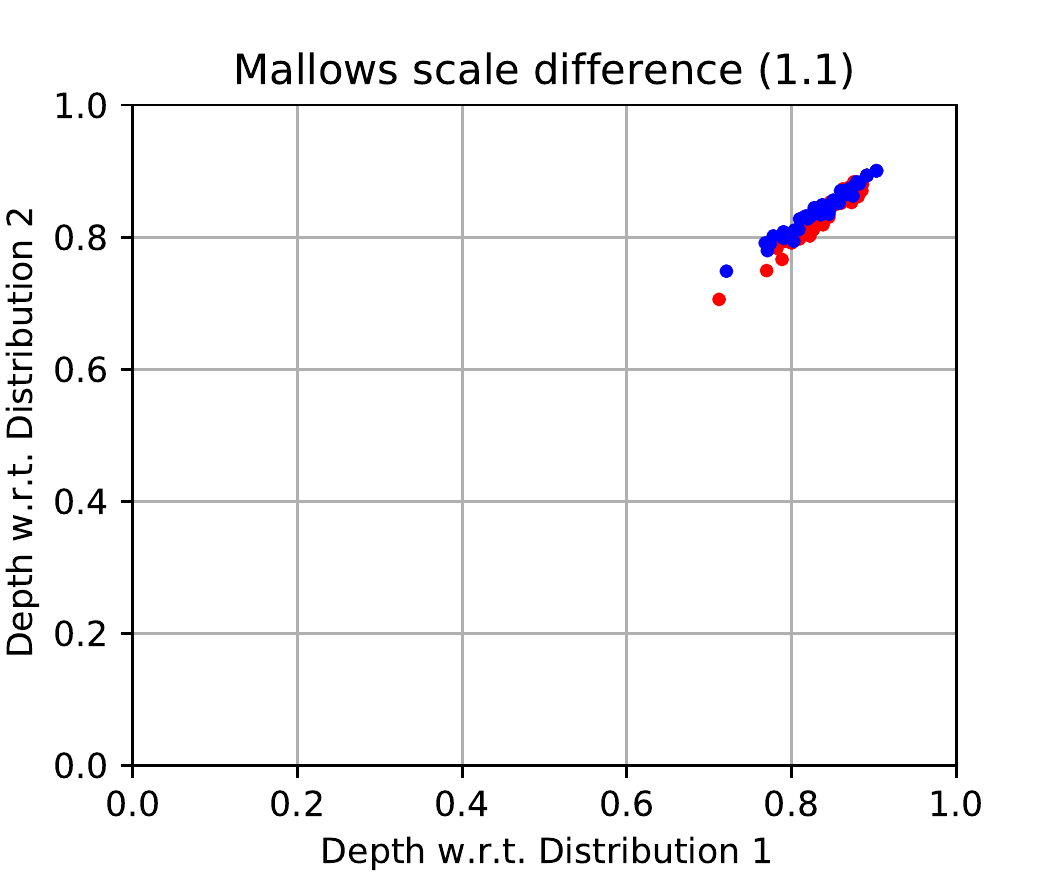} & \includegraphics[width=0.17\textwidth,trim = 0 0 8mm 9.25mm,clip=true]{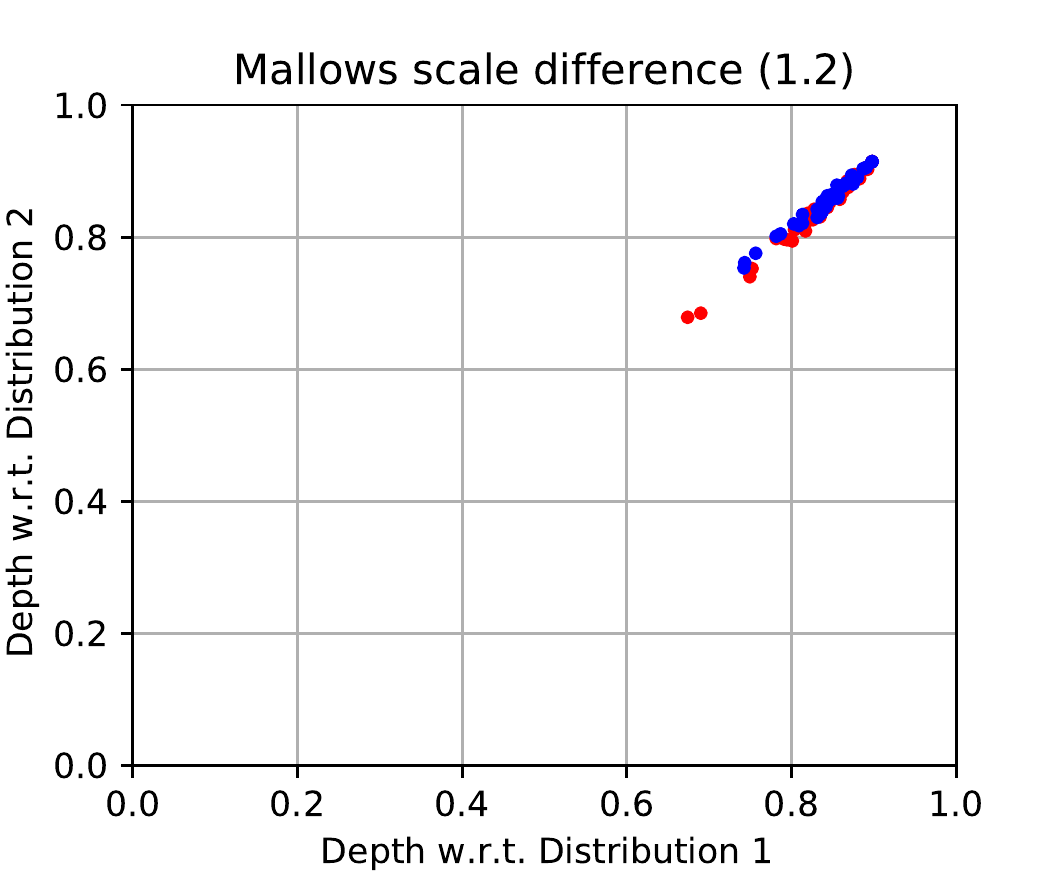} & \includegraphics[width=0.17\textwidth,trim = 0 0 8mm 9.25mm,clip=true]{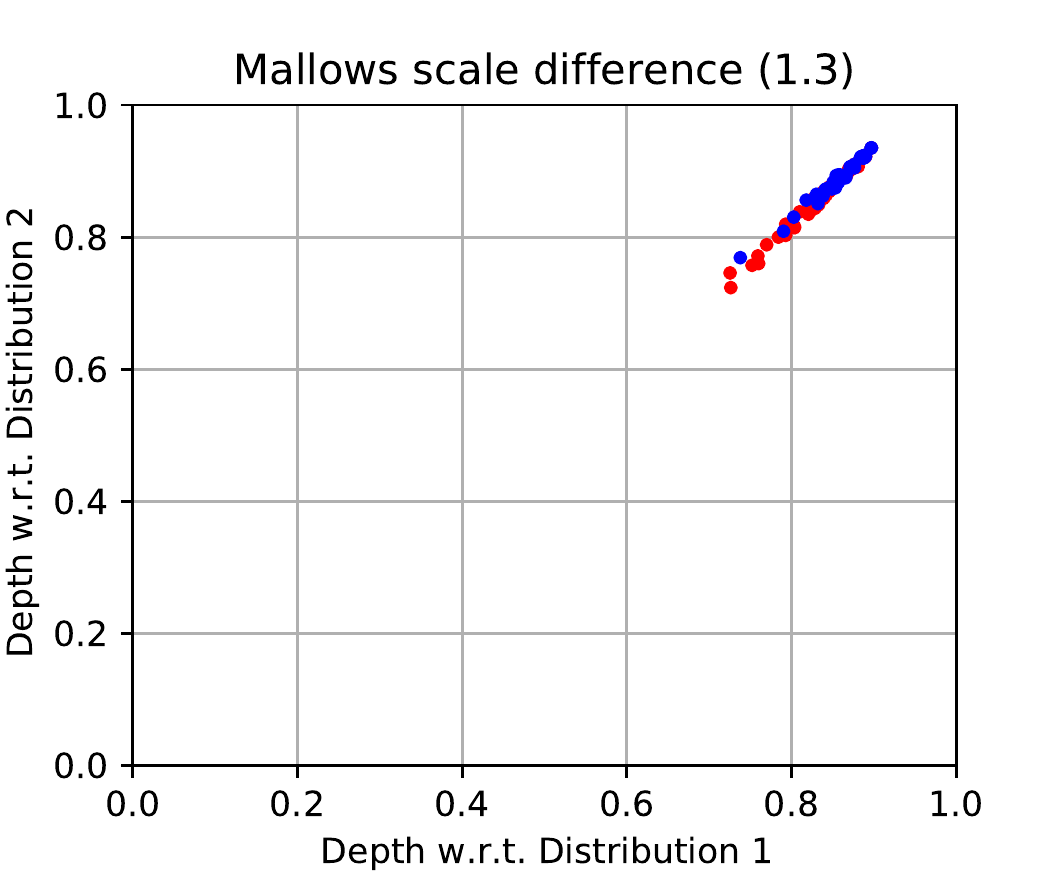} & \includegraphics[width=0.17\textwidth,trim = 0 0 8mm 9.25mm,clip=true]{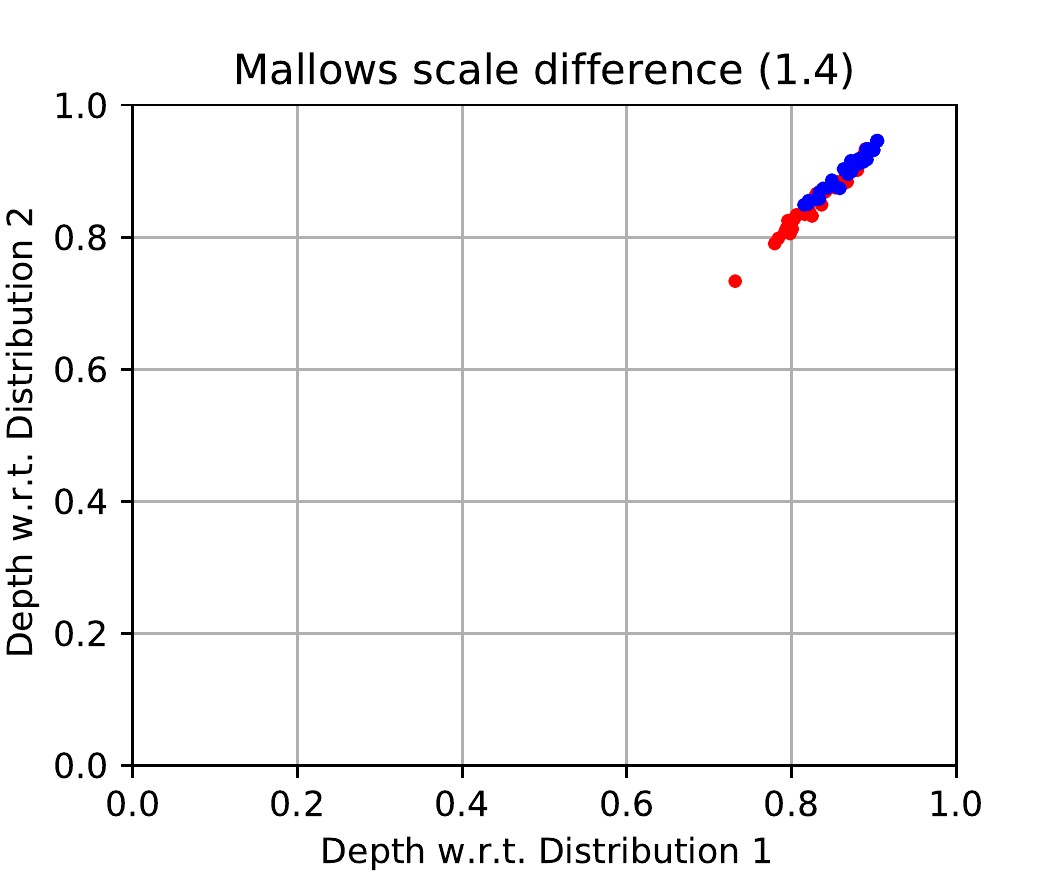} \\
			{\tiny $\log\phi_1-\log\phi_2=0.5$} & {\tiny $\log\phi_1-\log\phi_2=0.6$} & {\tiny $\log\phi_1-\log\phi_2=0.7$} & {\tiny $\log\phi_1-\log\phi_2=0.8$} & {\tiny $\log\phi_1-\log\phi_2=0.9$} \\
			\includegraphics[width=0.17\textwidth,trim = 0 0 8mm 9.25mm,clip=true]{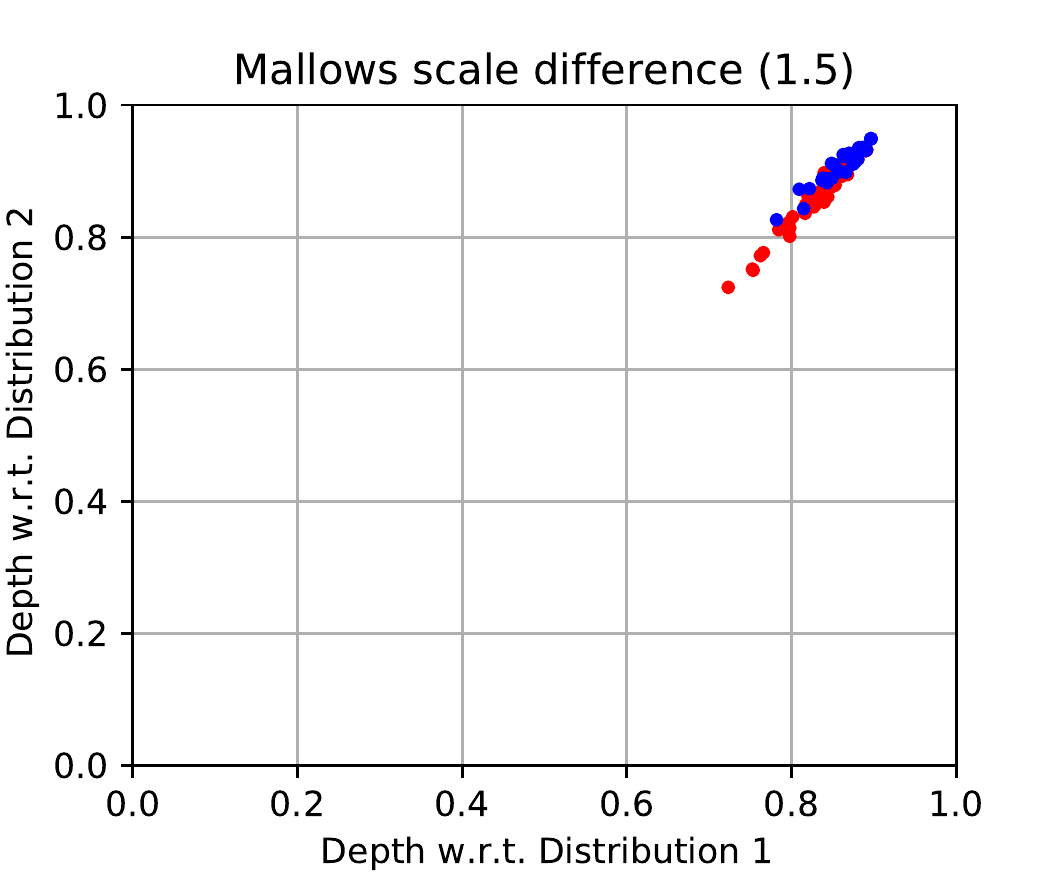} & \includegraphics[width=0.17\textwidth,trim = 0 0 8mm 9.25mm,clip=true]{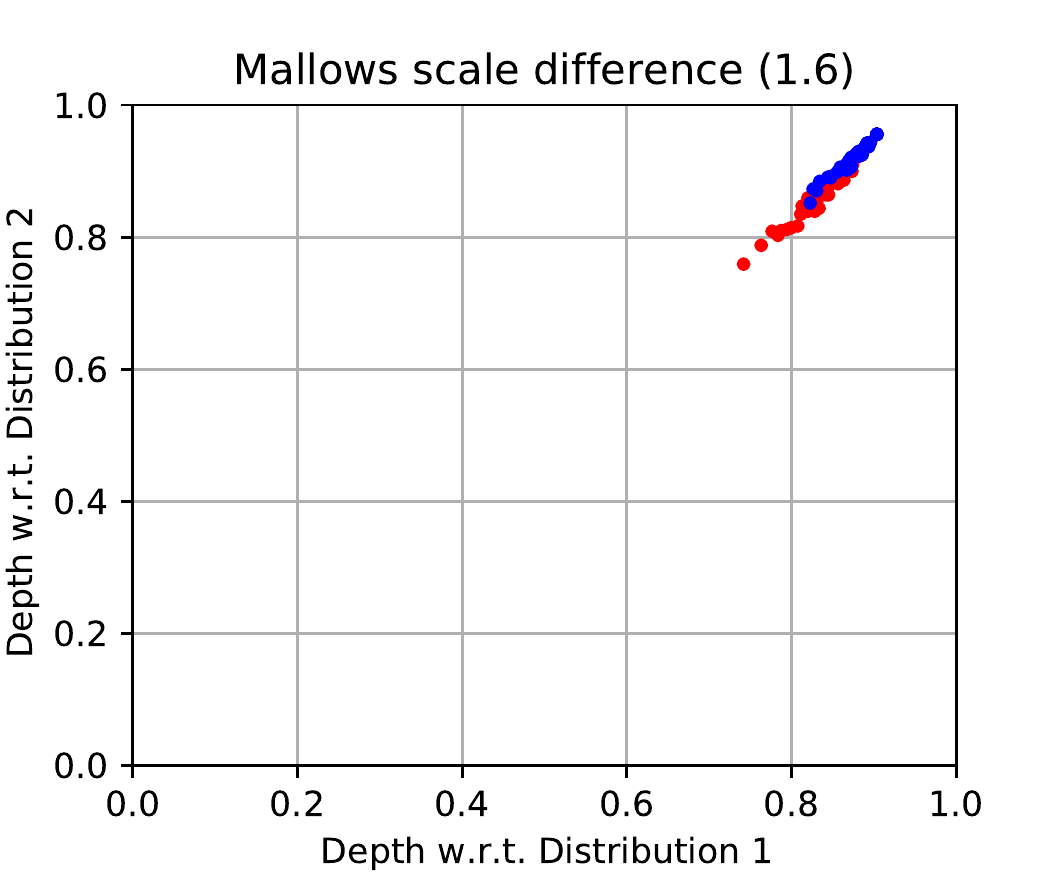} & \includegraphics[width=0.17\textwidth,trim = 0 0 8mm 9.25mm,clip=true]{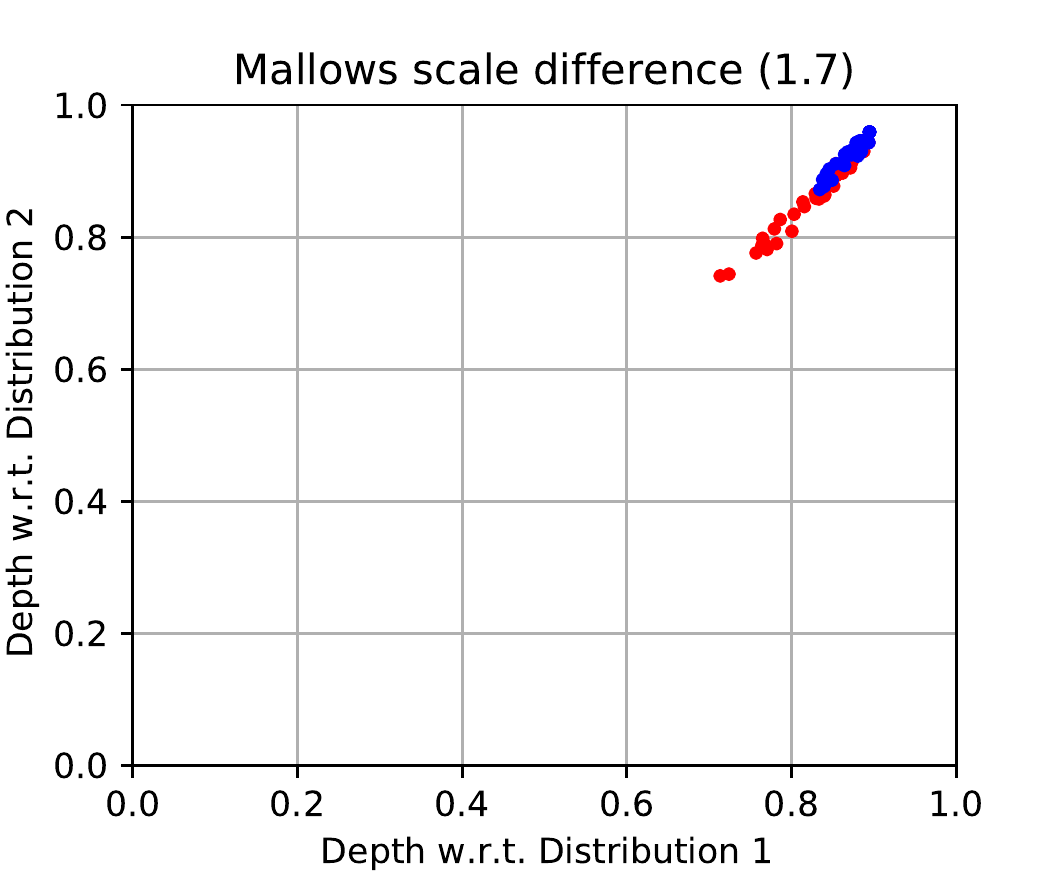} & \includegraphics[width=0.17\textwidth,trim = 0 0 8mm 9.25mm,clip=true]{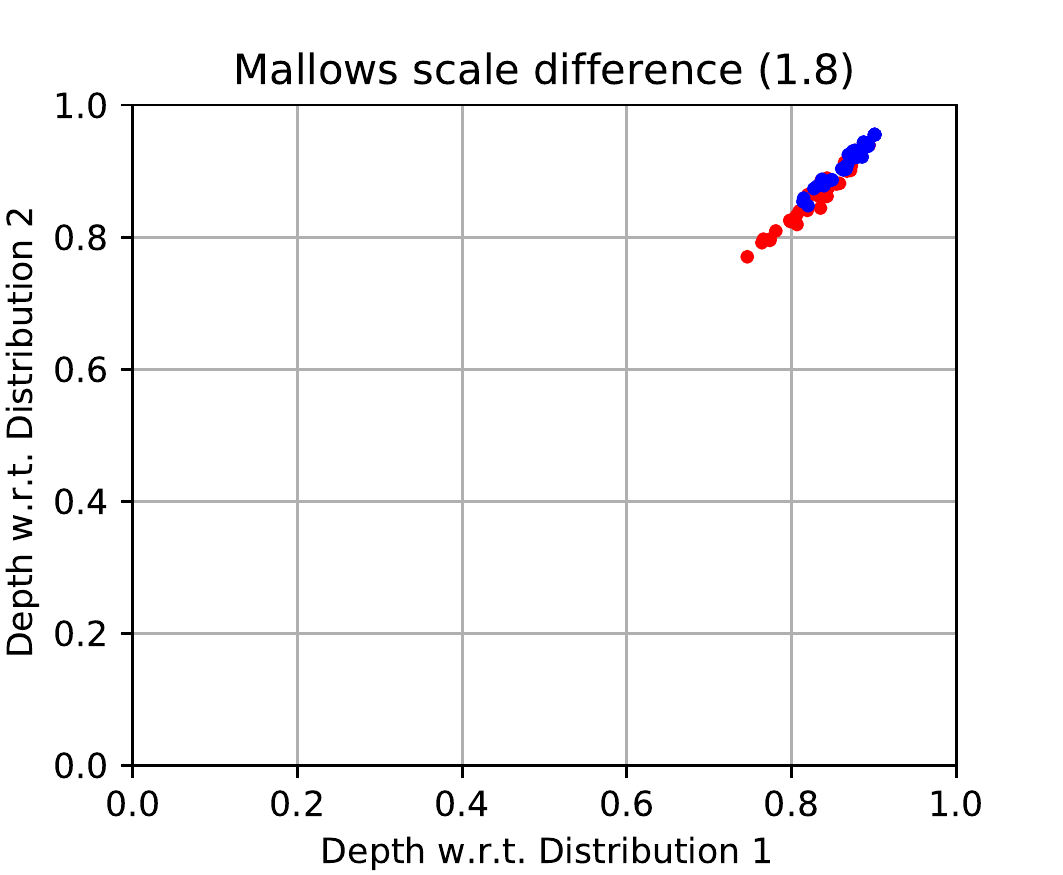} & \includegraphics[width=0.17\textwidth,trim = 0 0 8mm 9.25mm,clip=true]{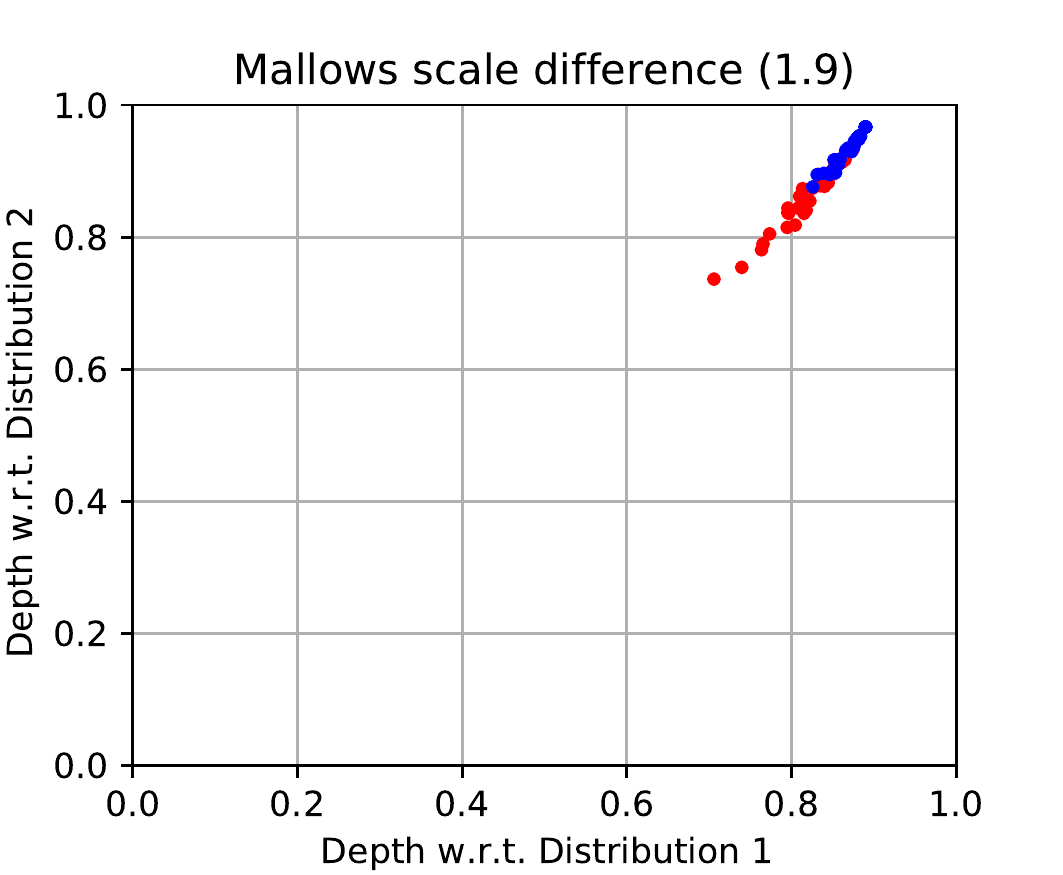} \\
		\end{tabular}
	\end{center}
	\caption{$DD$-plots for pairs of distributions stemming from different instances of the \emph{scale-difference model}. The two distributions contain $50$ observations each, drawn from two Mallows models (using Kendall $\tau$ distance) with the same center and with parameters $\phi_1=\mathbf{e}^{-1}$ and $\phi_2=\mathbf{e}^{\psi}$ where $\psi\in\{-1,-1.1,-1.2,-1.3,-1.4,-1.5,-1.6,-1.7,-1.8,-1.9\}$. Difference between scales' logarithms is indicated in each individual plot.}\label{fig:ddplotsMallows2}
\end{figure}

\begin{figure}[!h]
	\begin{center}
		\includegraphics[width=0.65\textwidth,trim = 0 0 0 10mm,clip=true]{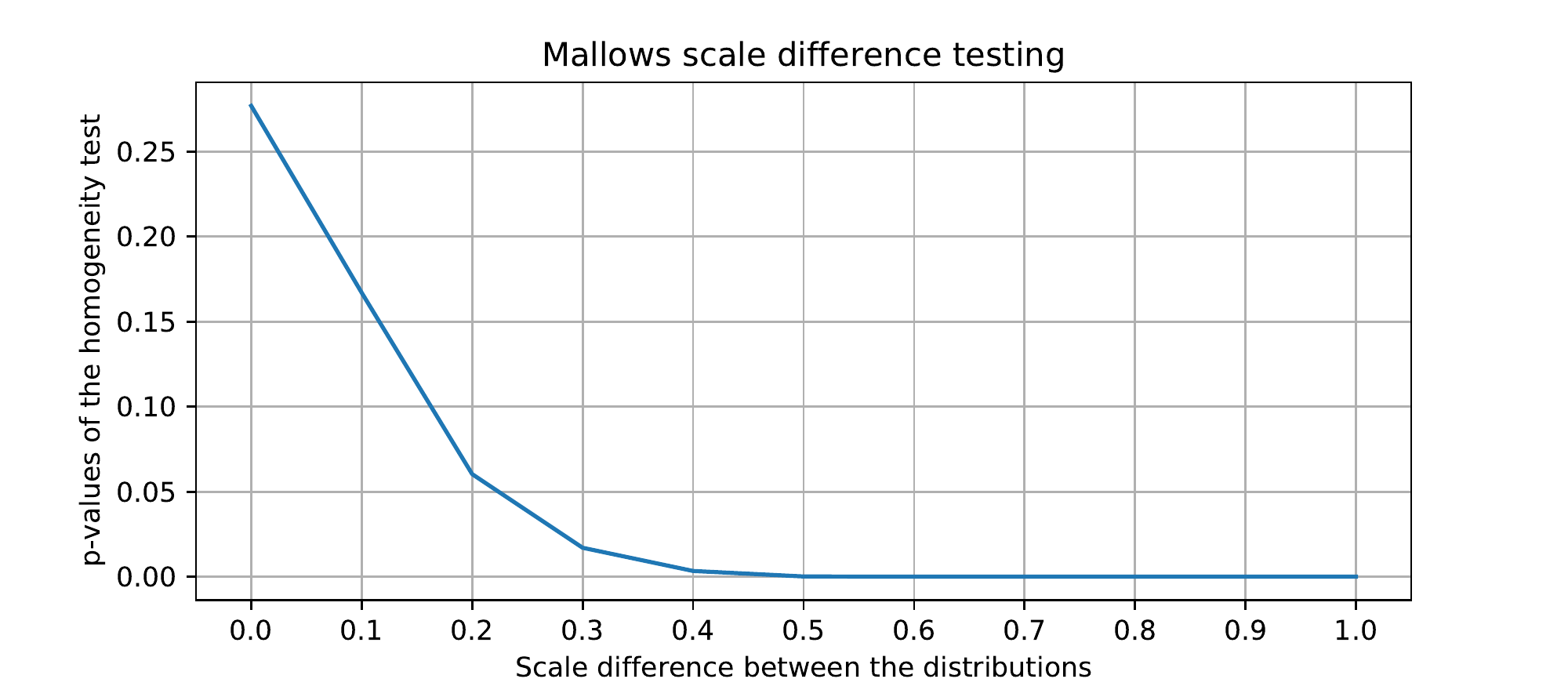}	
	\end{center}
	\caption{$p$-values (averaged over $100$ random repetitions) for the test of homogeneity for a pair of Mallows distributions stemming from the \emph{scale-difference model}, \textit{i.e.} with the same center and difference in scales $\log\phi_1-\log\phi_2\in\{0,...,1\}$.}\label{fig:pvalsMallows2}
\end{figure}

While being already challenging when having no (parametric) assumptions on the distribution, traditional rank-based homogeneity testing procedures usually assume location difference. Thus, we consider an even more disadvantageous setting, the \emph{scale-difference model}: the location of the both distributions is the same, and those differ in dispersion only. As above, the $DD$-plots for $10$ scale difference values (on the equidistant grid with step $0.1$ on the logarithmic scale) are presented in Figure~\ref{fig:ddplotsMallows2}. With increasing scale difference, visual patterns dis-associate (less than in the previous setting though) which intrigues the formal inference.

Finally, we repeat the previously used testing procedure and indicate the average $p$-values in Figure~\ref{fig:pvalsMallows2}. One observes that for difference in scale (measured on the logarithmic scale) equal to $0.3$ or higher, the homogeneity testing procedure distinguishes the distributions with level $\le 0.05$ or less.

\subsection{Application to real data}
\label{sec:real_data}
Let us now explore the applicability of our depth function to different tasks on real data. 

\subsubsection{Student dataset}
\label{sec:student_dataset}

Next, we consider a real data set which consists of students' rankings before ($N_1=169$ students) and after ($N_2=179$ students) the class, with known ground truth (correct answer = $(0,1,2,3)$) where $n=4$ (refer to \url{https://github.com/ekhiru/students-dataset} for details about the dataset). Simple computation indicates that $d_{\tau}$ from the average ranking to the true one is $2$ before the class and $1$ after, thus suggesting that the students improved after studying. We employ the same homogeneity testing methodology as above to derive formal statistical inference. By taking $100$ randomly chosen observations from the 'before class' cohort as the reference, we use $69$ observations from the 'before class' cohort and $79$ (also randomly chosen) from the 'after class' group. The diagnostic $DD$-plot of the two cohorts together with $p$-values over $1000$ random repetitions and the asymptotic density under $H_0$ are indicated in Figure~\ref{fig:testrealdata2}, and illustrate improvement of the students' knowledge after the class.

\begin{figure}[h!]
	\begin{center}
	\begin{tabular}{cc}
		\includegraphics[width=0.33\textwidth,trim=2.5mm 0 10mm 8mm,clip=true]{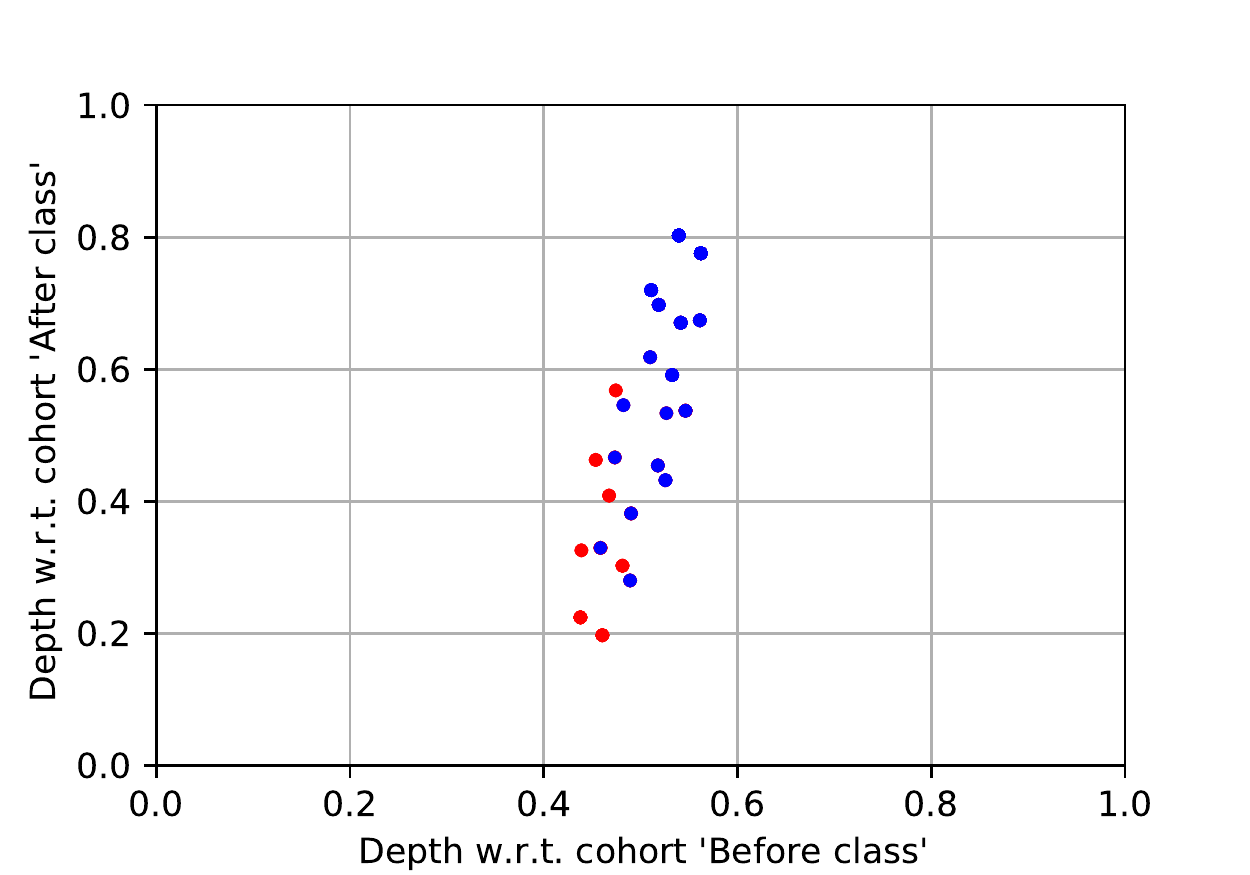} & \includegraphics[width=0.635\textwidth,trim=15mm 0 18mm 8mm,clip=true]{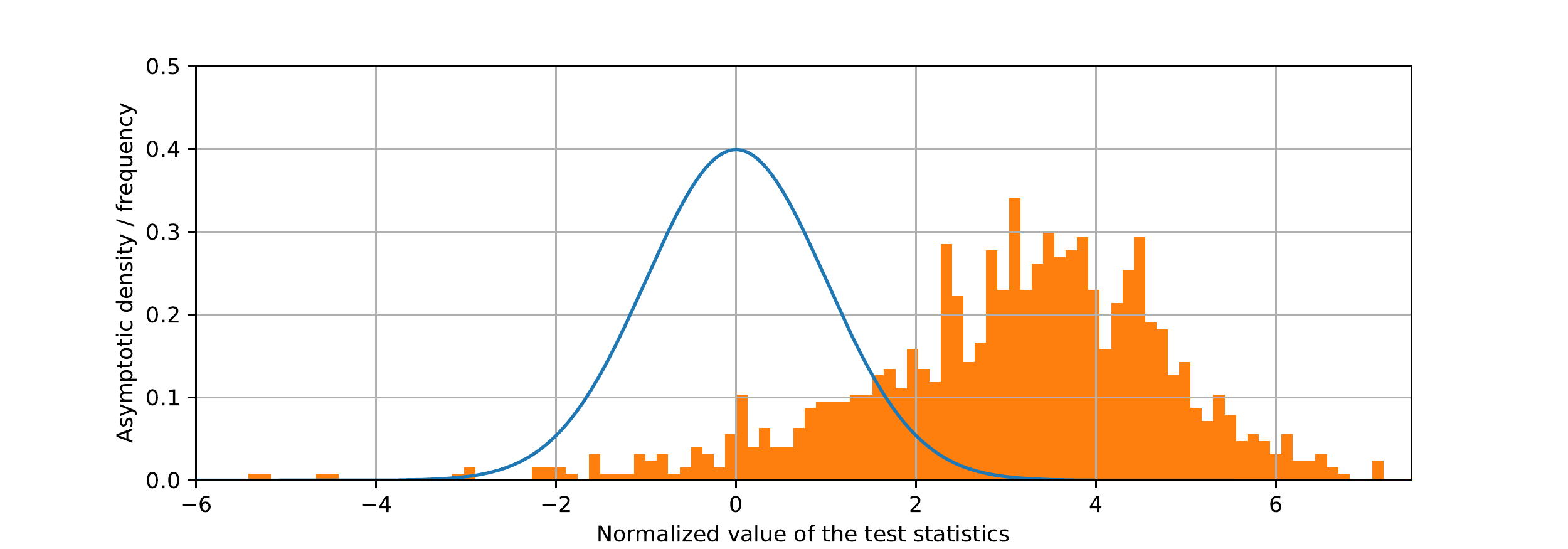}
	\end{tabular}
	\end{center}
	\caption{Left: $DD$-plot for the two cohorts of students, 'before class' (red) and 'after class' (blue), respectively. Right: $p$-values of the homogeneity test over $1000$ random repetitions together with asymptotic density under $H_0$.}
	\label{fig:testrealdata2}
\end{figure}

\subsubsection{Sushi dataset}
\label{suppl:sushi}

\begin{figure}[!h]
	\begin{center}
		\includegraphics[width=0.475\textwidth]{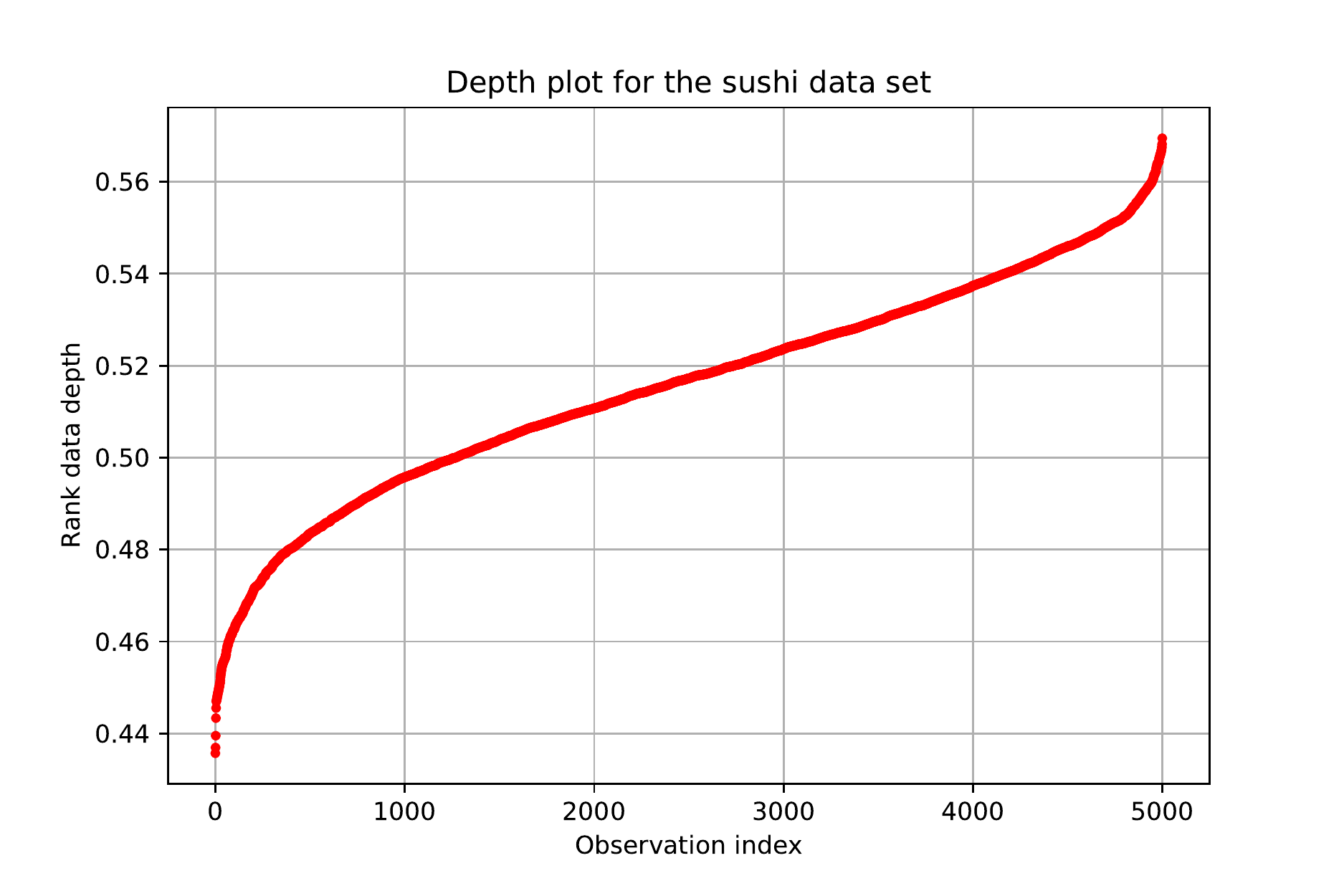} \quad \includegraphics[width=0.475\textwidth]{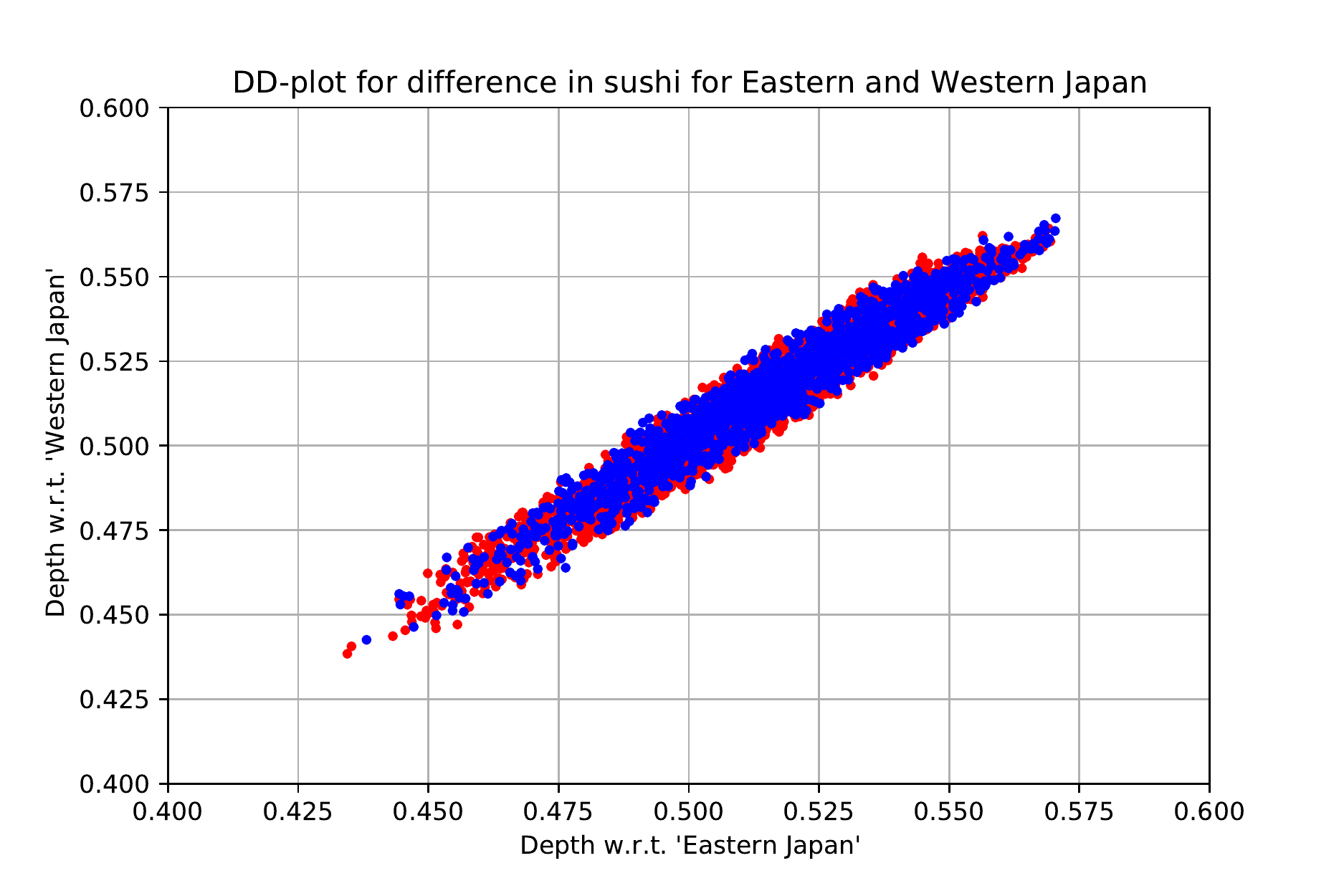}
	\end{center}
	\caption{Exploratory statistics of the Sushi data set by~\cite{Kamishima03} (\url{https://www.kamishima.net/sushi/}). Left: The depth of each observation in the set, in increasing depth order. Right: The comparative $DD$-plot for Eastern (red) and Western (blue) Japan.}\label{fig:sushi}
\end{figure}

As a last application, we analyze the \emph{Sushi} data set, which contains $5\,000$ rankings of $10$ sushi items. We refer the reader to~\cite{Kamishima03} (\url{https://www.kamishima.net/sushi/}) for the detailed description of the data set. By means of the introduced depth notion, we explore this data set from two angles. First, we provide depth based ranking of the entire data set, which can be seen as the ranking equivalent of the cumulative distribution function. The depth of each of the $5\,000$ observations (ordered increasingly) is indicated in Figure~\ref{fig:sushi} (left). Second, in view of the ten considered items, we check the know difference between food preferences in Eastern and Western Japan. The $DD$-plot of these two groups (containing $3\,448$ and $1\,552$ observations each, respectively) is presented in Figure~\ref{fig:sushi} (right). Since the two clouds of points substantially intersect, this rather drives to conclusion that the mentioned above difference is not connected with the choice of the sushi items used in the data set.

\subsubsection{Mechanical Turk Dots dataset}
\label{sec:mecha_turk_dots}

The Mechanical Turk Dots dataset contains 800 full rankings of 4 items. Each item corresponds to random dots presented to a user on Mechanical Turk, who is asked to rank them from those containing the least dots (first) to those containing the most dots (last). Thus, there is a ground truth ranking for this dataset. 40 sets of puzzles were placed on Mechanical Turk and were ranked by 20 users, leading to 800 rankings.

This dataset is SST and the deepest ranking corresponds to the ground truth. We thus contaminate the dataset by swapping a random proportion of 1/4 of the rankings, i.e. by taking the opposite ranking. Figure \ref{fig:mecha_turk_plot2} (a) shows that there is no obvious difference between the swapped and clean rankings, but in figure \ref{fig:mecha_turk_plot2} (b), we see we recovered the ground truth ranking after the trimming strategy.

\begin{figure}[!h]
    \begin{center}
    \begin{tabular}{cc}
        (a) & (b) \\
        \includegraphics[scale=0.45,clip=true,page=1]{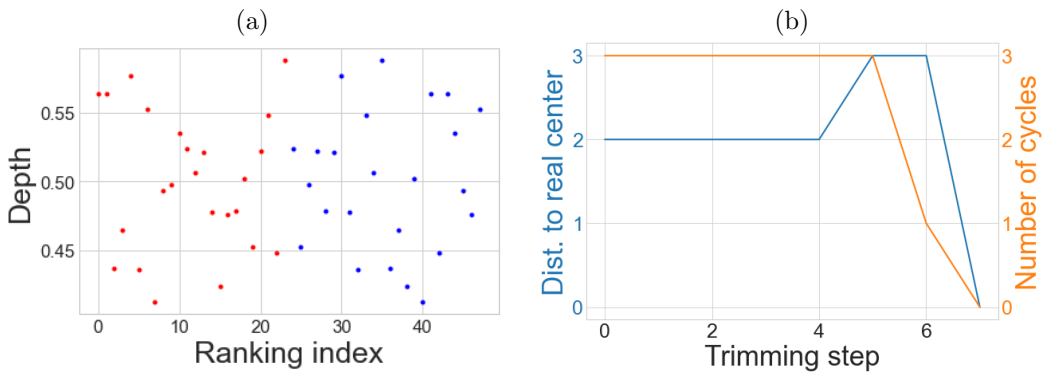} & \includegraphics[scale=0.3,clip=true,page=1]{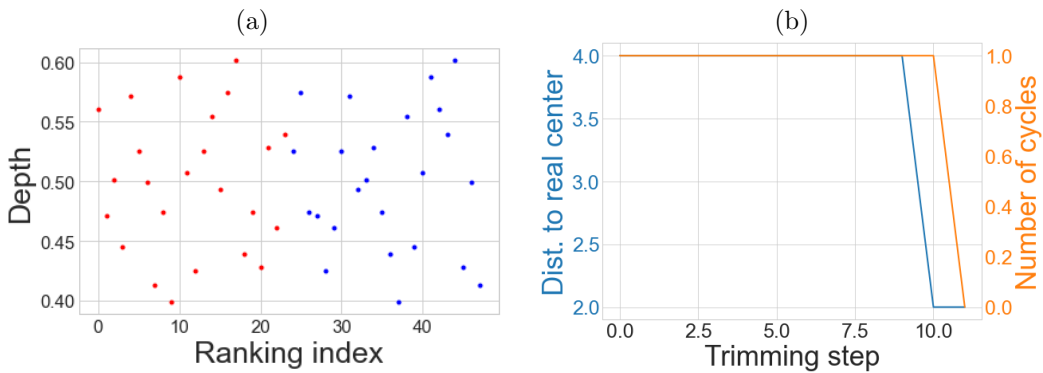}
    \end{tabular}
    \caption{Depth plots before trimming with swapped (red) and clean (blue) points; evolution of candidate median (deepest ranking) distance to real median and number of cycles through trimming (c)}
    \label{fig:mecha_turk_plot2}
    \end{center}
\end{figure}

\subsubsection{Netflix Prize dataset}
\label{sec:netflix_dataset}

We selected one of the Netflix Prize dataset contains 1814 full rankings of 4 movies (Dirty Dancing, Maid in Manhattan, Shrek and Father of the Bride). This dataset is SST and the deepest ranking corresponds to Shrek $\succ$ Father of the Bride $\succ$ Maid in Manhattan $\succ$ Dirty Dancing, considered as the real center of the dataset. We contaminated the dataset by swapping a random proportion of 11\% of the rankings, i.e. by taking the opposite ranking. Figure \ref{fig:netflix_plot} (a) shows that there is no obvious difference between the swapped and clean rankings, but in figure \ref{fig:netflix_plot} (b), we the median computed after trimming is closer to the real center than before.

\begin{figure}[!h]
    \begin{center}
    \begin{tabular}{cc}
        (a) & (b) \\
        \includegraphics[scale=0.45,clip=true,page=1]{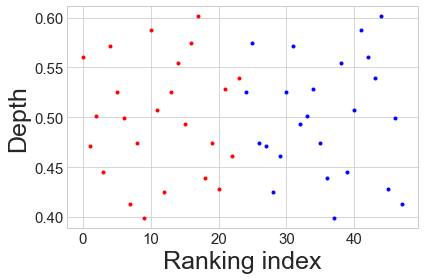} & \includegraphics[scale=0.3,clip=true,page=1]{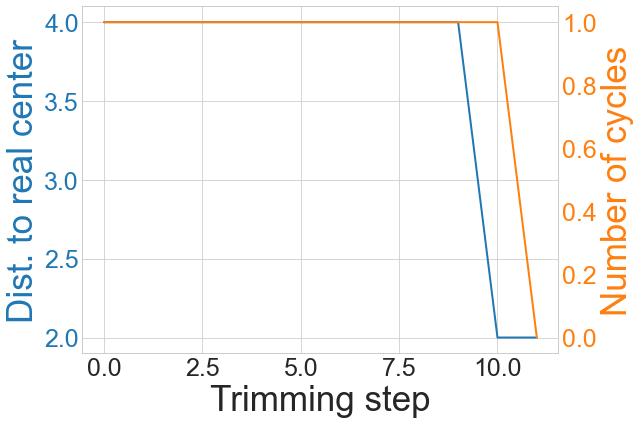}
    \end{tabular}
    \caption{Depth plots before trimming with swapped (red) and clean (blue) points; evolution of candidate median (deepest ranking) distance to real median and number of cycles through trimming (c)}
    \label{fig:netflix_plot}
    \end{center}
\end{figure}

\vfill
\end{document}